\newtheorem{assumption}{Assumption}
\newcommand{\e}{\mathrm{e}}
\newcommand{\R}{\mathbb{R}}
\newcommand{\goto}{\rightarrow}
\renewcommand{\d}{{\mathrm{d}}}
\newcommand{\N}{\mathcal{N}}
\renewcommand{\P}{\operatorname{\mathbb{P}}}
\newcommand{\E}{\operatorname{\mathbb{E}}}
\def\iid{\textrm{i.i.d.}\xspace}
\DeclareMathOperator{\se}{SE}
\newcommand{\bs}{{\bm s}}
\newcommand{\w}{{\bm w}}
\newcommand{\bb}{\bm{s}}
\newcommand{\bt}{\bm{t}}
\newcommand{\ind}{{\bf 1}}
\newcommand{\F}{\mathcal{F}}
\newcommand{\A}{\mathcal{A}}
\newcommand{\bM}{\bm{M}}
\newcommand{\bz}{\bm{z}}
\renewcommand{\emptyset}{\varnothing}
\newcommand{\setempty}{\emptyset}
\newenvironment{customassm}[1]
  {\innercustomassm}
  {\endinnercustomassm}
\begin{document}

\title{HiGrad: Uncertainty Quantification for Online Learning and Stochastic Approximation}


\author{\name Weijie J.~Su \email suw@wharton.upenn.edu \\
       \addr  University of Pennsylvania, USA
       \AND
       \name Yuancheng Zhu \email yuancheng.zhu@gmail.com \\
       \addr Renaissance Technologies LLC, USA
}
\editor{Qiang Liu}
\maketitle

\begin{abstract}

Stochastic gradient descent (SGD) is an immensely popular approach for online learning in settings where data arrives in a stream or data sizes are very large. However, despite an ever-increasing volume of work on SGD, much less is known about the statistical inferential properties of SGD-based predictions. Taking a fully inferential viewpoint, this paper introduces a novel procedure termed HiGrad to conduct statistical inference for online learning, without incurring additional computational cost compared with SGD. The HiGrad procedure begins by performing SGD updates for a while and then splits the single thread into several threads, and this procedure hierarchically operates in this fashion along each thread. With predictions provided by multiple threads in place, a $t$-based
confidence interval is constructed by decorrelating predictions using covariance structures given by a Donsker-style extension of the Ruppert--Polyak averaging scheme, which is a technical contribution of independent interest. Under certain regularity conditions, the HiGrad confidence interval is shown to attain asymptotically exact coverage probability. Finally, the performance of HiGrad is evaluated through extensive simulation studies and a real data example. An R package \texttt{higrad} has been developed to implement the method.

\end{abstract}

\begin{keywords}
HiGrad, stochastic gradient descent, online learning, stochastic approximation, Ruppert--Polyak averaging, uncertainty quantification, $t$-confidence interval
\end{keywords}

\section{Introduction}
\label{sec:introduction}

In recent years, scientific discoveries and engineering advancements have been increasingly driven by data analysis. Meanwhile, modern datasets exhibit new features that impose two challenges to conventional statistical approaches. First, as datasets grow exceedingly large, many basic statistical tasks such as maximum likelihood estimation (MLE) may become computationally infeasible. The other common feature is that data are frequently collected in an online fashion or computers do not have enough memory to load the entire dataset. As a consequence, we are often constrained from using batch learning methods such as gradient descent.

In this context, stochastic gradient descent (SGD), also known as incremental gradient descent, has been shown to resolve these two issues for online learning. SGD is used to find a minimizer of the optimization problem
\[
\min_{\theta} \ f(\theta) := \mathbb E f(\theta,Z),
\]
where the expectation is over the randomness embodied in the random data $Z$. Letting $N$ be the sample size, this method in its simplest form performs iterations according to
\begin{equation}\label{eq:sgd}
\theta_j = \theta_{j-1} - \gamma_j g(\theta_{j-1}, Z_j)
\end{equation}
for $j = 1, \ldots, N$, where $\gamma_j$'s are the step sizes, each $Z_j$ is a realization of $Z$, and $g$ is the gradient of $f(\theta, z)$ with respect to the first argument. These types of optimization problems appear ubiquitously in MLEs and, more broadly, in $M$-estimation \citep{huber1964robust}. As is clear, SGD makes only one pass over the data, thereby having a much lower computational cost than batch methods such as the Newton--Raphson method and gradient descent. These batch methods need to pass over the entire dataset in one iteration. Furthermore, SGD can discard data points on-the-fly after evaluating the gradient and, put slightly differently, SGD is online in nature, requiring essentially no memory cost. In addition to its computational efficiency and low memory cost, SGD achieves optimal convergence
rates under certain conditions \citep{nemirovskii1983problem,agarwal2010information,bach2013non}. Among others, these advantages have contributed to the immense popularity of SGD in large-scale machine learning problems \citep{zhang2004solving,duchi2011adaptive,lecun2015deep}.

These appealing features of SGD, however, are accompanied by the cost of having random solutions; as such, decision making based on SGD predictions might suffer from uncertainty. The randomness originates either from the stochasticity of data points in the online setting or from the random sampling scheme of SGD in the case of fixed datasets where multiple epochs are executed\footnote{If the fixed dataset is treated as a finite population, these two types of randomness are equivalent.}. This randomness is potentially non-negligible and could even jeopardize the interpretation of predictions at worst. To illustrate this, we apply SGD to the \textit{Adult} dataset hosted on the UCI Machine Learning Repository \citep{lichman2013machine} as an example. The dataset contains demographic
information of part of the 1994 US Census Database, and the goal is to predict whether a person's annual income exceeds \$50,000. To fit a logistic regression on the dataset, we run SGD for 25 epochs (approximately 750,000 steps of SGD updates), and use the estimated model to predict the probabilities for a randomly selected test set containing 1,000 sample units. The procedure above is repeated for a total of 500 times, and Figure~\ref{fig:adult_intro} plots the length of the 90\%-coverage empirical prediction interval (an interval covering 450 predicted probabilities) against the average predicted probability for each sample unit, showing the variability of SGD-predicted probabilities. Even with a relatively large number of passes through the training dataset, there are some test sample units with a large variability near 50\%. This is the regime where variability must be addressed since the decision based on predictions can be easily reversed.

\begin{figure}[!htp]
\centering
\input{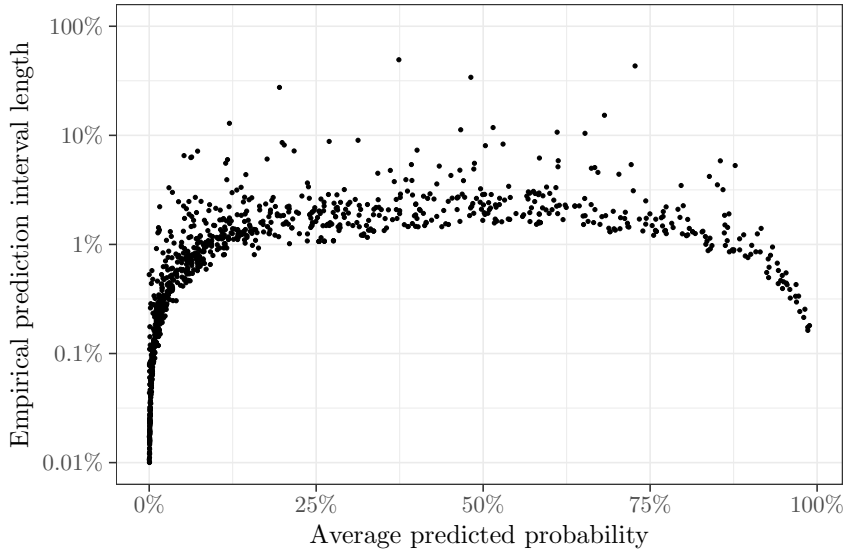}
\caption{Length of 90\% empirical prediction intervals versus average predicted probabilities on a test set of size 1,000 from the Adult dataset, calculated based on 500 independent SGD runs, each with 25 epochs.}
\label{fig:adult_intro}
\end{figure}

This paper aims to assess the uncertainty in SGD estimates via confidence intervals. Using the off-the-shelf bootstrap for this purpose is infeasible due to its prohibitively high computational cost and unsuitability for streaming data. In response, we propose a new method called HiGrad, short for Hierarchical Incremental GRAdient Descent, which estimates model parameters in an online fashion, just like SGD, and provides a confidence interval for the true population value. While SGD updates the iterate along \textit{one} direction, HiGrad updates its iterates using gradients along \textit{multiple} directions that are specified by a tree structure. An example of HiGrad is shown in Figure \ref{fig:higrad_intro}.


\begin{figure}[!htp]
\begin{center}
\begin{tikzpicture}[x=1pt,y=1pt]
\definecolor{fillColor}{RGB}{255,255,255}
\path[use as bounding box,fill=fillColor,fill opacity=0.00] (0,0) rectangle (289.08,108.41);
\begin{scope}
\path[clip] (  0.00,  0.00) rectangle (289.08,108.41);
\definecolor{drawColor}{RGB}{0,0,0}

\path[draw=drawColor,line width= 1.2pt,line join=round,line cap=round] ( 10.71, 54.20) -- ( 93.07, 54.20);
\definecolor{fillColor}{RGB}{0,0,0}

\path[draw=drawColor,line width= 1.2pt,line join=round,line cap=round,fill=fillColor] ( 85.74, 56.22) --
	( 87.31, 55.36) --
	( 89.10, 54.73) --
	( 91.05, 54.33) --
	( 93.07, 54.20) --
	( 93.07, 54.20) --
	( 91.05, 54.07) --
	( 89.10, 53.68) --
	( 87.31, 53.04) --
	( 85.74, 52.19) --
	( 85.74, 52.19) --
	( 85.91, 52.24) --
	( 86.07, 52.39) --
	( 86.22, 52.63) --
	( 86.34, 52.95) --
	( 86.43, 53.33) --
	( 86.48, 53.75) --
	( 86.50, 54.20) --
	( 86.48, 54.65) --
	( 86.43, 55.08) --
	( 86.34, 55.46) --
	( 86.22, 55.78) --
	( 86.07, 56.02) --
	( 85.91, 56.16) --
	( 85.74, 56.22) --
	cycle;

\path[draw=drawColor,line width= 0.4pt,dash pattern=on 1pt off 3pt ,line join=round,line cap=round] ( 93.07, 54.20) -- (106.79, 81.05);

\path[draw=drawColor,line width= 0.4pt,dash pattern=on 1pt off 3pt ,line join=round,line cap=round] ( 93.07, 54.20) -- (106.79, 27.36);

\path[draw=drawColor,line width= 1.2pt,line join=round,line cap=round] (106.79, 81.05) -- (189.15, 81.05);

\path[draw=drawColor,line width= 1.2pt,line join=round,line cap=round,fill=fillColor] (181.83, 83.06) --
	(183.40, 82.21) --
	(185.19, 81.57) --
	(187.13, 81.18) --
	(189.15, 81.05) --
	(189.15, 81.05) --
	(187.13, 80.92) --
	(185.19, 80.52) --
	(183.40, 79.89) --
	(181.83, 79.04) --
	(181.83, 79.04) --
	(182.00, 79.09) --
	(182.16, 79.23) --
	(182.30, 79.47) --
	(182.42, 79.79) --
	(182.51, 80.17) --
	(182.57, 80.60) --
	(182.59, 81.05) --
	(182.57, 81.50) --
	(182.51, 81.92) --
	(182.42, 82.30) --
	(182.30, 82.62) --
	(182.16, 82.86) --
	(182.00, 83.01) --
	(181.83, 83.06) --
	cycle;

\path[draw=drawColor,line width= 1.2pt,line join=round,line cap=round] (106.79, 27.36) -- (189.15, 27.36);

\path[draw=drawColor,line width= 1.2pt,line join=round,line cap=round,fill=fillColor] (181.83, 29.37) --
	(183.40, 28.52) --
	(185.19, 27.88) --
	(187.13, 27.49) --
	(189.15, 27.36) --
	(189.15, 27.36) --
	(187.13, 27.23) --
	(185.19, 26.83) --
	(183.40, 26.20) --
	(181.83, 25.34) --
	(181.83, 25.34) --
	(182.00, 25.40) --
	(182.16, 25.54) --
	(182.30, 25.78) --
	(182.42, 26.10) --
	(182.51, 26.48) --
	(182.57, 26.91) --
	(182.59, 27.36) --
	(182.57, 27.80) --
	(182.51, 28.23) --
	(182.42, 28.61) --
	(182.30, 28.93) --
	(182.16, 29.17) --
	(182.00, 29.32) --
	(181.83, 29.37) --
	cycle;

\path[draw=drawColor,line width= 0.4pt,dash pattern=on 1pt off 3pt ,line join=round,line cap=round] (189.15, 81.05) -- (198.30, 98.94);

\path[draw=drawColor,line width= 0.4pt,dash pattern=on 1pt off 3pt ,line join=round,line cap=round] (189.15, 81.05) -- (198.30, 63.15);

\path[draw=drawColor,line width= 0.4pt,dash pattern=on 1pt off 3pt ,line join=round,line cap=round] (189.15, 81.05) -- (198.30, 81.05);

\path[draw=drawColor,line width= 0.4pt,dash pattern=on 1pt off 3pt ,line join=round,line cap=round] (189.15, 27.36) -- (198.30, 45.25);

\path[draw=drawColor,line width= 0.4pt,dash pattern=on 1pt off 3pt ,line join=round,line cap=round] (189.15, 27.36) -- (198.30,  9.46);

\path[draw=drawColor,line width= 0.4pt,dash pattern=on 1pt off 3pt ,line join=round,line cap=round] (189.15, 27.36) -- (198.30, 27.36);

\path[draw=drawColor,line width= 1.2pt,line join=round,line cap=round] (198.30, 98.94) -- (280.66, 98.94);

\path[draw=drawColor,line width= 1.2pt,line join=round,line cap=round] (280.66, 96.10) --
	(280.66, 98.94) --
	(280.66,101.79);

\path[draw=drawColor,line width= 1.2pt,line join=round,line cap=round] (198.30, 63.15) -- (280.66, 63.15);

\path[draw=drawColor,line width= 1.2pt,line join=round,line cap=round] (280.66, 60.31) --
	(280.66, 63.15) --
	(280.66, 66.00);

\path[draw=drawColor,line width= 1.2pt,line join=round,line cap=round] (198.30, 81.05) -- (280.66, 81.05);

\path[draw=drawColor,line width= 1.2pt,line join=round,line cap=round] (280.66, 78.20) --
	(280.66, 81.05) --
	(280.66, 83.89);

\path[draw=drawColor,line width= 1.2pt,line join=round,line cap=round] (198.30, 45.25) -- (280.66, 45.25);

\path[draw=drawColor,line width= 1.2pt,line join=round,line cap=round] (280.66, 42.41) --
	(280.66, 45.25) --
	(280.66, 48.10);

\path[draw=drawColor,line width= 1.2pt,line join=round,line cap=round] (198.30,  9.46) -- (280.66,  9.46);

\path[draw=drawColor,line width= 1.2pt,line join=round,line cap=round] (280.66,  6.61) --
	(280.66,  9.46) --
	(280.66, 12.31);

\path[draw=drawColor,line width= 1.2pt,line join=round,line cap=round] (198.30, 27.36) -- (280.66, 27.36);

\path[draw=drawColor,line width= 1.2pt,line join=round,line cap=round] (280.66, 24.51) --
	(280.66, 27.36) --
	(280.66, 30.20);
\end{scope}
\end{tikzpicture}
\caption{Graphical illustration of the HiGrad tree.
Here we have three levels. At the end of the first level, the segment is split into two;
at the end of the second level, each segment is further split into three. There are six threads in this HiGrad, each defined as a path from the root node to one of the six leaf nodes.}
\label{fig:higrad_intro}
\end{center}
\end{figure}

More specifically, HiGrad begins by performing SGD iterations for a certain number of steps and then splits at the end of the single segment into several segments. This resembles the splitting strategy used in the multilevel Monte Carlo method~\citep{giles2015multilevel}, which splits a sum to be estimated into several pieces for estimation separately. On top of that, however, HiGrad hierarchically operates in this fashion at every level until leaf nodes, generating multiple threads\footnote{A path from the root to a leaf node.}. Moreover, it naturally fits the online setting and requires no more computational effort compared with SGD. In particular, the HiGrad algorithm agrees with the vanilla SGD restricted to every thread of the tree. With the HiGrad iterates in place, a weighted average across each thread yields an estimate. These multiple estimates are used to construct a $t$-based confidence interval for the quantity of interest by recognizing the correlation structure, which is obtained by making use of the
Ruppert--Polyak normality result for averaged SGD iterates \citep{ruppert1988,polyak1990,polyak1992}. Under certain conditions, the HiGrad confidence interval is shown to have asymptotically correct coverage probabilities, and its center, referred to as the HiGrad estimator, achieves the same statistical efficiency as the vanilla SGD.

At a high level, HiGrad integrates the ideas of \textit{contrasting} and \textit{sharing}, two competing ingredients that require balancing. On the one hand, contrasting is gained by hierarchically splitting the threads to get more than one estimate, which allows us to measure the associated variability. On the other hand, every two threads \textit{share} some segments in order to \textit{elongate} the total length between the root and a leaf. The benefit of having a longer thread is that it ensures better convergence and accuracy of the solutions. On the contrary, splitting SGD at the beginning (see,
for example, \cite{jain2016parallelizing}) with the same computational budget $N$ gives much \textit{shorter} threads and, as a consequence, it might lead to a significant bias of the solutions, as demonstrated by simulation studies in Section \ref{sec:simulations}. To facilitate the use of HiGrad in practice, we set a default configuration of this method in our R package \texttt{higrad} (\url{https://cran.r-project.org/web/packages/higrad/}) through balancing between contrasting and sharing, showing its satisfactory performance in a variety of scenarios in Section \ref{sec:simulations}.

This paper contributes to the rich literature on online learning. As a modern online learning tool, SGD has a root extended to stochastic approximation, which was pioneered by \cite{robbins1951stochastic} and \cite{kiefer1952stochastic} in the 1950s (see \cite{lai2003stochastic} for an overview). More recently, the optimization and machine learning communities have been extensively studying SGD \citep{zhang2004solving,nemirovski2009robust,recht2011hogwild,rakhlin2012making}, mostly focused on the convergence of SGD iterates or generalization error bounds. Much less work has been done taking an inferential point of view on SGD. That said, very recently there has been a flurry of interesting activities on statistical inference for SGD \citep{toulis2017asymptotic,chen2016,li2017statistical,fang2017scalable,mandt2017stochastic,lan2012validation}. In short, \cite{toulis2017asymptotic} propose the implicit SGD, showing its robustness to step sizes and carrying over the Ruppert--Polyak normality to this method; in \cite{chen2016}, the authors first develop an asymptotically valid inference approach based on averaged SGD iterates. Their procedure takes the form of a new batch-means estimator that is derived by truncating the SGD iterates into blocks as a way to decorrelate nearby SGD iterates; further, \cite{li2017statistical} argue that discarding some intermediate iterates helps to reduce correlations of SGD iterates; in a different route, \cite{fang2017scalable} consider an inferential procedure through running
perturbed-SGD in parallel. The HiGrad procedure significantly differs from this work in that it essentially provides a new template for online learning, including SGD as the simplest example. For future research, it would be of great interest to explore potential benefits of this new template for purposes other than providing a confidence interval.


The remainder of the paper is structured as follows. We introduce the HiGrad algorithm in Section \ref{sec:method-1}, along with a sketch proof of the coverage properties of the HiGrad confidence interval and a form of statistical optimality. Section \ref{sec:how-split} considers the choice of parameters that determine the HiGrad procedure. In Section \ref{sec:extensions}, some practical extensions and improvements for implementing the algorithm are discussed. Results on a set of simulation studies and a real data example are presented in Section \ref{sec:simulations}. We conclude the paper in Section \ref{sec:discussion} with suggested future research. Technical details of the proofs are deferred to the appendix.


\section{The HiGrad Procedure}
\label{sec:method-1}

\subsection{Problem statement}
\label{sec:problem-statement}


Let $f(\theta)$ be a convex function defined on Euclidean space $\R^d$ and denote by $\theta^\ast$ the unique minimizer of $f(\theta)$. Suppose the objective function $f$ is given by an expectation
\[
f(\theta) = \E f(\theta, Z),
\]
where $f(\theta, z)$ is a loss function and $Z$, throughout the paper, denotes a random variable drawn from an (unknown) infinite population or a finite population $\{z_1, \ldots, z_m\}$. In the latter case, the objective function is $f(\theta) = \frac1m \sum_{l=1}^m f(\theta, z_l)$. We sample $N$ \iid~data points $Z_1, Z_2, \ldots, Z_N$ from the population, each having the same distribution as $Z$ (in the case of finite population, the number $N/m$ is often called epochs). For an observation unit $Z = z$, we have access to a noisy gradient $g(\theta, z)$ that obeys
\[
\E g(\theta, Z) = \nabla f(\theta)
\] 
for all $\theta$. Namely, $g(\theta, Z)$ is unbiased for $\nabla f(\theta)$ or is, equivalently, the partial derivative of $f(\theta, Z)$ with respect to $\theta$.

A rich class of such problems is ubiquitous in statistics and machine learning: $f(\theta, z)$ is taken to be the negative log-likelihood function and the random variable is written as $z = (x, y)$, with $x \in \R^d$ being the feature vector and $y \in \R$ being the response or label. Although the joint distribution of $(X, Y)$ is typically unknown, the conditional distribution of $Y$ given $X$ is often assumed to be specified by the parameter $\theta^\ast$. Below is a list of several representative problems frequently encountered in practice (up to constants independent of $\theta$).
\begin{itemize}
\item \textbf{Linear regression}: $f(\theta, z) = \frac12 (y - x^\top \theta)^2$.
\item
\textbf{Logistic regression}: $f(\theta, z) = -y x^\top \theta + \log\left( 1 + \e^{x^\top \theta} \right)$.
\end{itemize}
The examples above fall into the broad class of generalized linear models (GLM). In its canonical form without dispersion, a GLM density takes the form $p_{\theta}(y|x) = h(y) \e^{y x^\top \theta - b(x^\top \theta)}$, where $h$ is the base measure and the function $b$ satisfies $b'(x^\top \theta) = \E(Y|X = x)$ and is convex. Ignoring the factor $\log h$, the negative log-likelihood is
\[
f(\theta, z) = -y x^\top\theta + b(x^\top\theta),
\]
which is convex. Hence, the objective function $f(\theta)$, derived through integrating out the randomness of $Z = (X, Y)$,  is also convex. This is the case for the two examples above\footnote{In the case of linear regression, $f(\theta, z)$ includes an additional term $y^2/2$. This term does not affect the minimizer $\theta^\ast$.}. In addition, two popular types of problems are also included. Below, $\|\cdot\|$ denotes the $\ell_2$ norm and $\lambda > 0$.
\begin{itemize}
\item
\textbf{Penalized generalized linear regression}: $f(\theta, z) = - y x^\top\theta + b(x^\top\theta) + \lambda \|\theta\|^2$.

\item
\textbf{Huber regression}: $f(\theta, z) = \rho_{\lambda}(y - x^\top \theta)$, where $\rho_{\lambda}(a) = a^2/2$ for $|a| \le \lambda$ and $\rho_{\lambda}(a) = \lambda |a| - \lambda^2/2$ otherwise.
\end{itemize}
The first one includes ridge regression as a well-known example, using an $\ell_2$ penalty to impose regularization. The function $b$ should satisfy some growth assumption and Poisson regression, as a result, is excluded. The second is an instance of robust estimation, which is used to make regression less sensitive to outliers. However, it is worth pointing out that the formal treatment given later in Section \ref{sec:asympt-valid-conf} considers a much broader class of problems.

As the model gets more and more complex, of practical importance is often the predictive performance of the model rather than the interpretation of a single unknown parameter. In the context above, a plethora of statistical and machine learning problems can be cast as estimating a univariate function $\mu_x(\theta)$ evaluated at $\theta^\ast$. Put concretely, imagine that we observe the feature $X = x$ of a freshly sampled data point and would like to predict the conditional mean of $Y$ given $X = x$:
\[
\mu_x(\theta^\ast) \equiv \E (Y| X = x).
\]
Above, note that the conditional mean is a function of the model parameter $\theta^\ast$ and independent variable $x$. In the aforementioned examples, $\mu_x(\theta) = x^\top \theta$ (linear regression), $\mu_x(\theta) = \e^{x^\top \theta}/(1 + \e^{x^\top \theta})$ (logistic regression) and, more broadly, $\mu_x(\theta) = \E(Y|X = x) \equiv b'(x^\top \theta)$ (generalized linear
models). Generally, $\mu_x(\theta)$ can be any smooth univariate function of $\theta$.

The main goal of this paper is to attach some confidence statements, such as a confidence interval, to an estimate of $\mu_x(\theta^\ast)$ solely based on noisy gradient information evaluated at the sample $Z_1, \ldots, Z_N$. Given that $N$ is exceedingly large or data is available in a stream, one challenge is to evaluate the noisy gradient $g$ only \textit{once} for each $Z_j$. While the SGD algorithm \eqref{eq:sgd} fulfills the computational constraint, it fails to provide a confidence interval in a natural way. On the other hand, bootstrap is a flexible technique to yield confidence statements but does not, however, scale up to large datasets since it typically requires 50 times more computational cost. Next, we present the HiGrad algorithm as an approach to bringing together all considered needs.

\subsection{The HiGrad tree}
\label{sec:higrad-algorithm}

The HiGrad algorithm is best visualized by its tree structure. A HiGrad tree is parameterized by $(B_1, B_2, \ldots, B_K)$ and $(n_0, n_1, \ldots, n_K)$, where $K$ is the depth of the tree and $n_i$ denotes segment length. All $B_i \ge 2$ and $n_i$ are positive integers. Figure \ref{fig:higrad_intro} illustrates an example of $(B_1, B_2) = (2, 3)$, while Figure \ref{fig:higrad} illustrates $(B_1, B_2) = (2, 2)$. At level 0, the root segment is comprised of $n_0$ data points and has $B_1$ child segments. Each of these segments at level 1 is comprised of $n_1$ data points and has $B_2$ child segments. Recurse this process according to the parameters $(B_1, B_2, \ldots, B_K)$ and $(n_0, n_1, \ldots, n_K)$ until the HiGrad tree has $K+1$ levels. Write $L_k := n_0 + n_1 + \cdots + n_k$ for $k = 0, 1, \ldots, K$ (as a convention, set $L_{-1} = 0$). A path connecting the root node (the left end of the leaf segment) and a leaf node (a last level node) is called a thread. Note that there are $T := B_1B_2 \cdots B_K$
threads, each of which traverses $K+1$ segments, having data points totaling $L_K$. As a constraint, the number of data units in the full tree should equal the total number of observations, that is,
\begin{equation}\label{eq:sum_N}
n_0 + B_1n_1 + B_1 B_2 n_2 + B_1 B_2 B_3 n_3 + \cdots + B_1 B_2 \cdots B_K n_K = N.
\end{equation}

The HiGrad algorithm is to run SGD on the tree structure above. Given a sequence of step sizes $\{\gamma_j\}_{j=1}^{\infty}$, HiGrad begins by iterating
\begin{equation}\nonumber
\theta_j^{\setempty} = \theta_{j-1}^{\setempty} - \gamma_j g(\theta_{j-1}^{\setempty}, Z_j^{\setempty})
\end{equation}
for $j = 1, \ldots, n_0$, starting from $\theta_0^{\setempty} = \theta_0$. Above, the superscript $\setempty$ denotes the root segment and $Z^{\setempty} := \{Z_j^{\setempty}\}_{j=1}^{n_0}$ is a sub-sample of the observations $\{Z_1, \ldots, Z_N\}$. Next, HiGrad proceeds to all level 1 segments, at one of which, say segment $\bs = (b_1)$ for $1 \le b_1 \le B_1$, it iterates according to
\[
\theta_j^{\bs} = \theta_{j-1}^{\bs} - \gamma_{\mathsmaller{L_0}+j} g(\theta^{\bs}_{j-1}, Z_j^{\bs})
\]
for $j = 1, \ldots, n_1$, where $\theta_0^{\bs} = \theta_{n_0}^{\setempty}$ (the last iterate from the previous segment). More generally, consider a segment $\bs = (b_1,\cdots, b_k)$ at level $k$, where $1 \le b_i \le B_i$ for $i = 1, \ldots, k$. At this segment, the procedure is updated according to
\[
\theta_j^{\bb} = \theta_{j-1}^{\bb} - \gamma_{\mathsmaller{L_{k-1}}+j} \,g(\theta^{\bb}_{j-1}, Z_j^{\bb})
\]
for $j = 1, \ldots, n_k$, with the initial point $\theta^{\bs}_0$ being the last iterate from the segment $\bs^{-} := (b_1, \ldots, b_{k-1})$. Through the whole procedure, the $N$ data points should be partitioned as
\begin{equation}\label{eq:n_partitioning}
\{Z_1, \ldots, Z_N\} = \cup \{ Z_j^{\bs}: 1 \le j \le n_{\#\bs}\},
\end{equation}
where the union is taken over all the segments $\bs$ and $\#\bs = k$ if $\bs = (b_1, \ldots, b_k)$ (with the convention that $\#\setempty = 0$). When the $N$ data points are sampled from a finite population, different segments might share some samples that are drawn multiple times.

To make the HiGrad algorithm online, a formal description of the construction is given in Algorithm \ref{algo:1}, with more details to be specified in the next section. In particular, the data stream can feed segments at the same level in a cyclic manner, thus enabling the HiGrad algorithm to be implemented in an \textit{online} fashion.

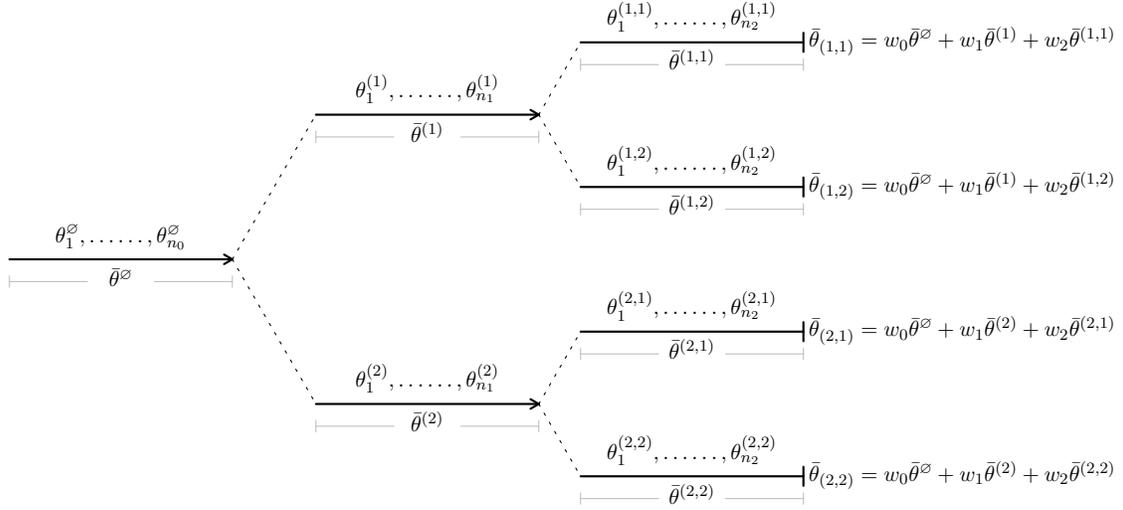
\begin{figure}[t]
\begin{center}
\begin{tikzpicture}[x=1pt,y=1pt]
\definecolor{fillColor}{RGB}{255,255,255}
\path[use as bounding box,fill=fillColor,fill opacity=0.00] (0,0) rectangle (433.62,213.83);
\begin{scope}
\path[clip] (  0.00,  0.00) rectangle (433.62,213.83);
\definecolor{drawColor}{RGB}{0,0,0}

\path[draw=drawColor,line width= 0.8pt,line join=round,line cap=round] ( 16.06,106.92) -- (100.36,106.92);

\path[draw=drawColor,line width= 0.8pt,line join=round,line cap=round] ( 97.24,105.11) --
	(100.36,106.92) --
	( 97.24,108.72);

\path[draw=drawColor,line width= 0.4pt,dash pattern=on 1pt off 3pt ,line join=round,line cap=round] (100.36,106.92) -- (131.98,161.67);

\path[draw=drawColor,line width= 0.4pt,dash pattern=on 1pt off 3pt ,line join=round,line cap=round] (100.36,106.92) -- (131.98, 52.16);

\path[draw=drawColor,line width= 0.8pt,line join=round,line cap=round] (131.98,161.67) -- (216.28,161.67);

\path[draw=drawColor,line width= 0.8pt,line join=round,line cap=round] (213.15,159.87) --
	(216.28,161.67) --
	(213.15,163.48);

\path[draw=drawColor,line width= 0.8pt,line join=round,line cap=round] (131.98, 52.16) -- (216.28, 52.16);

\path[draw=drawColor,line width= 0.8pt,line join=round,line cap=round] (213.15, 50.35) --
	(216.28, 52.16) --
	(213.15, 53.97);

\path[draw=drawColor,line width= 0.4pt,dash pattern=on 1pt off 3pt ,line join=round,line cap=round] (216.28,161.67) -- (232.09,189.05);

\path[draw=drawColor,line width= 0.4pt,dash pattern=on 1pt off 3pt ,line join=round,line cap=round] (216.28,161.67) -- (232.09,134.30);

\path[draw=drawColor,line width= 0.4pt,dash pattern=on 1pt off 3pt ,line join=round,line cap=round] (216.28, 52.16) -- (232.09, 79.54);

\path[draw=drawColor,line width= 0.4pt,dash pattern=on 1pt off 3pt ,line join=round,line cap=round] (216.28, 52.16) -- (232.09, 24.78);

\path[draw=drawColor,line width= 0.8pt,line join=round,line cap=round] (232.09,189.05) -- (316.39,189.05);

\path[draw=drawColor,line width= 0.8pt,line join=round,line cap=round] (316.39,185.44) --
	(316.39,189.05) --
	(316.39,192.67);

\path[draw=drawColor,line width= 0.8pt,line join=round,line cap=round] (232.09,134.30) -- (316.39,134.30);

\path[draw=drawColor,line width= 0.8pt,line join=round,line cap=round] (316.39,130.68) --
	(316.39,134.30) --
	(316.39,137.91);

\path[draw=drawColor,line width= 0.8pt,line join=round,line cap=round] (232.09, 79.54) -- (316.39, 79.54);

\path[draw=drawColor,line width= 0.8pt,line join=round,line cap=round] (316.39, 75.92) --
	(316.39, 79.54) --
	(316.39, 83.15);

\path[draw=drawColor,line width= 0.8pt,line join=round,line cap=round] (232.09, 24.78) -- (316.39, 24.78);

\path[draw=drawColor,line width= 0.8pt,line join=round,line cap=round] (316.39, 21.17) --
	(316.39, 24.78) --
	(316.39, 28.39);

\node[text=drawColor,anchor=base,inner sep=0pt, outer sep=0pt, scale=  0.75] at ( 58.21,113.47) {$\theta_1^{\emptyset},\dots\dots,\theta_{n_0}^{\emptyset}$};

\node[text=drawColor,anchor=base,inner sep=0pt, outer sep=0pt, scale=  0.75] at (174.13,168.23) {$\theta^{(1)}_1,\dots\dots,\theta^{(1)}_{n_1}$};

\node[text=drawColor,anchor=base,inner sep=0pt, outer sep=0pt, scale=  0.75] at (174.13, 58.72) {$\theta^{(2)}_1,\dots\dots,\theta^{(2)}_{n_1}$};

\node[text=drawColor,anchor=base,inner sep=0pt, outer sep=0pt, scale=  0.75] at (274.24,195.61) {$\theta^{(1,1)}_1,\dots\dots,\theta^{(1,1)}_{n_2}$};

\node[text=drawColor,anchor=base,inner sep=0pt, outer sep=0pt, scale=  0.75] at (274.24,140.85) {$\theta^{(1,2)}_1,\dots\dots,\theta^{(1,2)}_{n_2}$};

\node[text=drawColor,anchor=base,inner sep=0pt, outer sep=0pt, scale=  0.75] at (274.24, 86.09) {$\theta^{(2,1)}_1,\dots\dots,\theta^{(2,1)}_{n_2}$};

\node[text=drawColor,anchor=base,inner sep=0pt, outer sep=0pt, scale=  0.75] at (274.24, 31.34) {$\theta^{(2,2)}_1,\dots\dots,\theta^{(2,2)}_{n_2}$};

\node[text=drawColor,anchor=base,inner sep=0pt, outer sep=0pt, scale=  0.75] at ( 58.21, 96.61) {$\bar\theta^{\emptyset}$};

\node[text=drawColor,anchor=base,inner sep=0pt, outer sep=0pt, scale=  0.75] at (174.13,151.37) {$\bar\theta^{(1)}$};

\node[text=drawColor,anchor=base,inner sep=0pt, outer sep=0pt, scale=  0.75] at (174.13, 41.85) {$\bar\theta^{(2)}$};

\node[text=drawColor,anchor=base,inner sep=0pt, outer sep=0pt, scale=  0.75] at (274.24,178.75) {$\bar\theta^{(1,1)}$};

\node[text=drawColor,anchor=base,inner sep=0pt, outer sep=0pt, scale=  0.75] at (274.24,123.99) {$\bar\theta^{(1,2)}$};

\node[text=drawColor,anchor=base,inner sep=0pt, outer sep=0pt, scale=  0.75] at (274.24, 69.23) {$\bar\theta^{(2,1)}$};

\node[text=drawColor,anchor=base,inner sep=0pt, outer sep=0pt, scale=  0.75] at (274.24, 14.48) {$\bar\theta^{(2,2)}$};
\definecolor{drawColor}{RGB}{190,190,190}

\path[draw=drawColor,line width= 0.4pt,line join=round,line cap=round] ( 16.06, 98.49) -- ( 45.57, 98.49);

\path[draw=drawColor,line width= 0.4pt,line join=round,line cap=round] ( 16.06,100.65) --
	( 16.06, 98.49) --
	( 16.06, 96.32);

\path[draw=drawColor,line width= 0.4pt,line join=round,line cap=round] ( 70.86, 98.49) -- (100.36, 98.49);

\path[draw=drawColor,line width= 0.4pt,line join=round,line cap=round] (100.36, 96.32) --
	(100.36, 98.49) --
	(100.36,100.65);

\path[draw=drawColor,line width= 0.4pt,line join=round,line cap=round] (131.98,153.24) -- (161.49,153.24);

\path[draw=drawColor,line width= 0.4pt,line join=round,line cap=round] (131.98,155.41) --
	(131.98,153.24) --
	(131.98,151.08);

\path[draw=drawColor,line width= 0.4pt,line join=round,line cap=round] (186.78,153.24) -- (216.28,153.24);

\path[draw=drawColor,line width= 0.4pt,line join=round,line cap=round] (216.28,151.08) --
	(216.28,153.24) --
	(216.28,155.41);

\path[draw=drawColor,line width= 0.4pt,line join=round,line cap=round] (131.98, 43.73) -- (161.49, 43.73);

\path[draw=drawColor,line width= 0.4pt,line join=round,line cap=round] (131.98, 45.90) --
	(131.98, 43.73) --
	(131.98, 41.56);

\path[draw=drawColor,line width= 0.4pt,line join=round,line cap=round] (186.78, 43.73) -- (216.28, 43.73);

\path[draw=drawColor,line width= 0.4pt,line join=round,line cap=round] (216.28, 41.56) --
	(216.28, 43.73) --
	(216.28, 45.90);

\path[draw=drawColor,line width= 0.4pt,line join=round,line cap=round] (232.09,180.62) -- (261.60,180.62);

\path[draw=drawColor,line width= 0.4pt,line join=round,line cap=round] (232.09,182.79) --
	(232.09,180.62) --
	(232.09,178.45);

\path[draw=drawColor,line width= 0.4pt,line join=round,line cap=round] (286.89,180.62) -- (316.39,180.62);

\path[draw=drawColor,line width= 0.4pt,line join=round,line cap=round] (316.39,178.45) --
	(316.39,180.62) --
	(316.39,182.79);

\path[draw=drawColor,line width= 0.4pt,line join=round,line cap=round] (232.09,125.86) -- (261.60,125.86);

\path[draw=drawColor,line width= 0.4pt,line join=round,line cap=round] (232.09,128.03) --
	(232.09,125.86) --
	(232.09,123.70);

\path[draw=drawColor,line width= 0.4pt,line join=round,line cap=round] (286.89,125.86) -- (316.39,125.86);

\path[draw=drawColor,line width= 0.4pt,line join=round,line cap=round] (316.39,123.70) --
	(316.39,125.86) --
	(316.39,128.03);

\path[draw=drawColor,line width= 0.4pt,line join=round,line cap=round] (232.09, 71.11) -- (261.60, 71.11);

\path[draw=drawColor,line width= 0.4pt,line join=round,line cap=round] (232.09, 73.28) --
	(232.09, 71.11) --
	(232.09, 68.94);

\path[draw=drawColor,line width= 0.4pt,line join=round,line cap=round] (286.89, 71.11) -- (316.39, 71.11);

\path[draw=drawColor,line width= 0.4pt,line join=round,line cap=round] (316.39, 68.94) --
	(316.39, 71.11) --
	(316.39, 73.28);

\path[draw=drawColor,line width= 0.4pt,line join=round,line cap=round] (232.09, 16.35) -- (261.60, 16.35);

\path[draw=drawColor,line width= 0.4pt,line join=round,line cap=round] (232.09, 18.52) --
	(232.09, 16.35) --
	(232.09, 14.18);

\path[draw=drawColor,line width= 0.4pt,line join=round,line cap=round] (286.89, 16.35) -- (316.39, 16.35);

\path[draw=drawColor,line width= 0.4pt,line join=round,line cap=round] (316.39, 14.18) --
	(316.39, 16.35) --
	(316.39, 18.52);
\definecolor{drawColor}{RGB}{0,0,0}

\node[text=drawColor,anchor=base west,inner sep=0pt, outer sep=0pt, scale=  0.75] at (318.5,187.33) {$\bar\theta_{(1,1)}=w_0\bar\theta^{\emptyset}+w_1\bar\theta^{(1)}+w_2\bar\theta^{(1,1)}$};

\node[text=drawColor,anchor=base west,inner sep=0pt, outer sep=0pt, scale=  0.75] at (318.5,132.57) {$\bar\theta_{(1,2)}=w_0\bar\theta^{\emptyset}+w_1\bar\theta^{(1)}+w_2\bar\theta^{(1,2)}$};

\node[text=drawColor,anchor=base west,inner sep=0pt, outer sep=0pt, scale=  0.75] at (318.5, 77.82) {$\bar\theta_{(2,1)}=w_0\bar\theta^{\emptyset}+w_1\bar\theta^{(2)}+w_2\bar\theta^{(2,1)}$};

\node[text=drawColor,anchor=base west,inner sep=0pt, outer sep=0pt, scale=  0.75] at (318.5, 23.06) {$\bar\theta_{(2,2)}=w_0\bar\theta^{\emptyset}+w_1\bar\theta^{(2)}+w_2\bar\theta^{(2,2)}$};
\end{scope}
\end{tikzpicture}
\caption{Graphical illustration of the HiGrad algorithm. Here we have three levels and at the end of each level, each segment is split into two segments. Averages are obtained for each level and at each leaf a weighted average is calculated. The weights $w_j$ are detailed in Section \ref{sec:decorr-thre}, and more discussion about the tree structure is given in Section~\ref{sec:how-split}.}
\label{fig:higrad}
\end{center}
\end{figure}

\begin{algorithm}[h]
\caption{The HiGrad Algorithm} 
\label{algo:1}
\begin{algorithmic}[1]
\State \textbf{input:} HiGrad tree structure $(B_1, \ldots, B_K)$ and $(n_0, n_1, \ldots, n_K)$, partition of the dataset $\{Z_1, \ldots, Z_N\} = \cup_{\bs} \{ Z_j^{\bs}: 1 \le j \le n_{\#\bs}\}$, step sizes $(\gamma_1, \ldots, \gamma_{L_K})$, initial point $\theta_0$, univariate function $\mu_x(\theta)$, and level $\alpha$
\State \textbf{output:} $\overline\theta^{\bs}$ for all segments $\bs$ and confidence interval for $\mu_x(\theta^\ast)$
\vspace{0.5em}
\State Set $\overline\theta^{\bs} = 0$ for all segments $\bs$
\Function{SegmentHiGrad}{$\theta, \bb$}
\State $\theta_0^{\bs} = \theta$
\State $k = \#\bb$
\For{$j = 1$ \textbf{to} $n_k$}
\State $\theta_j^{\bs} \leftarrow \theta_{j-1}^{\bs} - \gamma_{j+L_{k-1}} \,g(\theta^{\bs}_{j-1}, Z_j^{\bs})$
\State $\overline\theta^{\bs} \leftarrow \overline\theta^{\bs} + \theta^{\bs}_j/n_k$
\EndFor
\If{$k < K$}
\For{$b_{k+1} = 1$ \textbf{to} $B_{k+1}$}
\State $\bb^+ \leftarrow (\bb, b_{k+1})$
\State \textbf{execute} \textsc{SegmentHiGrad}$\left( \theta_{n_k}^{\bb}, \bb^+\right)$
\EndFor
\EndIf
\EndFunction
\vspace{0.5em}
\State \textbf{execute} \textsc{SegmentHiGrad}$\left( \theta_0, \emptyset\right)$
\State Construct a confidence interval in the form of \eqref{eq:conf} using \eqref{eq:se}
\end{algorithmic}
\end{algorithm}

\subsection{A $t$-confidence interval}
\label{sec:decorr-thre}

Restricted to one thread, HiGrad amounts to performing the vanilla SGD \eqref{eq:sgd} for $L_K \equiv n_0 + n_1 + \cdots + n_K$ steps. Thus, HiGrad yields $T$ sets of vanilla SGD results, and the ultimate goal is to utilize these results to obtain an estimator of $\mu_x^\ast := \mu_x(\theta^\ast)$ with a confidence interval. To this end, we start by introducing some notation to facilitate our discussion. Given any segment $\bs = (b_1, \ldots, b_k)$ of the HiGrad tree\footnote{Throughout the paper, we use non-bold letters to denote scalars, vectors, and matrices, except for the case where it is necessary to emphasize that the notion is not a scalar, for example, $\bs, \bt$, and $\bm\mu_x$.}, denote by $\overline \theta^{\bb}$ the average of the $n_k$ iterates in $\bs$, that is\footnote{The letter $\bt$ is placed in subscript for a thread as a way to distinguish a thread from a segment.},
\[
\overline \theta^{\bb} = \frac1{n_k} \sum_{j=1}^{n_k} \theta^{\bb}_j.
\]
Averaged SGD is known to achieve optimal convergence rates and to make estimates robust for strongly convex objectives \citep{moulines2011non,rakhlin2012making} (for more work related to averaged SGD, see \cite{bach2013non,cardot2013efficient,duchi2016local,jain2017markov,liang2017statistical,fan2018statistical}). Let $\w = (w_0, w_1, \ldots, w_K)$ be a vector of weights such that $w_0 + w_1 + \cdots + w_K = 1$ and $w_i \ge 0$. Then, for any thread $\bt = (b_1, \ldots, b_K)$, write $\overline \theta_{\bt}$ for the weighted average over the $K+1$ segments through $\bt$, that is,
\begin{equation}\label{eq:theta_ave}
\overline \theta_{\bt} = \sum_{k=0}^K w_k \overline \theta^{(b_1, \ldots, b_k)}.
\end{equation}
For notational convenience, we suppress the dependence of $\w$ on $\overline\theta_{\bt}$. Denote by $\bm\mu_x \in \R^{T}$ the $T$-dimensional vector consisting of all $\mu_x^{\bt} := \mu_x(\overline\theta_{\bt})$ defined for every thread $\bt$, and write $\mu_x^\ast = \mu_x(\theta^\ast)$ for short.

Now, we turn to infer $\mu_x^\ast$ based on the $T$-dimensional vector $\bm\mu_x$. This requires recognizing the correlation structure of the $T$ threads. For two different threads $\bt = (b_1, \ldots, b_K)$ and $\bt' = (b_1', \ldots, b_K')$ with $1 \le b_k, b_k' \le B_k$ for $k = 1, \ldots, K$, the number of data points shared by $\bt$ and $\bt'$ vary from $n_0$ to $n_0 + n_1 + \cdots + n_{K-1}$. Intuitively, the more they share, the larger the correlation is. This point is made explicit by Lemma \ref{lm:pj} in Section \ref{sec:proof-ideas}, which, loosely speaking, states that as the length of the data stream $N \goto \infty$, under certain conditions the vector $\bm\mu_x$ is asymptotically normally distributed with mean $\mu_x^\ast \bm{1}: = (\mu_x^\ast, \mu_x^\ast, \ldots, \mu_x^\ast)^\top$ and covariance proportional to $\Sigma \in \R^{T \times T}$. The covariance $\Sigma$ is defined as
\begin{equation}\label{eq:sigma}
\Sigma_{\bt, \bt'} = \sum_{k=0}^p \frac{w_k^2 N}{n_k}
\end{equation}
for any two threads $\bt, \bt'$ that agree exactly on the first $p$ segments. In particular, the diagonal entries all equal
\[
\Sigma_{\bt, \bt} = \sum_{k=0}^K \frac{w_k^2 N}{n_k}.
\]
Making use of this distributional property, Proposition \ref{prop:t} in the next section devises an (asymptotic) pivotal quantity for $\mu_x^\ast$. This pivot suggests estimating $\mu_x^\ast$ using the sample mean of $\bm\mu_x$:
\begin{equation}\label{eq:average}
\overline \mu_x := \frac1T \sum_{\bt \in \mathcal{T}} \mu_x^{\bt},
\end{equation}
where $\mathcal T$ denotes the set of all threads. We propose
\begin{equation}\label{eq:conf}
\left[\overline\mu_x - t_{T - 1, 1-\frac{\alpha}{2}}\se_x, \quad \overline\mu_x + t_{T - 1, 1-\frac{\alpha}{2}}\se_x \right]
\end{equation}
as a $t$-based confidence interval for $\mu_x^\ast$ at nominal level $1 - \alpha$. Above, $t_{T-1, 1-\frac{\alpha}{2}}$ is the $1 - \frac{\alpha}{2}$ quantile of the $t$-distribution with $T-1$ degrees of freedom, and the standard error $\se_x$ takes the form
\begin{equation}\label{eq:se}
\se_x = \sqrt{\frac{\bm 1^\top \Sigma \bm 1 \, (\bm\mu_x^\top - \overline\mu_x \, \bm 1^\top) \Sigma^{-1} (\bm\mu_x - \overline\mu_x \, \bm 1)}{T^2(T - 1)}}.
\end{equation}
Formal statements are given in Section \ref{sec:asympt-valid-conf}, along with explicit conditions required by the results, and Section \ref{sec:proof-ideas} sheds light on how the correlation structure of $\bm\mu_x$ is derived and used in obtaining the confidence interval.

In passing, we remark that the HiGrad confidence interval construction, which will be discussed starting from \eqref{eq:delta_exp} in Section \ref{sec:proof-ideas}, relies on the delta method to linearly approximate $\mu_x$ near $\theta^\ast$. To improve on the linear approximation in the case of a large curvature of $\mu_x$ at $\theta^\ast$, one could consider a certain bijective function $\eta(\cdot)$ in a neighborhood of $\mu_x^\ast$ and construct a confidence interval for $\eta^\ast_x := \eta(\mu_x^\ast)$. Note that many interesting examples of $\mu_x$ in generalized linear models depend on $\theta$ only through $x^\top \theta$, and thus a good choice of $\eta$ could be the link function, satisfying $\eta(\mu_x(\theta)) = x^\top\theta$. With this transformation in place, we may construct a confidence interval for $\eta^\ast_x$ as earlier. By recognizing the correspondence between $\eta$ and $\mu_x$, a confidence interval for $\mu_x^\ast$ can be derived by simply inverting the endpoints of that for $\eta_x^\ast$.


\subsection{Correct coverage probabilities}
\label{sec:asympt-valid-conf}
The subject of this section is to provide theoretical support for the HiGrad confidence interval. We begin by stating the assumptions needed for the main theoretical results, Proposition \ref{prop:t}, Theorem \ref{prop:conf}, and Theorem \ref{prop:pred}. As earlier, $\|\cdot\|$ denotes the $\ell_2$ norm for a vector and the spectral norm for a matrix.

\begin{assumption}[Regularity of the objective]\label{ass:cvx}
The objective function $f(\theta)$ is differentiable and convex, and its gradient $\nabla f$ is Lipschitz continuous, that is, for some $L > 0$,
\[
\|\nabla f(\theta_1) - \nabla f(\theta_2)\| \le L \|\theta_1 -\theta_2\|
\]
holds for all $\theta_1$ and $\theta_2$. In addition, the Hessian $\nabla^2 f(\theta)$ exists in a neighborhood of $\theta^\ast$ with $\nabla^2 f(\theta^\ast)$ being positive-definite, and it is locally Lipschitz continuous in the sense that there exists $L', \delta_1 > 0$ such that
\[
\left\| \nabla^2 f(\theta) - \nabla^2 f(\theta^\ast) \right\| \le L' \|\theta - \theta^\ast\|_2
\]
if $\|\theta - \theta^\ast\| \le \delta_1$.
\end{assumption}

In the next assumption, denote by $V :=\E \left[ g(\theta^\ast, Z) g(\theta^\ast, Z)^\top \right]$ and $\epsilon = g(\theta, Z) - \nabla f(\theta)$. Thus, $V = \E_{\theta^\ast} \epsilon \epsilon^\top$ by using the fact $\nabla f(\theta^\ast) = 0$. Note that $\epsilon$ has mean zero and its distribution in general depends on $\theta$.
\begin{assumption}[Regularity of noisy gradient]\label{ass:reg}
There exists a constant $C > 0$ such that
\[
\left\| \E_{\theta} \epsilon \epsilon^\top - V \right\| \le C(\|\theta-\theta^\ast\| + \|\theta-\theta^\ast\|^2)
\]
for all $\theta$. Moreover, assume there exists a constant $\delta_2 > 0$ such that
\[
\sup_{\|\theta - \theta^\ast\| \le \delta_2} \E_{\theta} \|\epsilon\|^{2+\delta_2} < \infty.
\]

\end{assumption}

Assumptions exactly the same as or basically equivalent to the above two have been made in a series of papers working on averaged SGD and beyond, see \cite{ruppert1988,polyak1990,polyak1992,moulines2011non,fort2012central,dieuleveut2016nonparametric,chen2016,toulis2017asymptotic,li2017statistical,fang2017scalable} and references therein. Specifically, Assumption \ref{ass:cvx} considers a form of local strong convexity of the objective $f$ at the minimizer $\theta^\ast$. More precisely, the positive-definiteness of the Hessian $\nabla^2 f$ at $\theta^\ast$ together with the local Lipschitz continuity of the Hessian implies that $f(\theta) - \delta \|\theta\|^2/2$ is convex on $\{\theta: \|\theta - \theta^\ast\| \le \delta\}$ for some small $\delta > 0$. Hence, the idea here is that we first
run SGD in one thread and after a number of steps, the iterate would be sufficiently close to $\theta^\ast$ so that the strong convexity kicks in. This viewpoint is consistent with the current opinion about SGD that it automatically adapts to local strong convexity (see \cite{bach2013non,bach2014adaptivity,gadat2017optimal}). In the first display of Assumption \ref{ass:reg}, the term $\|\theta-\theta^\ast\|$ is used to ensure the continuity of the covariance of $\epsilon$ at $\theta^\ast$, while the second term $\|\theta-\theta^\ast\|^2$ controls the growth of the covariance. Recognizing that the first two assumptions remain to hold if both $\delta_1, \delta_2$ are replaced by $\min\{\delta_1, \delta_2\}$, we shall simply use $\delta > 0$ for both cases.

These two assumptions are generally satisfied for the four aforementioned examples in Section \ref{sec:problem-statement}. Below, we only consider the example of linear regression. Note that
\[
f(\theta) = \E \frac12 (Y - X^\top\theta)^2 = \frac12 \theta^\top \left[\E X X^\top \right] \theta - \left[\E Y X\right]^\top \theta + \frac12 \E Y^2,
\]
which is a simple quadratic function. Hence, Assumption \ref{ass:cvx} readily follows as long as $\E X X^\top$ exists, that is, $\|X\|$ has a second moment, and is positive-definite. The positive-definiteness holds if the vector $X \in \R^d$ is in generic positions \citep{tibshirani2012degrees}, for example, having probability density well-defined in a small ball. Next, Assumption \ref{ass:reg} is satisfied if $\E \|X\|^{4+c} < \infty$ and $\E |Y|^{2+c} \|X\|^{2+c} < \infty$ for a sufficiently small $c > 0$. More details and the other examples are considered in the appendix.


Now, we are in a position to state our main theoretical result, namely, Proposition \ref{prop:t}. Throughout the paper, the function $\mu_x$ is differentiable in a neighborhood of $\theta^\ast$ and $\frac{\d \mu_x(\theta)}{\d \theta}\Big|_{\theta = \theta^\ast} \ne 0$.
The weights $\w$ are taken to be any fixed vector of non-negative entries that sum to 1. Recall that $\overline\mu_x$ and $\se_x$ are defined in \eqref{eq:average} and \eqref{eq:se}, respectively.

\begin{proposition}\label{prop:t}
Let $K$ and $B_1, \ldots, B_K$ be fixed. For each $k$, assume $n_k/N$ converges to a nonzero constant as $N \goto \infty$. Under Assumptions \ref{ass:cvx} and \ref{ass:reg}, taking step sizes $\gamma_j = \frac{c_1}{(j + c_2)^{\alpha}}$ for fixed $\alpha \in (0.5, 1), c_1 > 0$ and $c_2$ ensures the following convergence in distribution as $N \goto \infty$:
\[
\frac{\overline\mu_x - \mu_x^\ast}{\se_x} \Longrightarrow t_{T - 1}.
\]
\end{proposition}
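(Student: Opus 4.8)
The argument splits into two essentially independent pieces: (i) a joint central limit theorem for the vector $\bm\mu_x$ of thread predictions — this is exactly the Donsker-type Ruppert--Polyak statement recorded as Lemma~\ref{lm:pj} — and (ii) a finite-dimensional Gaussian computation that converts that limit into the claimed $t$-distribution via the continuous mapping theorem. The only part I expect to be genuinely delicate is (i).

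For (i): restricted to a single thread the iterates are ordinary SGD of length $L_K=n_0+\cdots+n_K$, so within each level-$k$ segment $\bs$ the classical averaging expansion applies: under Assumptions~\ref{ass:cvx}--\ref{ass:reg} and step sizes $\gamma_j=c_1(j+c_2)^{-\alpha}$ with $\alpha\in(1/2,1)$, one has $\overline\theta^{\,\bs}-\theta^\ast=-\tfrac1{n_k}A^{-1}\sum_{j=1}^{n_k}\epsilon^{\bs}_j+o_P(N^{-1/2})$, where $A:=\nabla^2 f(\theta^\ast)$ and the $\epsilon^{\bs}_j$ are the gradient-noise terms feeding $\bs$. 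The subtle point is that the remainder really is $o_P(N^{-1/2})$ even though $\bs$ is initialized at the parent's terminal iterate $\theta^\ast+O_P(N^{-1/2})$ rather than at a deterministic point: the SGD transient $\prod_i(I-\gamma_i A)$ decays faster than any polynomial in the segment length, so after averaging the initialization contributes only $O_P(N^{\alpha-3/2})=o_P(N^{-1/2})$. Because distinct segments consume disjoint blocks of the \iid\ stream, the leading $-\tfrac1{n_k}A^{-1}\sum\epsilon_j$ terms are asymptotically jointly Gaussian with covariance determined solely by block overlaps; summing $\overline\theta_{\bt}=\sum_k w_k\overline\theta^{(b_1,\ldots,b_k)}$ and using $n_k/N\to\rho_k>0$ gives $N\cov(\overline\theta_{\bt},\overline\theta_{\bt'})\to\big(\sum_{k=0}^{p}w_k^2/\rho_k\big)A^{-1}VA^{-1}$ for threads agreeing on their first $p$ segments. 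A one-term delta expansion of $\mu_x$ at $\theta^\ast$, legitimate since $\tfrac{\d\mu_x}{\d\theta}\big|_{\theta^\ast}\neq 0$, then yields
\begin{equation}\nonumber
\sqrt N\,(\bm\mu_x-\mu_x^\ast\bm 1)\ \Longrightarrow\ \mathcal N\!\left(0,\ \sigma^2\Sigma_\infty\right),\qquad \sigma^2=\Big(\tfrac{\d\mu_x}{\d\theta}\big|_{\theta^\ast}\Big)^{\!\top}A^{-1}VA^{-1}\,\tfrac{\d\mu_x}{\d\theta}\big|_{\theta^\ast},
\end{equation}
with $\Sigma_\infty$ the entrywise limit of the matrix $\Sigma$ in \eqref{eq:sigma}; this is the content of Lemma~\ref{lm:pj}, and it is where essentially all the analytic work lies (uniform control of the remainders and the ``forgetting'' of a random, data-dependent initial point, carried out simultaneously over all $T$ segments).

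For (ii): write $u_N:=\sqrt N(\bm\mu_x-\mu_x^\ast\bm 1)$ and let $P:=I_T-\tfrac1T\bm 1\bm 1^\top$. Since $\overline\mu_x-\mu_x^\ast=\tfrac1{\sqrt N\,T}\bm 1^\top u_N$ and $\bm\mu_x-\overline\mu_x\bm 1=\tfrac1{\sqrt N}Pu_N$, formula \eqref{eq:se} rearranges to
\begin{equation}\nonumber
\frac{\overline\mu_x-\mu_x^\ast}{\se_x}=\frac{\sqrt{T-1}\;\bm 1^\top u_N}{\sqrt{\bm 1^\top\Sigma\bm 1}\;\sqrt{\,u_N^\top P\Sigma^{-1}P\,u_N\,}}.
\end{equation}
The structural fact that unlocks this is that $\bm 1$ is an eigenvector of $\Sigma$: counting, for a fixed thread, how many threads share each given prefix shows $\Sigma\bm 1=c\,\bm 1$ with $c=\sum_{k=0}^K\tfrac{w_k^2 N}{n_k}\cdot\tfrac{T}{B_1\cdots B_k}$, equivalently that $\Sigma$ commutes with $P$; hence $\bm 1^\top\Sigma\bm 1=cT$, $\Sigma^{-1}\bm 1=c^{-1}\bm 1$, and $P\Sigma^{-1}P\Sigma=P$. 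Since $\Sigma=\Sigma(N)\to\Sigma_\infty$ and the displayed map is continuous in $(u_N,\Sigma)$ off the limiting-null event $\{Pu_N=0\}$, the continuous mapping theorem together with Slutsky reduces the problem to evaluating the right-hand side with $u_N$ replaced by $\xi\sim\mathcal N(0,\sigma^2\Sigma_\infty)$ and $\Sigma$ by $\Sigma_\infty$. Writing $\xi=\sigma\Sigma_\infty^{1/2}W$ with $W\sim\mathcal N(0,I_T)$: the numerator $\bm 1^\top\xi$ is centered Gaussian and independent of $P\xi$ because $P\Sigma_\infty\bm 1=c_\infty P\bm 1=0$; and $\xi^\top P\Sigma_\infty^{-1}P\xi=\sigma^2\,W^\top M W$ with $M:=\Sigma_\infty^{1/2}P\Sigma_\infty^{-1}P\Sigma_\infty^{1/2}$ symmetric idempotent of rank $\operatorname{tr}P=T-1$ (using $P\Sigma_\infty^{-1}P\Sigma_\infty=P$). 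Hence the limit has the form $G\big/\sqrt{\chi^2_{T-1}/(T-1)}$ with $G\sim\mathcal N(0,1)$ independent of the $\chi^2_{T-1}$, i.e.\ it is $t_{T-1}$, which is the assertion. Beyond the stated hypotheses this uses only $\sigma^2>0$ and that the weights render $\Sigma$ nonsingular (e.g.\ $w_K>0$), both already implicit in \eqref{eq:se}.
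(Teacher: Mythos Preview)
Your proposal is correct and follows essentially the same route as the paper: part (i) is precisely the content of Lemma~\ref{lm:pj} plus the delta-method expansion recorded as Lemma~\ref{lm:delta}, and part (ii) hinges on the same structural observation that $\bm 1$ is an eigenvector of $\Sigma$. The only cosmetic difference is that the paper packages your idempotent/projection computation as ``$\Sigma^{-1/2}\bm\mu_x$ is a homoskedastic simple linear regression, so the standard $t$-pivot applies,'' whereas you carry out that regression-$t$ derivation explicitly.
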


\begin{remark}
The choice of step sizes $\gamma_j \asymp j^{-\alpha}$ obeys 
\[
\sum_{j=1}^{\infty} \gamma_j = \infty \text{ and } \sum_{j=1}^{\infty} \gamma_j^2 < \infty.
\] 
In particular, the step sizes vanish to zero at a rate slower than $O(j^{-1})$, and this is shown to be necessary for the averaged SGD to outperform the Robbins--Monro algorithm (see, for example, \cite{citeulike:2621242})\footnote{The assumption on step sizes can be relaxed significantly by Assumption 3 in the appendix, without affecting the validity of any results in this section. We opt for the present one in the main text for its simplicity.}. 
\end{remark}

The two theorems below are immediate consequences of Proposition \ref{prop:t}. We prefer to state them as theorems rather than corollaries as to highlight their key roles in this paper. Throughout the paper, significance level $\alpha \in (0, 1)$ is fixed.
\begin{theorem}[Confidence intervals]\label{prop:conf}
Under the assumptions of Proposition \ref{prop:t}, the HiGrad confidence interval is asymptotically correct. That is,
\[
\lim_{N \goto \infty} \P\left(\mu_x^\ast \in \left[\overline \mu_x - t_{T - 1, 1-\frac{\alpha}{2}}\se_x, \quad \overline\mu_x + t_{T - 1, 1-\frac{\alpha}{2}}\se_x \right] \right) = 1 - \alpha.
\]
\end{theorem}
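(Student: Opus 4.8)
The plan is to read off Theorem~\ref{prop:conf} from the pivotal limit of Proposition~\ref{prop:t}: once the studentized statistic $(\overline\mu_x-\mu_x^\ast)/\se_x$ is known to converge in distribution to $t_{T-1}$, the coverage statement is a short exercise in rewriting the coverage event and passing to the limit with the Portmanteau theorem. So no new analytic input is needed beyond Proposition~\ref{prop:t}.

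First I would record the elementary identity: on the event $\{\se_x>0\}$, the coverage event
\[
A_N:=\Bigl\{\,\mu_x^\ast\in\bigl[\overline\mu_x-t_{T-1,1-\frac{\alpha}{2}}\se_x,\ \overline\mu_x+t_{T-1,1-\frac{\alpha}{2}}\se_x\bigr]\,\Bigr\}
\]
coincides exactly with $\bigl\{\,|(\overline\mu_x-\mu_x^\ast)/\se_x|\le t_{T-1,1-\frac{\alpha}{2}}\,\bigr\}$. I would then split $\P(A_N)=\P\bigl(A_N\cap\{\se_x>0\}\bigr)+\P\bigl(A_N\cap\{\se_x=0\}\bigr)$ and show the last term is asymptotically negligible. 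Since $\Sigma$ is positive-definite in the configurations under consideration, $\se_x=0$ forces $\bm\mu_x-\overline\mu_x\bm 1=0$; and projecting the asymptotic-normality statement of Lemma~\ref{lm:pj} (which is invoked inside the proof of Proposition~\ref{prop:t}) onto the hyperplane orthogonal to $\bm 1$ shows that a suitable rescaling of $\bm\mu_x-\overline\mu_x\bm 1$ has a full-rank Gaussian limit of dimension $T-1\ge 1$, which assigns zero mass to the origin. Portmanteau's theorem for the closed set $\{0\}$ then gives $\P(\se_x=0)\to 0$, so $\P(A_N)=\P\bigl(|(\overline\mu_x-\mu_x^\ast)/\se_x|\le t_{T-1,1-\frac{\alpha}{2}}\bigr)+o(1)$.

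Next I would invoke Proposition~\ref{prop:t}, $(\overline\mu_x-\mu_x^\ast)/\se_x\Longrightarrow t_{T-1}$, and apply the continuous mapping theorem to get $|(\overline\mu_x-\mu_x^\ast)/\se_x|\Longrightarrow|t_{T-1}|$. Because $T-1=B_1\cdots B_K-1\ge 1$, the law $t_{T-1}$ admits a density, so the distribution function of $|t_{T-1}|$ is continuous on all of $\R$ and, in particular, $t_{T-1,1-\frac{\alpha}{2}}$ is one of its continuity points. The Portmanteau theorem then yields
\[
\lim_{N\to\infty}\P\!\Bigl(\bigl|(\overline\mu_x-\mu_x^\ast)/\se_x\bigr|\le t_{T-1,1-\tfrac{\alpha}{2}}\Bigr)
=\P\!\bigl(|t_{T-1}|\le t_{T-1,1-\tfrac{\alpha}{2}}\bigr)
=1-2\cdot\tfrac{\alpha}{2}=1-\alpha,
\]
the last step using the symmetry of the $t$ distribution about $0$ together with the definition of the quantile $t_{T-1,1-\frac{\alpha}{2}}$. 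Combined with the split above, this is exactly $\lim_{N\to\infty}\P(A_N)=1-\alpha$.

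I do not expect any genuine obstacle here: all the analytic content — the Ruppert--Polyak-type central limit theorem for the weighted thread averages, the delta-method linearization of $\mu_x$ near $\theta^\ast$, and the identification of the covariance $\Sigma$ in \eqref{eq:sigma} — already lives in the proof of Proposition~\ref{prop:t}, and Theorem~\ref{prop:conf} sits on top of it as a purely measure-theoretic limiting argument. The only points that require a little care are bookkeeping: verifying that $\se_x$ (equivalently, the matrix $\Sigma$ and the covariance of $\bm\mu_x$ projected onto $\bm 1^\perp$) is non-degenerate, so that the pivot is well defined and the $\P(\se_x=0)\to 0$ step is valid, and observing that the two endpoints $\pm t_{T-1,1-\frac{\alpha}{2}}$ carry no mass under the limiting $t_{T-1}$ law so that the Portmanteau passage is legitimate.
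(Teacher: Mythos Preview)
Your proposal is correct and matches the paper's approach: the paper explicitly states that Theorem~\ref{prop:conf} is an ``immediate consequence'' of Proposition~\ref{prop:t} and gives no separate proof, so your argument simply fills in the standard measure-theoretic details (Portmanteau, continuity of the $t_{T-1}$ cdf, handling $\se_x=0$) that the paper leaves implicit. One small remark: the non-degeneracy step you attribute to Lemma~\ref{lm:pj} is more directly read off from Lemma~\ref{lm:delta}, which packages the joint asymptotic normality of $\bm\mu_x$ with covariance $\sigma^2\Sigma$; since the paper already uses $\Sigma^{-1}$ in \eqref{eq:se}, positive-definiteness of $\Sigma$ is implicitly assumed, and your argument goes through.
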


In words, the deterministic value $\mu_x^\ast$ is contained in the (random) interval \[
\left[\overline \mu_x - t_{T - 1, 1-\frac{\alpha}{2}}\se_x, \overline\mu_x + t_{T - 1, 1-\frac{\alpha}{2}}\se_x \right]
\]
with probability tending to $1-\alpha$.

Although continuing to hold, Theorem \ref{prop:conf} might lose interpretability in the case of model misspecification (for example, $f$ is not the negative log-likelihood). As a consequence, the HiGrad confidence interval \eqref{eq:conf} might merely cover a value irrelevant to its own interpretation.


Theorem \ref{prop:pred} below provides a prediction interval with correct asymptotic coverage $1 - \alpha$. Derived from widening the HiGrad confidence interval by a factor of $\sqrt{2}$, this prediction interval covers the estimator in \eqref{eq:average} computed from a fresh dataset following the same distribution with probability tending to $1 - \alpha$. Even in the case of model misspecification, it has substantive interpretation. For instance, its length shall shed light on the variability of the estimator $\overline\mu_x$.

\begin{theorem}[Prediction intervals]\label{prop:pred}
Let $\{Z_j'\}_{j=1}^{N}$ be an independent copy of the sample $\{Z_j\}_{j=1}^{N}$. Under the assumptions of Proposition \ref{prop:t}, apply the same HiGrad procedure to $\{Z_j'\}_{j=1}^{N}$ and get the estimator $\overline\mu_x'$. Then, we have
\[
\lim_{N \goto \infty} \P\left(\overline\mu_x' \in \left[\overline \mu_x - \sqrt{2}t_{T - 1, 1-\frac{\alpha}{2}}\se_x, \quad \overline\mu_x + \sqrt{2} t_{T - 1, 1-\frac{\alpha}{2}}\se_x \right] \right) = 1 - \alpha.
\]
\end{theorem}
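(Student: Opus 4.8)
The plan is to bootstrap everything off the joint central limit theorem for $\bm\mu_x$ recorded in Lemma~\ref{lm:pj} and the orthogonal decomposition underlying Proposition~\ref{prop:t}, and then merely carry along a second, independent copy. Recall that the proof of Proposition~\ref{prop:t} produces a scalar $\sigma^2 > 0$ (built from $V$, $\nabla^2 f(\theta^\ast)$, and $\frac{\d \mu_x(\theta)}{\d \theta}\big|_{\theta = \theta^\ast}$) for which $\sqrt N\,(\bm\mu_x - \mu_x^\ast\bm 1)\Longrightarrow \mathcal N(0,\sigma^2\Sigma)$, and the $t_{T-1}$ limit there arises because the component of this Gaussian vector along $\bm 1$ --- which governs $A_N := \sqrt N(\overline\mu_x - \mu_x^\ast)$ --- is asymptotically independent of its component in $\bm 1^{\perp}$ --- which governs $S_N := \sqrt N\,\se_x$ --- the latter producing a scaled $\chi^2_{T-1}$. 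Thus that proof in fact delivers the \emph{joint} limit $(A_N, S_N^2) \Longrightarrow (A, S^2)$ with $A \sim \mathcal N(0,\tau^2)$, $\tau^2 := \sigma^2\,\bm 1^\top\Sigma\bm 1/T^2$, $S^2 \sim \frac{\tau^2}{T-1}\chi^2_{T-1}$, and $A$ independent of $S^2$.

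First I would run the identical argument on the independent copy $\{Z_j'\}_{j=1}^N$ to get $A_N' := \sqrt N(\overline\mu_x' - \mu_x^\ast) \Longrightarrow A' \sim \mathcal N(0,\tau^2)$. Since the HiGrad map is a fixed deterministic function of its input stream and $\{Z_j'\}$ is independent of $\{Z_j\}$, the pair of sequences $\bigl((A_N, S_N^2),\,A_N'\bigr)$ converges jointly to $\bigl((A, S^2),\,A'\bigr)$ with $A'$ independent of $(A, S^2)$ --- joint convergence of independent sequences, or equivalently factorization of the characteristic functions.

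Next I would splice the two decompositions together. Let $G$ denote the $\bm 1^{\perp}$-component of the sample-$1$ limiting Gaussian vector, so that $S^2$ is a measurable function of $G$ while $A \perp G$ (Cochran, within sample $1$) and $A' \perp G$ (cross-sample independence). The stacked vector $(A, A', G)$ is jointly Gaussian with $\cov(A, A') = 0$, hence $(A' - A, G)$ is jointly Gaussian with $\cov(A' - A, G) = 0$, so $A' - A$ is independent of $G$ and therefore of $S^2$; moreover $\operatorname{Var}(A' - A) = 2\tau^2$. Consequently, by the continuous mapping theorem applied to the jointly convergent triple $(A_N, A_N', S_N^2)$,
\[
\frac{\overline\mu_x' - \overline\mu_x}{\sqrt 2\,\se_x} \;=\; \frac{(A_N' - A_N)/\sqrt 2}{S_N} \;\Longrightarrow\; \frac{\mathcal N(0,\tau^2)}{\sqrt{\tau^2\chi^2_{T-1}/(T-1)}},
\]
a ratio of a standard normal and an independent $\sqrt{\chi^2_{T-1}/(T-1)}$, i.e.\ the $t_{T-1}$ law. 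The event in the statement is exactly $\{\,|(\overline\mu_x' - \overline\mu_x)/(\sqrt 2\,\se_x)| \le t_{T-1,1-\alpha/2}\,\}$, whose boundary carries zero probability under the continuous $t_{T-1}$ distribution, so the Portmanteau lemma gives the limiting value $\P(|t_{T-1}| \le t_{T-1,1-\alpha/2}) = 1 - \alpha$.

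The main obstacle I anticipate is one of presentation rather than mathematics: one must phrase the proof of Proposition~\ref{prop:t} at the level of the \emph{joint} weak limit of $(\overline\mu_x,\se_x)$ --- with the asymptotic independence of the along-$\bm 1$ and $\bm 1^{\perp}$ parts made explicit --- so that appending an independent replica and reading off a zero cross-covariance is legitimate. Once that bookkeeping is in place, the variance doubling, the $\sqrt 2$ rescaling, and the identification of the limiting ratio as $t_{T-1}$ are immediate, and no assumptions beyond those of Proposition~\ref{prop:t}, imposed simultaneously on the two samples, are needed.
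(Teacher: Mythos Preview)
Your proposal is correct and follows the same route as the paper: the paper's proof is the one-line remark that $(\overline\mu_x - \overline\mu_x')/(\sqrt{2}\,\se_x)$ converges weakly to $t_{T-1}$, and you have supplied precisely the justification behind that line --- the joint limit $(A_N,S_N^2)\Rightarrow(A,S^2)$ from the proof of Proposition~\ref{prop:t}, independence of the two samples, the variance-doubling $A'-A\sim\mathcal N(0,2\tau^2)$ independent of $S^2$, and the resulting $t_{T-1}$ ratio. There is no difference in approach, only in the level of detail.
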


\begin{remark}
The proof of this result follows from the simple fact that 
\[
\frac{\overline\mu_x - \overline\mu_x'}{\sqrt{2}\se_x} = \frac{(\overline\mu_x - \mu_x^\ast)- (\overline\mu_x'- \mu_x^\ast)}{\sqrt{2} \se_x}
\]
converges weakly to $t_{T - 1}$. Using optimal weights, HiGrad can also give a prediction interval for the vanilla SGD. Details are stated in Theorem \ref{prop:pred_vanilla} in Section \ref{sec:optimal-weights}. As a caveat, while a wide prediction interval implies large variability of the estimator, a short one does not necessarily ensure trustworthiness of the estimator due to a potentially large bias. 
\end{remark}

\subsection{Optimality}
\label{sec:optimal-weights}

The weights $\w$ have been treated so far as a generic nonnegative vector that sums to one. Moving forward, this section aims to identify a certain $\w$ that leads to the smallest asymptotic variance of the estimator $\overline\mu_x$. We begin with the fact\footnote{The notation $\bm 1$ denotes a column vector with all entries being 1. Its dimension is often clear from the context.}
\begin{equation}\nonumber
\sqrt{N} (\overline\mu_x - \mu_x^\ast) \Rightarrow \N\left( 0, \frac{\sigma^2 \bm 1^\top \Sigma \bm 1}{T^2} \right).
\end{equation}
Above, $\sigma^2$ is a constant independent of $\w$ (see more details in Section \ref{sec:proof-ideas}) and $\Sigma$ given in \eqref{eq:sigma} depends on $\w$ (recall that $\Sigma_{\bt, \bt'} = \sum_{k=0}^p \frac{w_k^2 N}{n_k}$ for any two threads $\bt, \bt'$ that agree exactly on the first $p$ segments). The display above reveals that $\overline\mu_x$ attains the minimum asymptotic variance if $\bm{1}^\top \Sigma \bm 1$ is minimized. The result below highlights the optimal weights in this sense.

\begin{proposition}\label{prop:weights}
Under the assumptions of Proposition \ref{prop:t}, $\bm{1}^\top \Sigma \bm 1$ attains the minimum if and only if
\begin{equation}\label{eq:wk}
w_k = \frac{n_k\prod_{i=0}^k B_i}{N}
\end{equation}
for all $k = 0, \ldots, K$.
\end{proposition}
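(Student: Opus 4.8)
The plan is to reduce the claim to a constrained optimization with one variable per level: first I would put the scalar $\bm 1^\top \Sigma \bm 1$ into closed form as an explicit quadratic in $\w$, and then minimize that quadratic over the simplex $\{\w : \sum_k w_k = 1,\ w_k \ge 0\}$.

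For the first step, I would sum all entries of $\Sigma$ directly from \eqref{eq:sigma}. Given two threads $\bt,\bt'$, let $p(\bt,\bt')$ denote the depth of their deepest common segment, so that \eqref{eq:sigma} reads $\Sigma_{\bt,\bt'} = \sum_{k=0}^{p(\bt,\bt')} w_k^2 N/n_k$ (in particular $p(\bt,\bt)=K$, recovering the diagonal). Interchanging the order of summation,
\[
\bm 1^\top \Sigma \bm 1 = \sum_{\bt,\bt'} \Sigma_{\bt,\bt'} = \sum_{k=0}^K \frac{w_k^2 N}{n_k}\,\#\bigl\{(\bt,\bt') : p(\bt,\bt') \ge k\bigr\}.
\]
The event $p(\bt,\bt')\ge k$ is precisely that $\bt$ and $\bt'$ route through a common level-$k$ segment; there are $\Pi_k := B_1 B_2 \cdots B_k$ (with $\Pi_0 = 1$, equivalently $\prod_{i=0}^k B_i$ under the convention $B_0 = 1$) such segments, each traversed by $T/\Pi_k$ threads, so the count is $\Pi_k (T/\Pi_k)^2 = T^2/\Pi_k$. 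This gives the clean identity $\bm 1^\top \Sigma \bm 1 = N T^2 \sum_{k=0}^K w_k^2/(n_k \Pi_k)$.

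For the second step, I would minimize $\sum_{k=0}^K w_k^2/(n_k\Pi_k)$ subject to $\sum_{k=0}^K w_k = 1$, noting that the budget constraint \eqref{eq:sum_N} is exactly $\sum_{k=0}^K n_k \Pi_k = N$. Applying Cauchy--Schwarz to $a_k = w_k/\sqrt{n_k\Pi_k}$ and $b_k = \sqrt{n_k\Pi_k}$,
\[
1 = \Bigl(\sum_k w_k\Bigr)^2 \le \Bigl(\sum_k \frac{w_k^2}{n_k\Pi_k}\Bigr)\Bigl(\sum_k n_k\Pi_k\Bigr) = N \sum_k \frac{w_k^2}{n_k\Pi_k},
\]
hence $\bm 1^\top\Sigma\bm 1 \ge T^2$, with equality if and only if $a_k \propto b_k$, i.e. $w_k \propto n_k\Pi_k$; normalizing by $\sum_k n_k\Pi_k = N$ forces $w_k = n_k\Pi_k/N$. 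These weights are automatically nonnegative, so the minimizer lies in the simplex and the inequality constraints $w_k \ge 0$ are inactive; since Cauchy--Schwarz equality characterizes this case exactly (equivalently, the objective is strictly convex on the affine hyperplane, so a Lagrange-multiplier computation gives the same unique stationary point), the minimizer is unique.

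I do not expect any genuine difficulty here; the one place that needs care is the bookkeeping in the first step---correctly identifying $\#\{(\bt,\bt'):p(\bt,\bt')\ge k\}$ with $T^2/\Pi_k$ and keeping the indexing of \eqref{eq:sigma} aligned with that of \eqref{eq:sum_N}, so that the products $\Pi_k = B_1\cdots B_k$ that appear match on both sides and the constraint collapses to $\sum_k n_k\Pi_k = N$. Once that identity is in hand, the optimization is a one-line Cauchy--Schwarz argument. As a consistency check, the minimal value $\bm 1^\top\Sigma\bm 1 = T^2$ yields asymptotic variance $\sigma^2\bm 1^\top\Sigma\bm 1/T^2 = \sigma^2$, matching the efficiency of the vanilla SGD estimator, which is consistent with the remarks following Proposition \ref{prop:pred}.
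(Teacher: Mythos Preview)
Your proposal is correct and follows essentially the same argument as the paper: both compute $\bm 1^\top \Sigma \bm 1$ by interchanging the order of summation, count the thread pairs sharing at least $k+1$ segments as $T^2/\prod_{i=0}^k B_i$, and then apply Cauchy--Schwarz against the budget constraint \eqref{eq:sum_N} to obtain the lower bound $T^2$ with equality exactly at the stated weights.
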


Note that the optimal weights are independent of the choices of step sizes as long as the step sizes are specified by the assumptions of Proposition~\ref{prop:t}.

\begin{proof}[Proof of Proposition \ref{prop:weights}]
Let $p(\bt, \bt')$ denote the number of shared segments between two threads $\bt$ and $\bt'$. Note that
\[
\begin{aligned}
\bm{1}^\top \Sigma \bm 1 &= \sum_{\bt, \bt'}\sum_{k=0}^{p(\bt, \bt')} \frac{w_k^2 N}{n_k} = \sum_{k=0}^K \sum_{\bt, \bt'} \frac{w_k^2 N}{n_k} \mathbbm{1}(k \le p(\bt, \bt'))\\
&= \sum_{k=0}^K \frac{w_k^2 N}{n_k} \sum_{\bt, \bt'} \mathbbm{1}(k \le p(\bt, \bt')).
\end{aligned}
\]
To proceed, note that\footnote{Note that we use the convention that $B_0 = 1$.}
\[
\sum_{\bt, \bt'} \mathbbm{1}(k \le p(\bt, \bt')) = B_0\cdots B_k (B_{k+1}\cdots B_K)^2 = \frac{T^2}{\prod_{i=0}^k B_i}.
\]
Hence, we get
\[
\begin{aligned}
\bm{1}^\top \Sigma \bm 1 &= N T^2 \sum_{k=0}^K \frac{w_k^2}{n_k \prod_{i=0}^k B_i}\\
&= T^2 \left[\sum_{k=0}^K n_k\prod_{i=0}^k B_i \right] \left[\sum_{k = 0}^K\frac{w_k^2}{n_k\prod_{i=0}^k B_i } \right]\\
& \ge T^2 \left[\sum_{k=0}^K \sqrt{w_k^2} \right]^2\\
& = T^2.
\end{aligned}
\]
Above, we have made use of \eqref{eq:sum_N} and the Cauchy--Schwarz inequality, which is reduced to an equality if and only if \eqref{eq:wk} holds for all $k = 0, \ldots, K$.

\end{proof}

In particular, the proof suggests that asymptotic variance of the HiGrad estimator with the optimal weights is $\sigma^2 T^2 /(N T^2) = \sigma^2/N$, no matter the configuration of the HiGrad tree $(B_1, \ldots, B_K)$ and $(n_0, \ldots, n_K)$. As a special case, taking $K = 0$ and $n_0 = N$ shows that the HiGrad variance is the same as the vanilla averaged SGD. This fact demonstrates that the splitting strategy in HiGrad does not lose any statistical efficiency in providing uncertainty quantification. As a consequence, the discussion implies that the prediction interval in Theorem \ref{prop:pred} applies to the vanilla SGD using step sizes $\gamma_j = c_1/(j+c_2)^{\alpha}$ for $\alpha \in (0.5, 1)$ as well.

\begin{theorem}[Prediction intervals for vanilla SGD]\label{prop:pred_vanilla}
Under the assumptions of Theorem \ref{prop:pred}, apply the vanilla SGD to $\{Z_j'\}_{j=1}^{N}$ and get the estimator $\mu_x^{\mathrm{\tiny SGD}}$. Then, we have
\[
\lim_{N \goto \infty} \P\left(\mu_x^{\mathrm{\tiny SGD}} \in \left[\overline \mu_x - \sqrt{2}t_{T - 1, 1-\frac{\alpha}{2}}\se_x, \quad \overline\mu_x + \sqrt{2} t_{T - 1, 1-\frac{\alpha}{2}}\se_x \right] \right) = 1 - \alpha.
\]
\end{theorem}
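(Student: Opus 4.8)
The plan is to follow the template of the proof of Theorem~\ref{prop:pred}: it suffices to prove the weak convergence
\[
\frac{\overline\mu_x - \mu_x^{\mathrm{\tiny SGD}}}{\sqrt{2}\,\se_x} \;\Longrightarrow\; t_{T-1},
\]
after which the asserted coverage follows at once, since $\mu_x^{\mathrm{\tiny SGD}}$ lies in the stated interval exactly when $\big|(\overline\mu_x-\mu_x^{\mathrm{\tiny SGD}})/(\sqrt2\,\se_x)\big|\le t_{T-1,1-\frac{\alpha}{2}}$, and the $t_{T-1}$ distribution function is continuous at $\pm t_{T-1,1-\frac{\alpha}{2}}$. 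Throughout I would take $\w$ to be the optimal weights of Proposition~\ref{prop:weights}, for which $\bm 1^\top\Sigma\bm 1=T^2$. This is the one place optimality of $\w$ is used, and it is essential: it is precisely what forces the asymptotic variance of the HiGrad center $\overline\mu_x$ to coincide with that of the vanilla averaged SGD estimator $\mu_x^{\mathrm{\tiny SGD}}$ run on fresh data.

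Next I would assemble two ingredients, both already contained in the analysis behind Proposition~\ref{prop:t}. First, from that proof one extracts the joint limit
\[
\big( \sqrt{N}(\overline\mu_x - \mu_x^\ast),\; N\se_x^2 \big) \;\Longrightarrow\; \big( \sigma Z,\; \tfrac{\sigma^2}{T-1}\chi^2_{T-1} \big),
\]
with $Z\sim\N(0,1)$ independent of $\chi^2_{T-1}$; indeed Proposition~\ref{prop:t} asserts exactly that the ratio of these two quantities converges to $t_{T-1}$, and its proof produces this limit through a Cochran-type decomposition of the asymptotically Gaussian vector $\bm\mu_x$ into its $\bm 1$-component (the numerator $\sqrt N(\overline\mu_x-\mu_x^\ast)$) and its $\Sigma^{-1}$-orthogonal residual (the quadratic form inside $\se_x$), with $\bm 1^\top\Sigma\bm 1=T^2$ plugged in. Second, applying the same analysis to the degenerate tree $K=0$, $n_0=N$ — that is, to the vanilla averaged SGD on the independent copy $\{Z_j'\}_{j=1}^N$ — together with the delta method for $\mu_x$ at $\theta^\ast$, gives $\sqrt N(\mu_x^{\mathrm{\tiny SGD}}-\mu_x^\ast)\Rightarrow\N(0,\sigma^2)$ with the very same $\sigma^2$; this is the ``no loss of efficiency'' observation recorded just after Proposition~\ref{prop:weights}.

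I would then combine the two. Since $\{Z_j'\}$ is independent of $\{Z_j\}$, the variable $\mu_x^{\mathrm{\tiny SGD}}$ is independent of $(\overline\mu_x,\se_x)$, so the two displays merge into
\[
\big( \sqrt{N}(\overline\mu_x - \mu_x^\ast),\; \sqrt{N}(\mu_x^{\mathrm{\tiny SGD}} - \mu_x^\ast),\; N\se_x^2 \big) \;\Longrightarrow\; \big( \sigma Z,\; \sigma Z',\; \tfrac{\sigma^2}{T-1}\chi^2_{T-1} \big)
\]
with $Z,Z',\chi^2_{T-1}$ mutually independent. Writing $\overline\mu_x - \mu_x^{\mathrm{\tiny SGD}} = (\overline\mu_x-\mu_x^\ast)-(\mu_x^{\mathrm{\tiny SGD}}-\mu_x^\ast)$ and rescaling numerator and denominator by $\sqrt N$, the continuous mapping theorem yields
\[
\frac{\overline\mu_x - \mu_x^{\mathrm{\tiny SGD}}}{\sqrt{2}\,\se_x} \;\Longrightarrow\; \frac{\sigma(Z-Z')/\sqrt{2}}{\sigma\sqrt{\chi^2_{T-1}/(T-1)}} = \frac{(Z-Z')/\sqrt{2}}{\sqrt{\chi^2_{T-1}/(T-1)}},
\]
and since $(Z-Z')/\sqrt2\sim\N(0,1)$ is independent of $\chi^2_{T-1}$, the right-hand side has the $t_{T-1}$ distribution, completing the reduction.

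The main obstacle is the first ingredient: what is needed is not just the marginal central limit theorem for $\overline\mu_x$ but the joint weak limit of $(\overline\mu_x,\se_x)$ \emph{with} the asymptotic independence of the Student-type numerator and the $\chi^2$ denominator — essentially a Cochran/Donsker refinement of the Ruppert--Polyak averaging theorem. This is exactly what the argument behind Proposition~\ref{prop:t} in Section~\ref{sec:proof-ideas} establishes, so here it can be quoted rather than redone; the only genuinely new point is that feeding an independent copy of the data into vanilla averaged SGD contributes an independent $\N(0,\sigma^2)$ summand to the numerator while leaving $\se_x$ untouched. One should also check the routine consistency and uniform-integrability facts ensuring the delta-method linearization of $\mu_x$ at $\theta^\ast$ applies simultaneously to the HiGrad iterates and to the vanilla SGD iterates, which is immediate from $\theta^\ast$-consistency of all the averaged iterates involved and differentiability of $\mu_x$ at $\theta^\ast$.
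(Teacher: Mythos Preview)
Your proposal is correct and follows essentially the same route as the paper: the paper derives Theorem~\ref{prop:pred_vanilla} as an immediate consequence of the discussion surrounding Proposition~\ref{prop:weights}, noting that with the optimal weights the HiGrad estimator and the vanilla averaged SGD share the asymptotic variance $\sigma^2/N$, so the argument for Theorem~\ref{prop:pred} (namely that $(\overline\mu_x-\mu_x^{\mathrm{\tiny SGD}})/(\sqrt2\,\se_x)\Rightarrow t_{T-1}$) carries over verbatim. Your write-up is a more explicit version of this, correctly flagging that the choice of optimal $\w$ (hence $\bm 1^\top\Sigma\bm 1=T^2$) is the crux that matches the two asymptotic variances.
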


This optimality of the HiGrad variance merits a stronger sense in the case where $f(\theta, Z)$ is the negative log-likelihood of $\theta$, that is, the model is correctly specified. In that case, $\sigma^2$ is shown in the discussion right below Lemma \ref{lm:pj} to coincide with the inverse of the Fisher information of $\mu_x(\theta)$ at $\theta^\ast$. Put more simply, the HiGrad procedure with the optimal weights achieves the Cram\'er--Rao lower bound among all (asymptotically) unbiased estimators.

\subsection{Proof sketch}
\label{sec:proof-ideas}

This section provides an overview of the proof of Proposition \ref{prop:t}, with an emphasis on high-level ideas rather than technical details, which can be found in the appendix. As will be shown, Proposition \ref{prop:t} is implied by Lemma \ref{lm:pj}. To state this lemma, we introduce some notations as follows. Consider the SGD rule \eqref{eq:sgd} for $j = 1, \ldots, n$. Write $n = n_0 + n_1 + \cdots + n_K$, and denote by $s_k = n_0 + \cdots + n_k$, with the convention that $s_{-1} = 0$. Define
\[
\overline\theta(k) = \frac1{n_k}\sum_{j={s_{k-1}+1}}^{s_k} \theta_j
\]
for $k = 0, \ldots, K$. Again, the choice of step sizes $\gamma_j = \frac{c_1}{(j + c_2)^{\alpha}}$ used here can be relaxed by Assumption 3 in the appendix.

\begin{lemma}\label{lm:pj}
For each $k$, assume $n_k/n$ converges to some nonzero constant as $n \goto \infty$. Under Assumptions \ref{ass:cvx} and \ref{ass:reg}, $\sqrt{n_0}(\overline\theta(0) - \theta^\ast), \sqrt{n_1}(\overline\theta(1) - \theta^\ast), \ldots, \sqrt{n_K}(\overline\theta(K) - \theta^\ast)$ converge weakly to $K+1$ \iid centered normal random variables.
\end{lemma}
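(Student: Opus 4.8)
The heart of the matter is a Donsker-style strengthening of the classical Ruppert--Polyak central limit theorem for averaged SGD. The classical result says that for a single run of SGD with step sizes $\gamma_j \asymp j^{-\alpha}$, $\alpha\in(0.5,1)$, the full average $\sqrt{n}(\overline\theta_n - \theta^\ast)$ converges to $\N(0, A^{-1}VA^{-1})$ where $A = \nabla^2 f(\theta^\ast)$; here I want the \emph{joint} asymptotic behavior of the block averages $\overline\theta(0),\ldots,\overline\theta(K)$, and the key assertion is that these blocks become asymptotically independent. The plan is to reduce to a single partial-sum process and invoke a functional CLT. First I would recall the standard linearization of the Ruppert--Polyak analysis: writing $\Delta_j = \theta_j - \theta^\ast$, one has the decomposition $A\Delta_{j-1} = \frac{\Delta_{j-1} - \Delta_j}{\gamma_j} - \epsilon_j + r_j$, where $\epsilon_j = g(\theta_{j-1},Z_j) - \nabla f(\theta_{j-1})$ is the martingale-difference noise and $r_j$ collects the Hessian-remainder terms controlled by Assumption \ref{ass:cvx}. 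Summing over a block and using Abel summation, one gets
\[
A \sum_{j=s_{k-1}+1}^{s_k} \Delta_{j-1} = -\sum_{j=s_{k-1}+1}^{s_k}\epsilon_j + (\text{telescoping + lower order}),
\]
so that $\sqrt{n_k}(\overline\theta(k) - \theta^\ast) = -A^{-1}\cdot \frac{1}{\sqrt{n_k}}\sum_{j=s_{k-1}+1}^{s_k}\epsilon_j + o_{\P}(1)$. The telescoping boundary terms are of order $\gamma_{s_{k-1}}^{-1}\|\Delta\|/\sqrt{n_k}$, and since $\gamma_j^{-1} = o(\sqrt j)$ and $\|\Delta_j\| = O_\P(\sqrt{\gamma_j})$ by the standard $L^2$ bounds for SGD under local strong convexity, these vanish; the remainder $r_j$ terms vanish by the same $L^2$ control together with the local Lipschitz Hessian bound. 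This is the step that consumes most of the technical work, and it is essentially the content of the appendix lemmas; I would cite \cite{polyak1992,ruppert1988,moulines2011non} for the single-block version and adapt it blockwise.

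With the linearization in hand, the problem collapses to the joint limit of the $K+1$ normalized noise sums $M_k := \frac{1}{\sqrt{n_k}}\sum_{j=s_{k-1}+1}^{s_k}\epsilon_j$. Since $s_k/n \to \lambda_0 + \cdots + \lambda_k$ with $\lambda_k := \lim n_k/n > 0$, these are increments over \emph{disjoint} index ranges of a single martingale $\sum_{j\le m}\epsilon_j$ with respect to the filtration $\F_m = \sigma(Z_1,\ldots,Z_m)$. I would apply the martingale functional central limit theorem (e.g.\ Hall--Heyde): the normalized partial-sum process $t \mapsto \frac{1}{\sqrt n}\sum_{j\le \lfloor nt\rfloor}\epsilon_j$ converges in the Skorokhod space $D[0,1]$ to $V^{1/2}W(t)$, a Brownian motion with covariance $V$ per unit time. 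The two conditions to verify are (i) the conditional-variance convergence $\frac1n\sum_{j\le\lfloor nt\rfloor}\E[\epsilon_j\epsilon_j^\top\mid\F_{j-1}] \xrightarrow{\P} tV$, which follows from $\theta_{j-1}\to\theta^\ast$ a.s.\ (consistency of SGD) combined with the continuity bound $\|\E_\theta\epsilon\epsilon^\top - V\| \le C(\|\theta-\theta^\ast\| + \|\theta-\theta^\ast\|^2)$ of Assumption \ref{ass:reg} and a Cesàro argument; and (ii) a conditional Lindeberg condition, which follows from the uniform $(2+\delta)$-moment bound $\sup_{\|\theta-\theta^\ast\|\le\delta}\E_\theta\|\epsilon\|^{2+\delta} < \infty$ in Assumption \ref{ass:reg}. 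By the continuous-mapping theorem applied to the disjoint-increment functional, $(M_0,\ldots,M_K)$ converges to $\bigl(\tfrac{1}{\sqrt{\lambda_k}}V^{1/2}(W(\Lambda_k)-W(\Lambda_{k-1}))\bigr)_{k=0}^K$ with $\Lambda_k = \lambda_0+\cdots+\lambda_k$; the Brownian increments over disjoint intervals are independent, and after the $\tfrac{1}{\sqrt{\lambda_k}}$ scaling each has covariance $V$. Thus $\sqrt{n_k}(\overline\theta(k)-\theta^\ast) \Rightarrow A^{-1}V^{1/2}\xi_k$ jointly, with $\xi_0,\ldots,\xi_K$ i.i.d.\ standard normal, i.e.\ $K+1$ i.i.d.\ centered Gaussians with covariance $A^{-1}VA^{-1}$.

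The main obstacle is not the martingale FCLT, which is standard, but rather making the blockwise linearization \emph{uniform} enough that the boundary terms between blocks truly disappear: a naive per-block application of the single-run Ruppert--Polyak argument would leave a coupling through the initial condition of block $k$ (which is the last iterate of block $k-1$, not $\theta_0$), and one must argue that restarting from $\theta_{s_{k-1}}$ rather than from a fixed point contributes only $o_\P(1/\sqrt{n_k})$ after averaging --- this is exactly where $\gamma_j^{-1} = o(\sqrt j)$ (equivalently $\alpha > 1/2$) is used, since the transient term decays like $\gamma_{s_{k-1}}^{-1}$ divided by $\sqrt{n_k} \asymp \sqrt n$. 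I would also note the minor point that $\overline\theta(k)$ as defined uses $\theta_j$ for $j$ in the block, while the linearization naturally produces $\sum \Delta_{j-1}$; the off-by-one shift changes the average by $(\theta_{s_k} - \theta_{s_{k-1}})/n_k = o_\P(1/\sqrt{n_k})$ and is absorbed. Finally, Proposition \ref{prop:t} follows from this lemma by the delta method: applying $\mu_x$ to each weighted thread average $\overline\theta_{\bt} = \sum_k w_k \overline\theta(k)$ (with the blocks along that thread), one gets that $\sqrt N(\bm\mu_x - \mu_x^\ast\bm 1)$ is asymptotically $\N(0,\sigma^2\Sigma)$ with $\sigma^2 = (\nabla\mu_x(\theta^\ast))^\top A^{-1}VA^{-1}\nabla\mu_x(\theta^\ast)$ and $\Sigma$ as in \eqref{eq:sigma}, and then the Studentized pivot in \eqref{eq:se} has the stated $t_{T-1}$ limit by the usual Gaussian-to-$t$ algebra (the quadratic form $(\bm\mu_x - \overline\mu_x\bm1)^\top\Sigma^{-1}(\bm\mu_x-\overline\mu_x\bm1)$ being $\chi^2_{T-1}$-like and independent of $\overline\mu_x$ in the limit).
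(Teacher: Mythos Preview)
Your linearization and three-term decomposition (martingale noise, Hessian remainder, telescoping boundary) match the paper's exactly, and your treatment of the $o_\P(1)$ terms is the same in spirit. The genuine difference is in how asymptotic independence across blocks is obtained. You go through a martingale functional CLT for the partial-sum process $t\mapsto n^{-1/2}\sum_{j\le\lfloor nt\rfloor}\epsilon_j$, then read off the joint limit of the block sums as increments of $V^{1/2}W(\cdot)$ over disjoint intervals, so independence comes for free from Brownian increments. The paper instead proves the normality of each block directly via the Lyapunov-form martingale CLT and establishes independence by a conditioning argument: conditioning on $\theta'_0=\theta_{n_0}$, the second block's limit is shown not to depend on this initial point because $\theta_{n_0}\to\theta^\ast$ almost surely, so the limit is asymptotically decoupled from $\mathcal F_{n_0}$. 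Your FCLT route is cleaner and more conceptual, and the paper in fact remarks (just after its proof of the normality lemma) that invoking Donsker's theorem for martingales would significantly simplify the argument under the proportionality assumption $n_k\asymp n$; the paper's hands-on conditioning buys a little extra generality in that it does not use $n_0\asymp n_1$, but that generality is not needed for the stated lemma.
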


This lemma is in fact a Donsker-style generalization of the normality of the celebrated Ruppert--Polyak averaging scheme \citep{ruppert1988, polyak1990, polyak1992}, and this generealization requires certain technical novelties. Explicitly, results in \cite{ruppert1988, polyak1990, polyak1992} state that, writing $\overline\theta$ for the sample mean of the vanilla SGD iterates $\theta_1, \ldots, \theta_n$, the random variable $\sqrt{n}(\overline\theta - \theta^\ast)$ converges to $\N(0, W)$ in distribution under the same assumptions as in Lemma \ref{lm:pj}. The covariance matrix $W$ takes the following sandwich form \citep{white1980heteroskedasticity}:
\[
W = H^{-1} V H^{-1},
\]
where $V$ has appeared in Assumption \ref{ass:reg} and $H$ is the Hessian $\nabla^2 f(\theta^\ast)$. Both $V$ and $H$ coincide with the Fisher information 
\[
I(\theta) = \E \, \nabla_{\theta} f(\theta, Z) \nabla_{\theta} f(\theta, Z)^\top = \E \nabla^2 f(\theta, Z)
\]
at $\theta = \theta^\ast$ if $f(\theta, Z)$ is taken to be the negative log-likelihood. Therefore, the averaged SGD iterates $\overline\theta$ matches the Cram\'er--Rao lower bound. Going back to Lemma \ref{lm:pj}, every $\sqrt{n_k}(\overline\theta(k) - \theta^\ast)$ converges to $\N(0, W)$ as the Ruppert--Polyak normality kicks in, and the (asymptotic) independence between these $K+1$ random variables is established in the proof of Lemma \ref{lm:pj} in the appendix by observing the rapid decaying of correlations among distant SGD iterates. As will be seen right below, the proof of Proposition \ref{prop:t} using Lemma \ref{lm:pj} holds regardless of the covariance $W$. In other words, the lemma does \textit{not} make full use of the Ruppert--Polyak normality result.

To obtain a confidence interval based on $\overline\mu_x$, one needs to specify the correlation structure of $\mu_x^{\bt}$ for all threads $\bt$. Lemma \ref{lm:pj} serves this purpose. To begin with, observe that
\begin{equation}\label{eq:delta_exp}
\mu_x(\theta) = \mu_x^\ast + (\theta - \theta^\ast)^\top \frac{\d \mu_x}{\d\theta}\Big|_{\theta=\theta^\ast} + o(\|\theta - \theta^\ast\|).
\end{equation}
Drop the small term $o(\|\theta - \theta^\ast\|)$ and denote by $\nu$ the column vector $\frac{\d \mu_x}{\d\theta} (\theta^\ast)$. Applying the Taylor expansion together with \eqref{eq:theta_ave} yields\footnote{Write $\bt = (b_1, \ldots, b_K)$ and let $\bt_k = (b_1, \ldots, b_k)$ for $k = 0, 1, \ldots, K$.}
\begin{equation}\label{eq:mu_ind}
\begin{aligned}
\mu_x(\overline\theta_{\bt}) &\approx \mu_x^\ast + \nu^\top \left( \sum_{k=0}^K w_k \overline \theta^{\bt_k} - \theta^\ast \right)\\
&= \mu_x^\ast + \nu ^\top\sum_{k=0}^K w_k (\overline \theta^{\bt_k} - \theta^\ast ).
\end{aligned}
\end{equation}
Now, suppose two threads $\bt, \bt'$ agree in the first $p$ segments, hence sharing the first $p$ summands in the second line of \eqref{eq:mu_ind}. Making use of the fact that the $K + 1$ summands are asymptotically independent as claimed by Lemma \ref{lm:pj}, the asymptotic covariance of $\sqrt{N} (\mu_x^{\bt} - \mu_x^\ast)$ and $\sqrt{N} (\mu_x^{\bt'} - \mu_x^\ast)$ equals
\[
\left( \sum_{k=0}^p \frac{w_k^2 N}{n_k} \right) \nu^\top W \nu.
\]
Consequently, the covariance of $\bm\mu_x \in \R^T$ is approximately given by $\Sigma$ up to a scaling factor of $\nu^\top W \nu/N$, which is independent of the weights $\w$ and the HiGrad tree structure. Taking $\sigma^2 := \nu^\top W \nu$, the discussion above is summarized in the following lemma:


\begin{lemma}\label{lm:delta}
Under the assumptions of Proposition \ref{prop:t}, $\sqrt{N}(\bm\mu_x - \mu_x^\ast \bm 1)$ converges weakly to a normal distribution with mean vector zero and covariance $\sigma^2 \Sigma$ as $N \goto \infty$. 
\end{lemma}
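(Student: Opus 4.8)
The plan is to combine a tree-wide version of Lemma~\ref{lm:pj} with the delta-method expansion \eqref{eq:delta_exp}. First I would establish that the whole collection of segment averages $\{\sqrt{n_{\#\bs}}\,(\overline\theta^{\bs}-\theta^\ast)\}_{\bs}$, indexed by all segments $\bs$ of the HiGrad tree, converges jointly in distribution to a family of independent $\N(0,W)$ vectors, where $W=H^{-1}VH^{-1}$ is the Ruppert--Polyak sandwich covariance. Within any single thread this is exactly Lemma~\ref{lm:pj}, since HiGrad restricted to a thread is $L_K$ steps of the vanilla recursion \eqref{eq:sgd} with step sizes $\gamma_j=c_1/(j+c_2)^{\alpha}$. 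For two segments lying on different threads, they share a common ancestor segment $\bs^-$ after which they branch into child segments that are fed disjoint i.i.d.\ blocks of data (the partition \eqref{eq:n_partitioning}) and start from the common iterate $\theta_{n_{\#\bs^-}}^{\bs^-}$. Here I would reuse the mechanism already present in the proof of Lemma~\ref{lm:pj}: the averaged iterate over a segment is asymptotically insensitive to its initial point, provided that point lies in the basin of $\theta^\ast$ (which holds with probability tending to one because $\theta_{n_{\#\bs^-}}^{\bs^-}\to\theta^\ast$ in probability), and correlations between temporally separated SGD iterates decay. Conditioning on the branch point and applying this recursively down the tree yields the claimed joint asymptotic independence; this is the Donsker-style strengthening hinted at after Lemma~\ref{lm:pj}, and it is the only genuinely new ingredient.

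Given Step~1, the remainder is bookkeeping. Fix a thread $\bt=(b_1,\dots,b_K)$ with $\bt_k=(b_1,\dots,b_k)$. From \eqref{eq:theta_ave},
\[
\sqrt{N}\,(\overline\theta_{\bt}-\theta^\ast)=\sum_{k=0}^{K} w_k\sqrt{N/n_k}\,\bigl(\sqrt{n_k}\,(\overline\theta^{\bt_k}-\theta^\ast)\bigr),
\]
and since $n_k/N$ tends to a nonzero constant the factors $\sqrt{N/n_k}$ converge. By Step~1 together with the continuous mapping theorem and Slutsky's theorem, the stacked vector $\bigl(\sqrt{N}(\overline\theta_{\bt}-\theta^\ast)\bigr)_{\bt}$ converges weakly to a centered Gaussian in $\R^{Td}$: for threads $\bt,\bt'$ agreeing on exactly their first $p$ segments, only the summands $k=0,\dots,p$ are common, so their asymptotic cross-covariance is $\left(\sum_{k=0}^{p}\frac{w_k^2 N}{n_k}\right)W=\Sigma_{\bt,\bt'}\,W$, with diagonal $\Sigma_{\bt,\bt}\,W$. (One can equivalently run the Cram\'er--Wold device, rewriting any linear functional of $(\mu_x^{\bt})_{\bt}$ as a single linear functional of the segment averages and applying Step~1 directly.)

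Finally I would invoke the delta method. Since $\sqrt{N}(\overline\theta_{\bt}-\theta^\ast)=O_{\P}(1)$ for each $\bt$, the expansion \eqref{eq:delta_exp} evaluated at $\theta=\overline\theta_{\bt}$ gives $\sqrt{N}(\mu_x^{\bt}-\mu_x^\ast)=\nu^\top\sqrt{N}(\overline\theta_{\bt}-\theta^\ast)+o_{\P}(1)$ with $\nu=\frac{\d\mu_x}{\d\theta}(\theta^\ast)\ne 0$. Stacking over the $T$ threads and applying Slutsky's theorem with the conclusion of Step~2 yields $\sqrt{N}(\bm\mu_x-\mu_x^\ast\bm 1)\Rightarrow\N(0,\sigma^2\Sigma)$, where $\sigma^2=\nu^\top W\nu$ is independent of $\w$ and of the tree parameters, as asserted. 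The main obstacle is Step~1 --- lifting the single-sequence statement of Lemma~\ref{lm:pj} to joint asymptotic independence across sibling segments that share their initialization; Steps~2 and~3 are routine linear algebra and a standard delta-method argument.
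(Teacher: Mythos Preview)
Your proposal is correct and follows essentially the same route as the paper: apply Lemma~\ref{lm:pj} to obtain joint asymptotic normality and independence of the rescaled segment averages, form the linear combinations \eqref{eq:theta_ave} to read off the covariance $\Sigma_{\bt,\bt'}W$, and then apply the delta method via \eqref{eq:delta_exp} to get $\sigma^2=\nu^\top W\nu$. You are in fact slightly more explicit than the paper's sketch in isolating the tree-wide extension of Lemma~\ref{lm:pj} (joint independence across sibling segments sharing an initialization but fed disjoint data), which the paper folds into the phrase ``the $K+1$ summands are asymptotically independent'' without separately addressing the cross-thread case; your argument for it---conditional independence given the branch point plus asymptotic insensitivity to the initial point---is exactly the mechanism used in the appendix proof of Lemma~\ref{lm:pj}.
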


With Lemma \ref{lm:delta} in place, we are ready to give an informal proof of Proposition \ref{prop:t}.
\begin{proof}[Sketch proof of Proposition \ref{prop:t}]
First, we point out that the HiGrad estimator $\overline\mu_x$ in \eqref{eq:average} coincides with the \textit{least-squares estimator} of $\mu_x^\ast$. To see this, note that Lemma \ref{lm:delta} amounts to saying $\bm \mu_x \approx \mu_x^\ast \bm 1 + \bm z$ with $\bm z \sim \N(0, \sigma^2 \Sigma/N)$, which is equivalent to
\begin{equation}\label{eq:normal_approx}
\Sigma^{-\frac12} \bm\mu_x \approx (\Sigma^{-\frac12} \bm 1) \mu_x^\ast + \bm{\tilde z}.
\end{equation}
The noise term $\bm{\tilde z} = \Sigma^{-\frac12} \bm z \sim \N(0, \frac{\sigma^2}{N} \bm I)$ has been whitened. Thus, \eqref{eq:normal_approx} is a linear regression with $T$ observations and one unknown parameters $\mu_x^\ast$. The least-squares estimator of $\mu_x^\ast$ is
\[
\begin{aligned}
\widehat{\mu}_x &= (\bm 1^\top \Sigma^{-\frac12} \Sigma^{-\frac12} \bm 1)^{-1} \bm 1^\top \Sigma^{-\frac12}  \Sigma^{-\frac12} \bm\mu_x\\
& = (\bm 1^\top \Sigma^{-1} \bm 1)^{-1} \bm 1^\top \Sigma^{-1}\bm\mu_x.
\end{aligned}
\]
To proceed, recognize that $\bm{1}$ is an eigenvector of $\Sigma$ (denote by $\lambda$ the corresponding eigenvalue) due to the symmetric construction of $\Sigma$. Hence, we get
\[
\begin{aligned}
\widehat{\mu}_x & = (\bm 1^\top \Sigma^{-1} \bm 1)^{-1} \bm 1^\top \Sigma^{-1}\bm\mu_x\\
& = \left(\frac{\bm 1^\top \bm 1}{\lambda} \right)^{-1} \frac{\bm 1^\top}{\lambda} \, \bm\mu_x\\
& = \frac1{T} \sum_{\bt \in \mathcal{T}} \mu_x^{\bt},
\end{aligned}
\]
which is simply the sample mean $\overline\mu_x$. Moreover, the standard error of $\widehat{\mu}_x \equiv \overline\mu_x$ is
\[
\se_x = \frac{\widehat\sigma}{\sqrt N} \cdot \frac{\sqrt{\bm 1^\top \Sigma \bm 1}}{T},
\]
where
\[
\widehat\sigma^2 = \frac{N\|\Sigma^{-\frac12} (\bm\mu_x - \overline\mu_x \bm 1)\|^2}{T - 1} = \frac{N(\bm\mu_x^\top - \overline\mu_x \bm 1^\top) \Sigma^{-1} (\bm\mu_x - \overline\mu_x \bm 1)}{T - 1}.
\]
Note that the present form of $\se_x$ is equivalent to that in \eqref{eq:se}. Hence, the pivot
\[
\frac{\widehat\mu_x - \mu_x^\ast}{\se_x} = \frac{\overline\mu_x - \mu_x^\ast}{\se_x} = \frac{\sqrt{N}(\overline\mu_x - \mu_x^\ast)}{\widehat\sigma \sqrt{\bm 1^\top \Sigma \bm 1}\, / \, T}
\]
converges in distribution to a Student's $t$ random variable with $T - 1$ degrees of freedom. This concludes the proof.
\end{proof}


\section{Configuring HiGrad}
\label{sec:how-split}
\newcommand{\Lci}{L_{\textnormal{CI}}}

The HiGrad algorithm takes as input $(B_1, \ldots, B_K)$ and $(n_0, n_1, \ldots, n_K)$, and this section aims to shed some light on how to choose the structural parameters. The main takeaway from this section is that HiGrad with $T = 4$ performs competitively in many practical problems.

With the goal of balancing contrasting and sharing, we consider the confidence interval length as a measure to evaluate HiGrad structures. While the results in Section \ref{sec:method-1} show that all HiGrad confidence intervals have the same coverage probability asymptotically, the average length of the confidence interval allows us to distinguish between different HiGrad structures. Apparently, a shorter confidence interval is better appreciated.

Denote by $\Lci = 2t_{T - 1, 1-\frac{\alpha}{2}}\se_x$ the length of the HiGrad confidence interval. Using the optimal weights, $\sqrt{N} (\overline\mu_x - \mu^\ast_x)$ is known to converge to $\N(0, \sigma^2)$ in distribution. Hence, $\sqrt{N}\se_x/\sigma$ follows $\chi_{T-1}/\sqrt{T-1}$ asymptotically, a rescaled chi random variable. As a consequence, the expectation of $\Lci$ equals\footnote{It is possible to construct examples where $\E \Lci$ is infinite by letting $\mu_x(\theta)$ grow very fast away from $\theta^\ast$. We omit these types of examples.}
\[
\frac{(2 + o(1)) t_{T - 1, 1-\frac{\alpha}{2}} \sigma \E \chi_{T-1}}{\sqrt{N(T - 1)}} = \frac{(2\sqrt{2}+o(1)) \sigma}{\sqrt{N}} \cdot \frac{t_{T-1, 1-\frac{\alpha}{2}} \Gamma\left(\frac{T}{2}\right)}{\sqrt{T-1}\, \Gamma\left( \frac{T-1}{2}\right)}.
\]
The expression above reveals that the average length depends on the tree structure only through $T$, the number of threads. Moreover, one can show that, for any fixed $0 < \alpha < 1$,
\begin{equation}\label{eq:decrease}
\frac{t_{T-1, 1-\frac{\alpha}{2}} \Gamma\left(\frac{T}{2}\right)}{\sqrt{T-1}\, \Gamma\left( \frac{T-1}{2}\right)}
\end{equation}
is a decreasing function of $T \ge 2$.

A direct consequence of this decreasing monotonicity is: the larger the number $T$ of HiGrad threads, the shorter the confidence interval on average (asymptotically). The literal meaning of this sentence suggests splitting more---or, equivalently, seeking more contrast---would imply a better confidence interval. From a practical perspective, however, splitting too much is not necessarily effective, and it could even lead the HiGrad results to be untrustworthy at worst because some segments would \textit{not} be long enough to ensure the normality in Lemma \ref{lm:pj}. In particular, the thread length $n_0 + n_1 + \cdots + n_K$ would not be long enough to guarantee convergence if the width $T$ is too large. This point is consistent with Figure \ref{fig:leng}, where the function in \eqref{eq:decrease} decreases noticeably when $T$ is small. However, the marginal gain by increasing $T$ becomes tiny once $T$ exceeds 4. In fact, the value at $T=4$ is $1.318$ times of the value at $T = \infty$
for $\alpha = 0.1$. Moreover, with a large $T$, either some segments or every thread would be relatively short. The former case undermines the correlation structures given by \eqref{eq:sigma} and thus might yield an undesired coverage probability of the HiGrad confidence interval, while the latter might even fail to achieve satisfactory convergence to the minimizer.

\begin{figure}[ht]
\begin{center}
\input{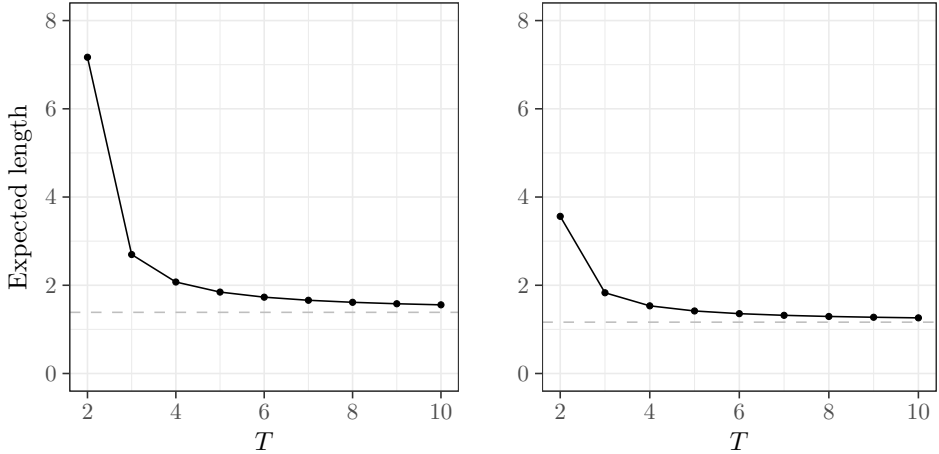}
\caption{Rescaled expected length of confidence intervals versus $T$, the number of HiGrad threads. The left plot and right plot correspond to $\alpha = 0.05$ and $\alpha = 0.1$, respectively. The gray dashed lines indicate the confidence interval lengths at $T = \infty$.}
\label{fig:leng}
\end{center}
\end{figure}

To generate $T = 4$ threads in HiGrad, one could either set $K = 1, B_1 = 4$, or choose $K = 2, B_1 = B_2 = 2$. Since \textit{longer} HiGrad threads in general lead to \textit{better convergence}, we can distinguish between the two configurations by the length of threads. Explicitly, the latter case yields a longer thread in the case of an equal length of all threads and thus is preferred in this regard. More generally, let $T$ be a large number, and it is clear that the thread length is $2N/(T+1)$ in the setup where $n_0 = n_1, B_1 = T$, and $K = 1$. In contrast, if $n_0 = n_1 = \cdots = n_K$ and $B_1 = B_2 = \cdots = B_K = 2$, where $K = \log_2 T$, from \eqref{eq:sum_N} a little analysis shows that the thread length is
\[
\frac{(K+1) N}{ 2^{K+1} - 1} \approx \frac{(\log_2 T+1)N}{2T} = O\left( \frac{\log T }{T} \right) N,
\]
which is an order of magnitude larger than $O(1/T) N$ as in the direct splitting case. This comparison indeed demonstrates the benefit brought about by \textit{sharing} segments. As one of the two core ideas in HiGrad, sharing segments at early levels elongates threads.

In light of the above, an R package called \texttt{higrad} implementing this procedure sets the default tree structure to $K = 2, B_1 = B_2 = 2, T = 4$. This package is available at \url{https://cran.r-project.org/web/packages/higrad/}. From a practical perspective, however, the optimal HiGrad tree structure might vary with different problems, which necessitates the presentation of HiGrad in its most general form. To approach the optimal performance, one needs to choose a value of $T$ that is not too large or small. For example, one possibility is to leverage experiments to examine whether the HiGrad iterates bounce around some point. If not, it suggests that $T$ might be too large such that the SGD algorithm is not convergent~\citep{sordello2019robust}.


In addition to the tree structure, we still need to specify $n_0, n_1, \ldots, n_K$ under the constraint \eqref{eq:sum_N}. On the one hand, although the threads would be long if the segment length $n_k$ decreases fast as level $k$ increases, the covariance matrix given in \eqref{eq:sigma} might suffer from a bad condition number due to strong correlations among threads. As a result, the standard error given by \eqref{eq:se} might not be accurate. On the other hand, a rapidly increasing segment length would lead to a short thread. To balance the two considerations, the default values are set to $n_0 = n_1 = n_2 = N/7$ in the \texttt{higrad} package. The performance of this
configuration of the HiGrad algorithm is corroborated by extensive simulation studies shown in Section \ref{sec:simulations}, with satisfactory results across a range of examples. That being said, it is worth mentioning that this set of default parameters is employed in recognition of several considerations and constraints, and a different HiGrad configuration might be preferred in other settings.


\section{Extensions}
\label{sec:extensions}
In this section, we showcase a number of extensions of HiGrad to incorporate some practical considerations and to improve efficiency. These extensions follow from results that have been developed in Section \ref{sec:method-1} without much additional effort, but might bring appreciable improvements in certain settings.

\paragraph{Flexible tree structures.} In its present formulation, the HiGrad tree is grown symmetrically across different branches. In fact, asymmetry is permitted, allowing for more flexibility to incorporate certain practical needs. Explicitly, after the first segment gets split into $B_1$ branches, we can build each subtree differently, with possibly different segment lengths and even various depths. Proposition \ref{prop:t} remains to hold if all the segments in the fully grown tree are asymptotically proportional to each other. 

An asymmetric HiGrad tree is favorable if it is used in a distributed environment once split. This point recognizes that, in distributed computing, datasets in their local machines are typically of different sizes and thus a symmetric HiGrad tree would inevitably incur heavy communication cost to guarantee consistency across all threads. Moreover, the number of total data points (or epochs in the finite population setting) is often unknown or not fixed a priori. An asymmetric structure admits more degrees of freedom to deal with such cases.

\paragraph{Batch size.} Mini-batch gradient descent is a trade-off between SGD and gradient descent. To update the iterate at each time, mini-batch gradient descent takes the average of the gradient over a certain number of data points so as to reduce the variance of the gradient. As a major advantage, it has been shown that mini-batch SGD outperforms the vanilla SGD in the low signal-to-noise ratio regime \citep{yin2018}. For the HiGrad algorithm, theoretical guarantees including Theorems \ref{prop:conf} and \ref{prop:pred} persist if iterations are updated in a mini-batch fashion. An interesting question for future research is, however, to determine how the mini-batch size affects the optimal HiGrad tree structures.

\paragraph{Multivariate generalizations.} The HiGrad algorithm seamlessly applies to the case where the function to estimate $\bm\mu_x$ is multivariate. In particular, our main theoretical result, Proposition \ref{prop:t}, and the subsequent Theorems \ref{prop:conf} and \ref{prop:pred} admit multivariate versions respectively, as follows. Denote by $p$ the dimension of $\bm\mu_x$ and let $\bM_x$ be a $T \times p$ matrix consisting of $T$ rows of $\bm\mu_x(\overline\theta_{\bt})$ for all threads $\bt$. As earlier in the univariate case, consider the simple multivariate linear regression
\[
\Sigma^{-\frac12} \bM_x = \Sigma^{-\frac12} \bm{1} (\bm\mu_x^\ast)^\top + \tilde{\bz},
\]
where $\bm\mu_x^\ast = \bm\mu_x(\theta^\ast)$. Note that the $T$ rows of $\tilde{\bm z}$ are approximately \iid~normal vectors. Omitting some technical details, we find that the least-squares solution is
\[
\overline{\bm\mu}_x = \frac1{T} \sum_{\bt \in \mathcal T} \bm\mu_x(\overline\theta_{\bt}),
\]
the sample covariance of $\tilde{\bz}$ is
\[
\widehat{\bm S}_x = \frac1{T-1} (\bM_x - \bm{1} \overline{\bm\mu}_x ^\top)^\top \Sigma^{-1} (\bM_x - \bm{1} \overline{\bm\mu}_x ^\top ),
\]
and Hotelling's \textit{T}-squared statistic reads
\begin{equation}\label{eq:T_hotelling}
\frac{(\bm{1}^\top \Sigma^{-\frac12} \bm{1})^2 \left( \overline{\bm\mu}_x -  \bm\mu_x^\ast \right)^\top \widehat{\bm S}^{-1}_x \left( \overline{\bm\mu}_x -  \bm\mu_x^\ast \right)}{T}.
\end{equation}

The result below generalizes Proposition \ref{prop:t} to the multivariate setting where the Jacobian $\frac{\partial \bm{\mu}_x(\theta)}{\partial \theta}$ exists and has full rank in a neighborhood of $\theta^\ast$. 

\begin{proposition}\label{prop:t2}
Under the assumptions of Proposition \ref{prop:t}, the HiGrad procedure ensures that the statistic \eqref{eq:T_hotelling} asymptotically follows Hotelling's $T$-squared distribution $T^2_{p, T-1}$ as $N \goto \infty$.
\end{proposition}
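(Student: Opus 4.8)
The plan is to mirror the argument behind Proposition~\ref{prop:t}, upgrading the scalar delta-method expansion \eqref{eq:delta_exp} and the Student-$t$ calculus to their vector counterparts. Write $J := \frac{\partial \bm\mu_x}{\partial\theta}\big|_{\theta=\theta^\ast}\in\R^{p\times d}$; by assumption $J$ has full rank $p$. First I would expand, for each thread $\bt=(b_1,\ldots,b_K)$ with $\bt_k := (b_1,\ldots,b_k)$,
\[
\bm\mu_x(\overline\theta_{\bt}) = \bm\mu_x^\ast + J\big(\overline\theta_{\bt}-\theta^\ast\big) + o_{\P}(N^{-1/2}) = \bm\mu_x^\ast + J\sum_{k=0}^K w_k\big(\overline\theta^{\bt_k}-\theta^\ast\big) + o_{\P}(N^{-1/2}),
\]
using \eqref{eq:theta_ave}, the remainder being $o_{\P}(N^{-1/2})$ because each $\overline\theta^{\bt_k}-\theta^\ast = O_{\P}(N^{-1/2})$. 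Then, using the asymptotic normality and cross-segment independence of the averaged iterates supplied by Lemma~\ref{lm:pj} (extended from a single thread to the whole tree, exactly as in the proof of Lemma~\ref{lm:delta}), the same bookkeeping that produced \eqref{eq:sigma} shows that for two threads $\bt,\bt'$ agreeing on their first $q$ segments, the asymptotic cross-covariance of $\sqrt N(\bm\mu_x(\overline\theta_{\bt})-\bm\mu_x^\ast)$ and $\sqrt N(\bm\mu_x(\overline\theta_{\bt'})-\bm\mu_x^\ast)$ equals $\big(\sum_{k=0}^q \frac{w_k^2 N}{n_k}\big) J W J^\top = \Sigma_{\bt,\bt'}\,\Psi$, where $\Psi := J W J^\top$ is positive definite and $W = H^{-1}VH^{-1}$ is the Ruppert--Polyak sandwich matrix. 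Equivalently, $\sqrt N\,\mathrm{vec}\big(\bM_x-\bm 1(\bm\mu_x^\ast)^\top\big)\Longrightarrow \N(0,\Sigma\otimes\Psi)$, the matrix-valued analogue of Lemma~\ref{lm:delta}. (Here I deliberately use $q$, not $p$, for the number of shared segments, since $p$ already denotes the dimension of $\bm\mu_x$.)

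Next I would whiten and reduce to a Gaussian linear model. Because $\bm 1$ is an eigenvector of $\Sigma$, say $\Sigma\bm 1 = \lambda\bm 1$ with $\lambda>0$, the matrix $\Sigma^{-\frac12}$ commutes with the centering projection $P := \bm I - \frac1T\bm 1\bm 1^\top$, and $\Sigma^{-\frac12}\bM_x = \Sigma^{-\frac12}\bm 1(\bm\mu_x^\ast)^\top + \tilde{\bz}$ with the $T$ rows of $\tilde{\bz}$ asymptotically \iid~$\N(0,\Psi/N)$. Running ordinary least squares on the right-hand side (and using $\Sigma^{-\frac12}\bm 1 = \lambda^{-1/2}\bm 1$, as in the scalar case) gives $\overline{\bm\mu}_x = \frac1T\sum_{\bt}\bm\mu_x(\overline\theta_{\bt})$ as the estimator of $\bm\mu_x^\ast$, and $\widehat{\bm S}_x = \frac1{T-1}(\bM_x-\bm 1\overline{\bm\mu}_x^\top)^\top\Sigma^{-1}(\bM_x-\bm 1\overline{\bm\mu}_x^\top)$ as the residual scatter on $T-1$ degrees of freedom. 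On the whitened scale the ``mean part'' $\bm 1^\top\sqrt N\tilde{\bz}$ and the ``residual part'' $P\sqrt N\tilde{\bz}$ are a fixed continuous image of the matrix-normal limit of $\sqrt N\tilde{\bz}$, so in the limit they are independent, $\sqrt N(\overline{\bm\mu}_x-\bm\mu_x^\ast)\Longrightarrow\N(0,\frac{\lambda}{T}\Psi)$ and $N(T-1)\widehat{\bm S}_x\Longrightarrow W_p(T-1,\Psi)$. A short computation, using $\bm 1^\top\Sigma^{-\frac12}\bm 1 = T/\sqrt\lambda$ so that $(\bm 1^\top\Sigma^{-\frac12}\bm 1)^2/T = T/\lambda$, then identifies the statistic \eqref{eq:T_hotelling} with $(T-1)$ times $(\text{whitened mean})^\top(\text{Wishart})^{-1}(\text{whitened mean})$, which is by definition distributed as $T^2_{p,T-1}$ in the limit. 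Slutsky's theorem absorbs the $o_{\P}(N^{-1/2})$ remainders, and the continuous mapping theorem, applied on the event that $\widehat{\bm S}_x$ is invertible --- which has probability tending to one since $\Psi$ is positive definite and $T-1\ge p$ --- finishes the argument.

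The main obstacle is the passage from ``exactly Gaussian'' to ``asymptotically Gaussian'': the independence of $\overline{\bm\mu}_x$ and $\widehat{\bm S}_x$ and the Wishart law of $N(T-1)\widehat{\bm S}_x$ are classical only for exact multivariate normal data, whereas here they hold only in the weak limit. I would make this rigorous by transporting everything to the whitened matrix $\sqrt N\tilde{\bz}$, which converges jointly (as a $T\times p$ random matrix) to a matrix normal with \iid~rows $\N(0,\Psi)$; the pair $\big(\bm 1^\top\sqrt N\tilde{\bz},\,P\sqrt N\tilde{\bz}\big)$, and hence the $T^2$ statistic, is a fixed continuous functional of $\sqrt N\tilde{\bz}$ on the invertibility event, so its weak limit is the value of that functional at a genuine matrix normal, for which Cochran-type independence and the Wishart law are available off the shelf. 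A secondary, routine point is that \eqref{eq:T_hotelling} is only defined when $T>p$, which should be listed with the standing regularity conditions; every remaining step is a verbatim multivariate analogue of the scalar bookkeeping already carried out for Proposition~\ref{prop:t}.
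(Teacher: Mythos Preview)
Your proposal is correct and follows precisely the route the paper intends: the paper itself gives no separate proof of Proposition~\ref{prop:t2} beyond the multivariate linear-regression setup $\Sigma^{-\frac12}\bM_x = \Sigma^{-\frac12}\bm 1(\bm\mu_x^\ast)^\top + \tilde{\bz}$ with approximately \iid~Gaussian rows, and simply declares it a generalization of Proposition~\ref{prop:t}. Your write-up actually supplies more care than the paper does---the Kronecker covariance $\Sigma\otimes\Psi$, the eigenvector identity $\Sigma^{-\frac12}\bm 1=\lambda^{-1/2}\bm 1$, the explicit asymptotic independence of $\bm 1^\top\sqrt N\tilde{\bz}$ and $P\sqrt N\tilde{\bz}$ via continuous mapping, and the need for $T-1\ge p$---all of which the paper leaves implicit.
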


Above, note that $T^2_{p, T-1}$ is the same as the rescaled $F$ random variable $\frac{p(T-1)}{T - p} F_{p, T - p}$. Multivariate analogs of Theorems \ref{prop:conf} and \ref{prop:pred} for attaching confidence statements to the HiGrad estimator $\overline{\bm\mu}_x$ immediately follow from Proposition \ref{prop:t2}. For instance, an asymptotically $1 - \alpha$-coverage confidence region for $\bm\mu_x^\ast$ is
\[
\left\{ \bm\mu: \frac{(\bm{1}^\top \Sigma^{-\frac12} \bm{1})^2 \left( \overline{\bm\mu}_x - \bm\mu \right)^\top \widehat{\bm S}^{-1}_x \left( \overline{\bm\mu}_x - \bm\mu \right)}{T} \le T^2_{p, T-1, 1-\alpha}\right\}.
\]
Likewise, a prediction region for $\overline{\bm\mu}_x'$ obtained from a fresh dataset takes the same form except that $2T^2_{p, T-1, 1-\alpha}$ is used in place of $T^2_{p, T-1, 1-\alpha}$ above.

\paragraph{Burn-in and restarting.} Discarding a small portion of the iterates at the beginning, a trick referred to as burn-in, is widely adopted in practice (see, for example, \cite{chen2016,li2017statistical,chee2017convergence}). The rationale for using burn-in is that initial iterates can be far from the minimizer and thus it might improve the accuracy by returning the average of the last iterates. More concretely, burn-in is shown to improve the rate of convergence for non-smooth objectives \citep{rakhlin2012making}. As a generalization of SGD, HiGrad can easily incorporate this trick by, for example, discarding initial iterates in the first segment. In addition, the HiGrad algorithm seamlessly employs a similar trick called restarting in first-order methods \citep{o2015adaptive,su2016differential}, which resets the step size back to $\gamma_1$ after a certain
number of iterations. For example, restarting, in the HiGrad setting, can be applied at the beginning of each segment.


\usetikzlibrary{shapes.geometric,calc}

\newcommand\score[2]{
\pgfmathsetmacro\pgfxa{#1+1}
\tikzstyle{scorestars}=[star, star points=5, star point ratio=2.25, draw,inner sep=0.1em,anchor=outer point 3]
\begin{tikzpicture}[baseline]
  \foreach \i in {1,...,#2} {
    \pgfmathparse{(\i<=#1?"black":"white")}
    \edef\starcolor{\pgfmathresult}
    \draw (\i*.8em,0) node[name=star\i,scorestars,fill=\starcolor]  {};
   }
   \pgfmathparse{(#1>int(#1)?int(#1+1):0}
   \let\partstar=\pgfmathresult
   \ifnum\partstar>0
     \pgfmathsetmacro\starpart{#1-(int(#1))}
     \path [clip] ($(star\partstar.outer point 3)!(star\partstar.outer point 2)!(star\partstar.outer point 4)$) rectangle 
    ($(star\partstar.outer point 2 |- star\partstar.outer point 1)!\starpart!(star\partstar.outer point 1 -| star\partstar.outer point 5)$);
     \fill (\partstar*1em,0) node[scorestars,fill=black]  {};
   \fi

,\end{tikzpicture}
}

\section{Numerical Examples}
\label{sec:simulations}

\subsection{Simulations}
\label{sec:simulations-1}

The empirical performance of HiGrad is evaluated in simulations from three perspectives: accuracy, coverage, and informativeness. Explicitly, accuracy is measured by the distance between the estimator averaged from all HiGrad threads and the true parameter, coverage is measured by the probability that the HiGrad confidence interval contains the true value, and informativeness is measured by the average length of the confidence interval.

Using the optimal weights, HiGrad is applied to linear regression and logistic regression. The former generates $Y$ from $\N(\mu_X(\theta^\ast), 1)$, whereas the latter generates $Y = 1$ with probability
\[
\frac{\e^{\mu_X(\theta^\ast)}}{\e^{\mu_X(\theta^\ast)} + 1}
\]
and $Y = 0$ otherwise, both conditional on the feature vector $X$. The quantity to be estimated in both cases takes the form $\mu_x(\theta) = x^\top \theta$ and $X$ follows a multivariate normal distribution $\N(0, I_d)$, where the dimension $d = 50$. The function $f(\theta, z)$ is taken to be the negative log-likelihood (see Section \ref{sec:problem-statement}). Upon a query from the HiGrad algorithm, a pair of $(X, Y)$ is generated according to the models described above. The step size $\gamma_j$ is set to $0.1 j^{-0.55}$ and $0.4 j^{-0.55}$ for linear regression and logistic regression, respectively, and $\theta_0$ is initialized randomly with a $N(0, 0.01I)$ distribution. Three types of the true coefficients $\theta^\ast$ are examined: a null case where $\theta_1^\ast = \cdots = \theta_d^\ast = 0$, a dense case where $\theta_1^\ast = \cdots = \theta_d^\ast = \frac{1}{\sqrt{d}}$, and a sparse case where $\theta_1^\ast = \cdots = \theta_{d/10}^\ast = \sqrt{10/d}$, $\theta_{d/10+1}^\ast = \cdots = \theta_d^\ast = 0$. Table \ref{table:configs} presents the HiGrad configurations considered in the simulation studies. Note that all of the four HiGrad configurations have $T = 4$ threads.

\begin{table}[H]
\centering
\begin{tabular}{>{}m{0.7in} | >{\arraybackslash}m{2.8in} }
\hline\hline
\includegraphics[height = 2em]{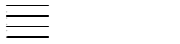} & $K = 1, B_1 = 4, n_0 = 0, n_1 = N/4$ \\ 
\includegraphics[height = 2em]{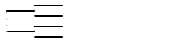} & $K = 2, B_1 = B_2 = 2, n_0 = 0, n_1 = N/6$ \\
\includegraphics[height = 2em]{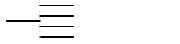} & $K = 1, B_1 = 4, n_0 = n_1 = N/5$ \\ 
\includegraphics[height = 2em]{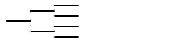} & $K = 2, B_1 = B_2 = 2, n_0 = n_1 = N/7$\\
\hline\hline
\end{tabular}
\caption{Configurations of HiGrad in the simulations.}
\label{table:configs}
\end{table}

\begin{figure}[t]
\begin{center}
\resizebox{\textwidth}{!}{
\input{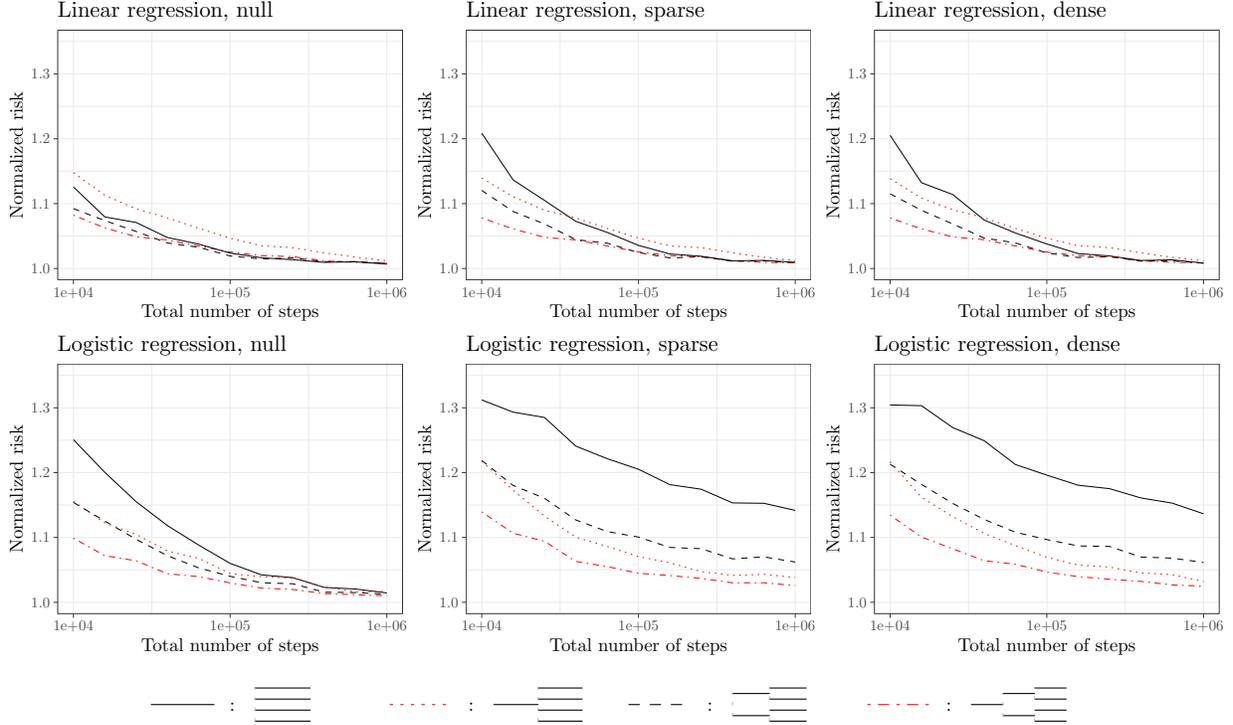}
}

\medskip
\begin{tikzpicture}[x=1pt,y=1pt]
\definecolor{fillColor}{RGB}{255,255,255}
\path[use as bounding box,fill=fillColor,fill opacity=0.00] (0,0) rectangle (361.35, 18.07);
\begin{scope}
\path[clip] (  0.00,  0.00) rectangle ( 90.34, 18.07);
\definecolor{drawColor}{RGB}{0,0,0}

\path[draw=drawColor,line width= 0.4pt,line join=round,line cap=round] ( 45.17,  2.76) -- ( 66.08,  2.76);

\path[draw=drawColor,line width= 0.4pt,line join=round,line cap=round] ( 45.17,  6.94) -- ( 66.08,  6.94);

\path[draw=drawColor,line width= 0.4pt,line join=round,line cap=round] ( 45.17, 11.12) -- ( 66.08, 11.12);

\path[draw=drawColor,line width= 0.4pt,line join=round,line cap=round] ( 45.17, 15.31) -- ( 66.08, 15.31);
\definecolor{drawColor}{RGB}{190,190,190}

\path[draw=drawColor,line width= 0.4pt,dash pattern=on 1pt off 3pt ,line join=round,line cap=round] ( 45.17,  2.76) -- ( 45.17, 15.31);
\definecolor{drawColor}{RGB}{0,0,0}

\path[draw=drawColor,line width= 0.4pt,line join=round,line cap=round] (  5.94,  9.03) -- ( 29.70,  9.03);

\node[text=drawColor,anchor=base west,inner sep=0pt, outer sep=0pt, scale=  0.66] at ( 35.64,  6.76) {:};
\end{scope}
\begin{scope}
\path[clip] ( 90.34,  0.00) rectangle (180.67, 18.07);
\definecolor{drawColor}{RGB}{0,0,0}

\path[draw=drawColor,line width= 0.4pt,line join=round,line cap=round] (135.51,  9.03) -- (152.24,  9.03);

\path[draw=drawColor,line width= 0.4pt,line join=round,line cap=round] (152.24,  2.76) -- (168.96,  2.76);

\path[draw=drawColor,line width= 0.4pt,line join=round,line cap=round] (152.24,  6.94) -- (168.96,  6.94);

\path[draw=drawColor,line width= 0.4pt,line join=round,line cap=round] (152.24, 11.12) -- (168.96, 11.12);

\path[draw=drawColor,line width= 0.4pt,line join=round,line cap=round] (152.24, 15.31) -- (168.96, 15.31);
\definecolor{drawColor}{RGB}{190,190,190}

\path[draw=drawColor,line width= 0.4pt,dash pattern=on 1pt off 3pt ,line join=round,line cap=round] (152.24,  2.76) -- (152.24, 15.31);
\definecolor{drawColor}{RGB}{255,0,0}

\path[draw=drawColor,line width= 0.4pt,dash pattern=on 1pt off 3pt ,line join=round,line cap=round] ( 96.28,  9.03) -- (120.04,  9.03);
\definecolor{drawColor}{RGB}{0,0,0}

\node[text=drawColor,anchor=base west,inner sep=0pt, outer sep=0pt, scale=  0.66] at (125.98,  6.76) {:};
\end{scope}
\begin{scope}
\path[clip] (180.67,  0.00) rectangle (271.01, 18.07);
\definecolor{drawColor}{RGB}{0,0,0}

\path[draw=drawColor,line width= 0.4pt,line join=round,line cap=round] (225.84,  4.85) -- (239.78,  4.85);

\path[draw=drawColor,line width= 0.4pt,line join=round,line cap=round] (225.84, 13.22) -- (239.78, 13.22);

\path[draw=drawColor,line width= 0.4pt,line join=round,line cap=round] (239.78,  2.76) -- (253.73,  2.76);

\path[draw=drawColor,line width= 0.4pt,line join=round,line cap=round] (239.78,  6.94) -- (253.73,  6.94);

\path[draw=drawColor,line width= 0.4pt,line join=round,line cap=round] (239.78, 11.12) -- (253.73, 11.12);

\path[draw=drawColor,line width= 0.4pt,line join=round,line cap=round] (239.78, 15.31) -- (253.73, 15.31);
\definecolor{drawColor}{RGB}{190,190,190}

\path[draw=drawColor,line width= 0.4pt,dash pattern=on 1pt off 3pt ,line join=round,line cap=round] (225.84,  4.85) -- (225.84, 13.22);

\path[draw=drawColor,line width= 0.4pt,dash pattern=on 1pt off 3pt ,line join=round,line cap=round] (239.78,  2.76) -- (239.78,  6.94);

\path[draw=drawColor,line width= 0.4pt,dash pattern=on 1pt off 3pt ,line join=round,line cap=round] (239.78, 11.12) -- (239.78, 15.31);
\definecolor{drawColor}{RGB}{0,0,0}

\path[draw=drawColor,line width= 0.4pt,dash pattern=on 4pt off 4pt ,line join=round,line cap=round] (186.61,  9.03) -- (210.37,  9.03);

\node[text=drawColor,anchor=base west,inner sep=0pt, outer sep=0pt, scale=  0.66] at (216.31,  6.76) {:};
\end{scope}
\begin{scope}
\path[clip] (271.01,  0.00) rectangle (361.35, 18.07);
\definecolor{drawColor}{RGB}{0,0,0}

\path[draw=drawColor,line width= 0.4pt,line join=round,line cap=round] (316.18,  9.03) -- (328.13,  9.03);

\path[draw=drawColor,line width= 0.4pt,line join=round,line cap=round] (328.13,  4.85) -- (340.08,  4.85);

\path[draw=drawColor,line width= 0.4pt,line join=round,line cap=round] (328.13, 13.22) -- (340.08, 13.22);

\path[draw=drawColor,line width= 0.4pt,line join=round,line cap=round] (340.08,  2.76) -- (352.03,  2.76);

\path[draw=drawColor,line width= 0.4pt,line join=round,line cap=round] (340.08,  6.94) -- (352.03,  6.94);

\path[draw=drawColor,line width= 0.4pt,line join=round,line cap=round] (340.08, 11.12) -- (352.03, 11.12);

\path[draw=drawColor,line width= 0.4pt,line join=round,line cap=round] (340.08, 15.31) -- (352.03, 15.31);
\definecolor{drawColor}{RGB}{190,190,190}

\path[draw=drawColor,line width= 0.4pt,dash pattern=on 1pt off 3pt ,line join=round,line cap=round] (328.13,  4.85) -- (328.13, 13.22);

\path[draw=drawColor,line width= 0.4pt,dash pattern=on 1pt off 3pt ,line join=round,line cap=round] (340.08,  2.76) -- (340.08,  6.94);

\path[draw=drawColor,line width= 0.4pt,dash pattern=on 1pt off 3pt ,line join=round,line cap=round] (340.08, 11.12) -- (340.08, 15.31);
\definecolor{drawColor}{RGB}{255,0,0}

\path[draw=drawColor,line width= 0.4pt,dash pattern=on 1pt off 3pt on 4pt off 3pt ,line join=round,line cap=round] (276.95,  9.03) -- (300.71,  9.03);
\definecolor{drawColor}{RGB}{0,0,0}

\node[text=drawColor,anchor=base west,inner sep=0pt, outer sep=0pt, scale=  0.66] at (306.65,  6.76) {:};
\end{scope}
\end{tikzpicture}
\caption{Estimation accuracy of HiGrad against the total number of iteration steps. The risk is averaged over 100 replicates and is further normalized by that of vanilla SGD. The four HiGrad configurations are described in Table~\ref{table:configs}.
}
\label{fig:estimation_accuracy}
\end{center}
\end{figure}

\paragraph{Accuracy.} Denote by $\overline\theta$ the average of all HiGrad thread estimates \eqref{eq:theta_ave} and record $\|\overline\theta - \theta^\ast\|$ as the estimation risk. The reported risks are averaged over 100 replicates, each with a total number $N$ of iterations varying from $10^4$ to $10^6$. Shown in Figure \ref{fig:estimation_accuracy} are plots of the HiGrad risks normalized by those of SGD in the same setting as a function of the number of steps. These plots demonstrate that, in general, a configuration with longer thread tends to yield smaller risk. In particular, the fourth configuration (dash-dotted red line) is with the longest thread length $3N/7$, indeed having the lowest risk in all six plots. On the contrary, the shortest thread length $N/4$ is from the first configuration (solid black
line), which yields the highest risk in most cases. As an aside, we point out that the case of a null $\theta^\ast$ appears to have the most accurate HiGrad results. This is not surprising as the algorithm is initialized near the origin.

\begin{figure}[t]
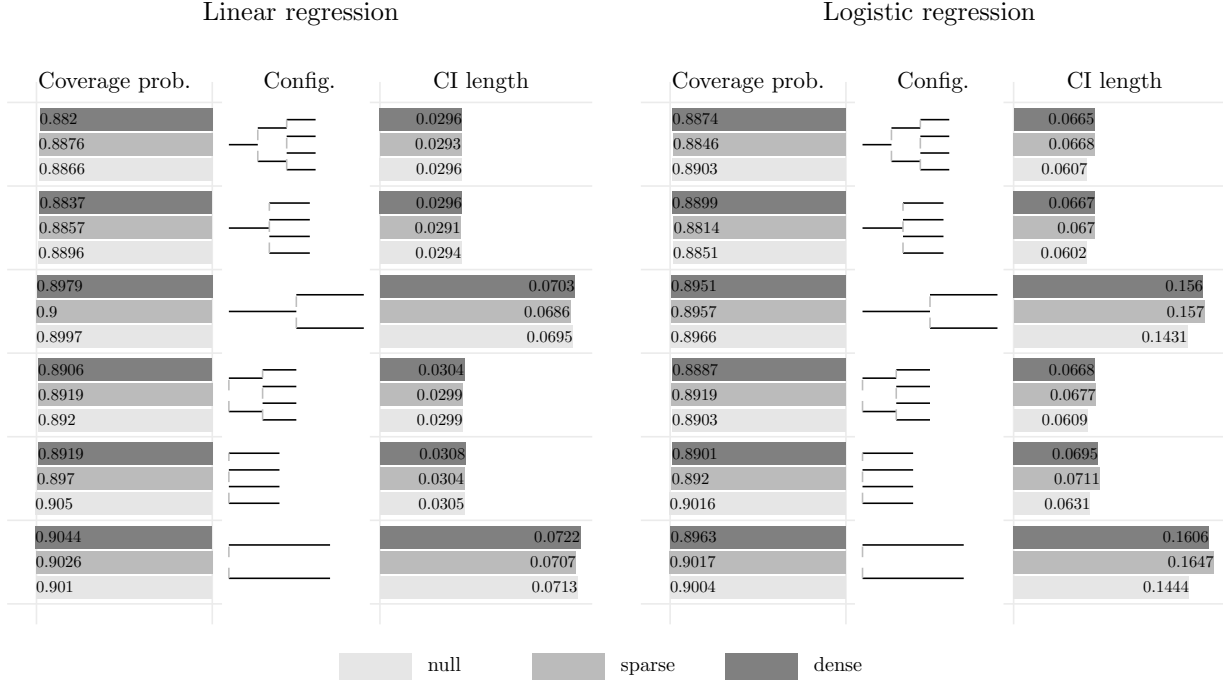

\begin{center}
\begin{tabular}{cc}
\small{Linear regression} & \small{Logistic regression} \\ 
\\
\input{figs/fig_simulation_lm.tex} & \input{figs/fig_simulation_logistic.tex}\\
\end{tabular}
\begin{tikzpicture}[x=1pt,y=1pt]
\definecolor{fillColor}{RGB}{255,255,255}
\path[use as bounding box,fill=fillColor,fill opacity=0.00] (0,0) rectangle (216.81, 10.84);
\begin{scope}
\path[clip] (  0.00,  0.00) rectangle (216.81, 10.84);
\definecolor{fillColor}{RGB}{230,230,230}

\path[fill=fillColor] (  8.03,  0.40) --
	( 35.40,  0.40) --
	( 35.40, 10.44) --
	(  8.03, 10.44) --
	cycle;
\definecolor{drawColor}{RGB}{0,0,0}

\node[text=drawColor,anchor=base west,inner sep=0pt, outer sep=0pt, scale=  0.70] at ( 41.40,  3.81) {null};
\definecolor{fillColor}{RGB}{187,187,187}

\path[fill=fillColor] ( 81.03,  0.40) --
	(108.40,  0.40) --
	(108.40, 10.44) --
	( 81.03, 10.44) --
	cycle;

\node[text=drawColor,anchor=base west,inner sep=0pt, outer sep=0pt, scale=  0.70] at (114.40,  3.81) {sparse};
\definecolor{fillColor}{RGB}{128,128,128}

\path[fill=fillColor] (154.03,  0.40) --
	(181.40,  0.40) --
	(181.40, 10.44) --
	(154.03, 10.44) --
	cycle;

\node[text=drawColor,anchor=base west,inner sep=0pt, outer sep=0pt, scale=  0.70] at (187.40,  3.81) {dense};
\end{scope}
\end{tikzpicture}
\end{center}
\caption{Coverage probability and length of the HiGrad confidence intervals. In both panels, the middle column presents the configuration graphically; the left column shows the coverage probabilities 
(with the nominal coverage 90\% indicated by a vertical gray line); the right column illustrates the average lengths of the confidence intervals. The color of the bar indicates the type of true parameters $\theta^\ast$, as shown in the legend at the bottom. 
}\label{fig:simulation}
\end{figure}

\paragraph{Coverage and informativeness.} In addition to the four configurations listed in Table~\ref{table:configs}, this exploration includes two more configurations, both with $K=1$ and $B_1 = 2$. The first is set to $n_0 = 0, n_1 = N/2$, and the second is set $n_0 = n_1 = N/3$, where $N = 10^6$. Given a configuration, the HiGrad procedure is performed for $L = 100$ times, each yielding a 90\% confidence interval $\mathrm{CI}_{\ell}$ for $\mu_X(\theta^\ast)$ with $X$ being sampled from $\N(0, I_d)$. Figure \ref{fig:simulation} shows a concise summary of the results in the form of bar plots, which average the empirical coverage probabilities
\[
\frac{1}{L}\sum_{\ell = 1}^{L} \mathbbm1(\mu_X(\theta^\ast)\in \mathrm{CI}_{\ell})
\]
and the average confidence interval length both over 100 independent copies of $X$. 
Note that for logistic regression, the confidence interval length is on the scale of $\mu_x$, the logit value, instead of the probability.
For all configurations, models and true parameter types, the coverage probabilities are close to the nominal level 90\%. In particular, the HiGrad configuration with two directly split threads (at the bottom of the plots) attains the coverage probability that is closest to 90\%. However, its confidence interval is the longest among all the six configurations (comparable to the configuration with two intermediately split threads). The configurations with $T = 4$ threads give similar levels of informativeness, which is consistent with the decreasing monotonicity of \eqref{eq:decrease}.

\paragraph{Sensitivity analysis of tuning parameters.} In addition to the HiGrad tree configurations, the practitioners need to determine the step sizes for running the algorithm. Following our theoretical results and aforementioned experimental settings, our focus is on step sizes taking the form
\[
\gamma_j = \frac{c_1}{(j + c_2)^{\alpha}},
\]
for constants $0.5 < \alpha < 1, c_1 > 0$, and $c_2$. Using the same setting as Figure~\ref{fig:simulation}, Figure~\ref{fig:stepsizes} shows that the performance of HiGrad is satisfactory and consistent over a range of the triplet of tuning parameters $(\alpha,c_1,c_2)$.

\begin{figure}[t]
\begin{center}
\begin{tabular}{cc}
\small{Linear regression} & \small{Logistic regression} \\ 
\\
\input{figs/fig_simulation_lm_add.tex} & \input{figs/fig_simulation_logistic_add.tex}\\
\end{tabular}
\begin{tikzpicture}[x=1pt,y=1pt]
\definecolor{fillColor}{RGB}{255,255,255}
\path[use as bounding box,fill=fillColor,fill opacity=0.00] (0,0) rectangle (361.35, 18.07);
\begin{scope}
\path[clip] (  0.00,  0.00) rectangle (361.35, 18.07);
\definecolor{fillColor}{RGB}{230,230,230}

\path[fill=fillColor] ( 13.38, 10.71) --
	( 59.01, 10.71) --
	( 59.01, 17.40) --
	( 13.38, 17.40) --
	cycle;
\definecolor{drawColor}{RGB}{0,0,0}

\node[text=drawColor,anchor=base west,inner sep=0pt, outer sep=0pt, scale=  0.70] at ( 65.01, 12.45) {(0.51,0.1,0)};
\definecolor{fillColor}{gray}{0.84}

\path[fill=fillColor] (135.05, 10.71) --
	(180.67, 10.71) --
	(180.67, 17.40) --
	(135.05, 17.40) --
	cycle;

\node[text=drawColor,anchor=base west,inner sep=0pt, outer sep=0pt, scale=  0.70] at (186.67, 12.45) {(0.55,0.1,0)};
\definecolor{fillColor}{gray}{0.77}

\path[fill=fillColor] (256.72, 10.71) --
	(302.34, 10.71) --
	(302.34, 17.40) --
	(256.72, 17.40) --
	cycle;

\node[text=drawColor,anchor=base west,inner sep=0pt, outer sep=0pt, scale=  0.70] at (308.34, 12.45) {(0.51,0.1,100)};
\definecolor{fillColor}{RGB}{177,177,177}

\path[fill=fillColor] ( 13.38,  0.67) --
	( 59.01,  0.67) --
	( 59.01,  7.36) --
	( 13.38,  7.36) --
	cycle;

\node[text=drawColor,anchor=base west,inner sep=0pt, outer sep=0pt, scale=  0.70] at ( 65.01,  2.41) {(0.55,0.1,100)};
\definecolor{fillColor}{RGB}{155,155,155}

\path[fill=fillColor] (135.05,  0.67) --
	(180.67,  0.67) --
	(180.67,  7.36) --
	(135.05,  7.36) --
	cycle;

\node[text=drawColor,anchor=base west,inner sep=0pt, outer sep=0pt, scale=  0.70] at (186.67,  2.41) {(0.51,0.5,100)};
\definecolor{fillColor}{RGB}{128,128,128}

\path[fill=fillColor] (256.72,  0.67) --
	(302.34,  0.67) --
	(302.34,  7.36) --
	(256.72,  7.36) --
	cycle;

\node[text=drawColor,anchor=base west,inner sep=0pt, outer sep=0pt, scale=  0.70] at (308.34,  2.41) {(0.55,0.5,100)};
\end{scope}
\end{tikzpicture}
\end{center}
\caption{Coverage probability and length of the HiGrad confidence intervals with respect to different choices of step sizes. In both panels, the middle column presents the configuration graphically; the left column shows the coverage probabilities 
(with the nominal coverage 90\% indicated by a vertical gray line); the right column illustrates the average lengths of the confidence intervals. The color of the bar indicates the triplet $(\alpha,c_1,c_2)$ that specifies the step sizes $\gamma_j = \frac{c_1}{(j + c_2)^{\alpha}}$ in Proposition~\ref{prop:t}. Here we show in particular the results for the dense parameter type.}
\label{fig:stepsizes}
\end{figure}


\paragraph{Summary.} To summarize the phenomena observed from the simulated studies, Table \ref{table:ratings} assigns each HiGrad configuration three qualitative ratings. For comparison, the vanilla SGD is included as the simplest example of HiGrad. The last configuration (two splits and $T=4$ threads) achieves the best overall performance according to the three criteria. As a caveat, the summary table is informal and should be confined to the present simulation context.

\begin{table}[h]
\begin{center}
\small
\begin{tabular}{>{\centering\arraybackslash}m{1.3in} | >{\centering\arraybackslash}m{0.9in} | >{\centering\arraybackslash}m{0.9in} | >{\centering\arraybackslash}m{0.9in} }
\hline\hline
Config. & Accuracy & Coverage & Informativeness \\
\hline
\includegraphics[height = 1.5em]{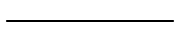} & \score{5}{5} & \score{0}{5} & \score{0}{5}\\
\includegraphics[height = 1.5em]{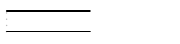} & \score{4.5}{5} & \score{5}{5} & \score{2}{5}\\
\includegraphics[height = 1.5em]{figs/legends/legend_0_4.pdf} & \score{3}{5} & \score{4.5}{5} & \score{4}{5}\\
\includegraphics[height = 1.5em]{figs/legends/legend_0_2_2.pdf} & \score{3.5}{5} & \score{4}{5} & \score{4}{5}\\
\includegraphics[height = 1.5em]{figs/legends/legend_1_4.pdf} & \score{3.5}{5} & \score{4}{5} & \score{4}{5}\\
\includegraphics[height = 1.5em]{figs/legends/legend_1_2_2.pdf} & \score{4}{5} & \score{4}{5} & \score{4}{5}\\
\hline\hline
\end{tabular}
\end{center}
\caption{Ratings of different HiGrad configurations.}
\label{table:ratings}
\end{table}




\subsection{A real data example}
\label{sec:real-data-example}

This section reports the results of applying HiGrad to the Adult dataset on the UCI repository \citep{lichman2013machine}, which is discussed in Introduction. The original dataset contains 14 features, of which 6 are continuous and 8 are categorical. 
We use the preprocessed version hosted on the LibSVM repository \citep{chang2011libsvm}, which has 123 binary features and contains 32,561 samples. We randomly pick 1,000 as a test set, and the rest as a training set. With the default configuration ($K = 2$, $B_1 = B_2 = 2$, $n_0 = n_1 = n_2 = N/7$), HiGrad is used to fit logistic regression on the training set with $N = 10^6$ iterations. The step size is taken to be $\gamma_j = 0.5j^{-0.505}$
and the initial points are chosen as earlier in Section \ref{sec:simulations-1}.

In this real-world example, the coverage of confidence intervals cannot be evaluated because the true probabilities of the test samples are unknown. Instead, we consider the HiGrad prediction interval as a way of measuring the randomness of the algorithm. Explicitly, HiGrad is repeated for $L = 500$ times in the setting specified above. In the $\ell$th run, HiGrad obtains the predicted probability $\widehat p_{i\ell}$ and the 90\% prediction interval $\mathrm{PI}_{i\ell}$ for the $i$th unit in the test set\footnote{HiGrad first constructs estimates and intervals for the logit $x^\top \theta$ and then transform them to probabilities using $\exp(x^\top \theta)/(\exp(x^\top \theta) + 1)$.}, where $i = 1, \ldots, 1000$. We consider the  empirical coverage probability for the $i$th unit given as
\begin{equation}\label{eq:hi_pi}
\frac{1}{L(L-1)} \sum_{\ell_1 \neq \ell_2}\mathbbm{1}\left(\widehat p_{i\ell_1} \in \mathrm{PI}_{i\ell_2}\right),
\end{equation}
where the summation is over all $L(L-1)$ pairs of $(\ell_1, \ell_2)$ such that $1 \le \ell_1 \ne \ell_2 \le L$.

In addition, we investigate the coverage property of the HiGrad prediction intervals for SGD estimates. To this end, SGD with $N = 10^6$ iterations is repeatedly performed for $L = 100$ times, each yielding an ``oracle sample'' predicted probability $\widehat p_{i \ell}'$ for the $i$th test unit. The empirical coverage probability for the $i$th unit is
\begin{equation}\label{eq:sgd_pi}
\frac{1}{L^2}\sum_{\ell_1 = 1}^{L}\sum_{\ell_2 = 1}^L \mathbbm1\left(\widehat p_{i\ell_1}' \in \mathrm{PI}_{i\ell_2}\right).
\end{equation}

Figure \ref{fig:adult_coverage} plots the histograms of the empirical coverage probabilities \eqref{eq:hi_pi} and \eqref{eq:sgd_pi} for the 1,000 test sample units, respectively. Note that both the left and right histograms use the same prediction intervals to cover different estimates. In both histograms, the coverage probability is highly concentrated around the nominal level 90\%, showing that the HiGrad prediction intervals achieve a reasonable coverage probability for most units in the test set. A noticeable left tail is observed in both histograms, however, indicating that more epochs in HiGrad and SGD are needed to invoke the asymptotic results for such a small fraction of units.

\begin{figure}[htp]
\begin{center}
\input{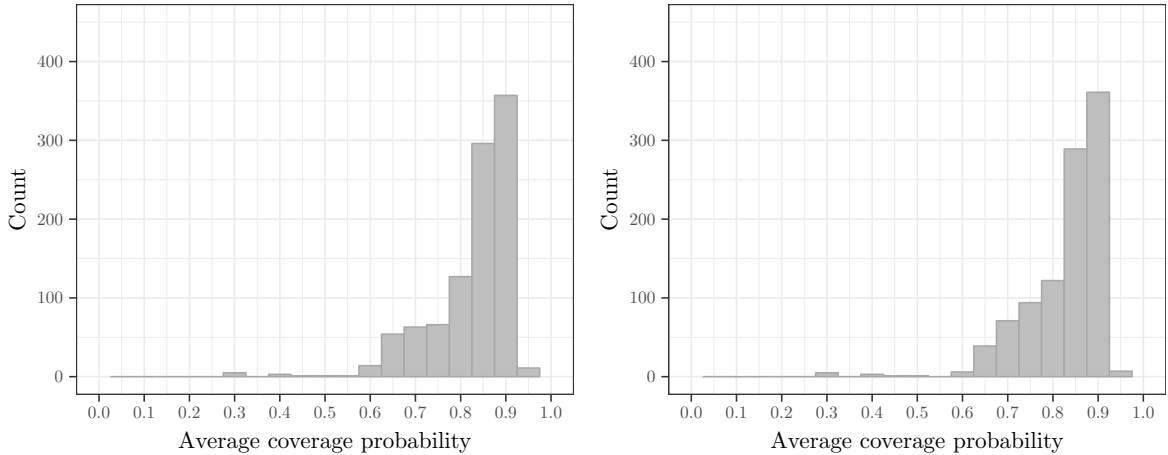}
\caption{Histogram of average coverage probabilities for the 100 samples in the test set. The left plot corresponds to a fresh HiGrad prediction \eqref{eq:hi_pi}, and the right corresponds to an SGD prediction \eqref{eq:sgd_pi}. The average coverage probabilities are calculated based on an set of ``oracle samples'' and 100 runs of HiGrad.}
\label{fig:adult_coverage}
\end{center}
\end{figure}


\section{Discussion and Future Work}
\label{sec:discussion}

This paper has proposed a method called HiGrad for statistical inference in online learning. This novel procedure, compared with SGD, attaches a confidence interval to its predictions without incurring additional computational or memory cost. The HiGrad confidence interval has been rigorously shown to asymptotically achieve the correct coverage probability for smooth (locally) strongly convex objectives. Moreover, the associated estimator has the same asymptotic variance as the vanilla SGD and can even attain the Cram\'er--Rao lower bound in the case of model specification. In both simulations and a real data example, HiGrad is empirically observed to yield good finite-sample performance using a default set of structural parameters derived by balancing the two competing criteria, namely sharing and contrasting. In the spirit of reproducibility, code to generate the figures in the paper is available at \url{http://stat.wharton.upenn.edu/~suw/higrad}.

HiGrad admits several potential theoretical refinements and practical extensions. The theoretical results presented in Section \ref{sec:method-1} are asymptotic in nature. As such, it is tempting to investigate the finite-sample properties of HiGrad, and doing so might provide insights to improve the coverage properties of the confidence intervals. This extension requires significant technical improvement as the Ruppert--Polyak averaging scheme is asymptotic in nature. That being said, a promising approach is to carry over the finite-sample results in \cite{chen2016} in the HiGrad setting. A direction related, or roughly equivalent, to the prior one is to generalize the HiGrad procedure to the high-dimensional setting where the dimension of the unknown vector $\theta^\ast$ can increase. From a technical perspective, it requires to extend our main technical ingredient, the Ruppert--Polyak normality result, to the case of a growing dimension. An interesting step towards this direction has been explored in \cite{gadat2017optimal}. In the non-convex setting, the multiple threads of HiGrad might help increase the odds of escaping saddle points. A preliminary study of this important question has been provided by \cite{sordello2019robust}, which however does not attempt to construct confidence intervals for non-convex stochastic optimization.

In essence, the HiGrad algorithm provides a broad class of templates for online learning. It would be interesting to investigate how to best parallelize this new algorithm and to explore the use of the HiGrad in addition to uncertainty quantification, for example, treating the confidence interval length as a stopping criterion. More broadly, any variants of SGD would presumably carry over to HiGrad, and the question to ask is how to obtain some form of uncertainty quantification of the results. In conjunction with applying HiGrad, variants worth considering include adaptive strategies for
choosing step sizes \citep{duchi2011adaptive,kingma2014adam}, variance reduction techniques \citep{johnson2013accelerating,defazio2014saga}, normality of the last SGD iterate \citep{toulis2014statistical}, and the implicit SGD \citep{toulis2017asymptotic}. Moreover, for
non-smooth or non-convex problems (for example, SVM, online EM \citep{lange1995gradient,cappe2009line}, and multilayer neural networks \citep{lecun2015deep}), although exact normality results are unlikely to hold and the averaged iterates are often replaced by the last iterate, particularly in the non-convex setting, the hope is that the splitting strategy might help HiGrad get a panoramic view of the landscape of the objective function, allowing it to better understand its algorithmic variability.



\acks{We thank Guanghui Lan and Panagiotis Toulis for helpful comments about an early version of the manuscript. We are grateful to the three anonymous referees for
their constructive comments that helped improve the presentation
of this work. This work was supported in part by NSF through CAREER DMS-1847415, an Alfred Sloan Research Fellowship, and the Wharton
Dean's Research Fund.}

\appendix
\section{Proofs Under Assumption \ref{ass:cvx_global}}
\label{sec:appendix}
This appendix provides a self-contained proof of Lemma \ref{lm:pj}, which is a Donsker-style generalization of the Ruppert--Polyak theorem. The original Ruppert--Polyak normality follows as a byproduct. Throughout Appendix \ref{sec:appendix}, we work on Assumption \ref{ass:cvx_global}, which is stronger than Assumption \ref{ass:cvx} in the main text. This stronger assumption helps better highlight the main ideas of this celebrated normality result. Later in Appendix \ref{sec:extend-proof-lemma}, we move back to Assumption \ref{ass:cvx} and the proof only needs minor modifications to the present one. The proofs presented in this appendix make use of a range of ideas in \cite{polyak1992,moulines2011non,fort2012central}. 


Below, we present the assumption adopted throughout Appendix \ref{sec:appendix}.

\begin{customassm}{1'}[Global strong convexity of $f$]\label{ass:cvx_global}
The objective function $f(\theta)$ is continuously differentiable and strongly convex with parameter $\rho > 0$, that is, for any $\theta_1, \theta_2$,
\[
f(\theta_2) \ge f(\theta_1) + (\theta_2 - \theta_1)^\top \nabla f(\theta_1) + \frac{\rho}{2} \|\theta_2 - \theta_1\|^2.
\]
In addition, assume that $\nabla f$ is Lipschitz continuous:
\[
\|\nabla f(\theta_1) - \nabla f(\theta_2)\| \le L \|\theta_1 -\theta_2\|
\]
for some $L > 0$. Last, the Hessian of $f$ exists and is Lipschitz continuous in a neighborhood of $\theta^\ast$, that is, there is some $\delta_1 > 0$ such that
\[
\left\| \nabla^2 f(\theta) - \nabla^2 f(\theta^\ast) \right\| \le L' \|\theta - \theta^\ast\|_2
\]
for some $L' > 0$ if $\|\theta - \theta^\ast\| \le \delta_1$.
\end{customassm}

Before moving to the proofs, we introduce below a much less restrictive assumption on the step sizes compared with the one given in the main text. In fact, both the proofs in the present appendix and Appendix \ref{sec:extend-proof-lemma} holds with a broader class of step sizes as characterized below. The choice $\gamma_j = c_1(j + c_2)^{-\alpha}$ for $\alpha \in (0.5, 1)$ in the main body of the paper is included as a special example.

\begin{assumption}[Slowly decaying step sizes]\label{ass:size}
Assume that the sequence of positive step sizes $\{\gamma_j\}_{j=1}^{\infty}$ that obey
\begin{align}
&\sum_{j=1}^{\infty} \gamma_j^2 < \infty \label{eq:sq_finite}\\
&\sum_{j=1}^\infty \frac{\gamma_j}{\sqrt j} < \infty \label{eq:r_need}\\
& \lim_{j \goto \infty} j \gamma_j = \infty \label{eq:step_large}\\
&\lim_{j \goto \infty}\frac1{\gamma_j} \log\frac{\gamma_j}{\gamma_{j+1}} = 0 \label{eq:slow_decay}\\
& \lim_{j\goto\infty} \frac1{\sqrt j} \sum_{l=1}^{j} \frac{1}{\sqrt{\gamma_l}} \left| \frac{\gamma_l}{\gamma_{l+1}} - 1 \right| = 0
\label{eq:theta_diff_need}.
\end{align}

\end{assumption}

Some direct consequences of this assumption are given as a lemma below. Its proof is deferred to Section \ref{sec:proof-lemma-refxxxx}.

\begin{lemma}\label{lm:step_all}
Let $\{\gamma_j\}_{j=1}^{\infty}$ be an arbitrary sequence of positive numbers. Then each of the following statements is true. 
\begin{enumerate}
\item 
Equation \eqref{eq:slow_decay} implies that
\[
\sum_{j=1}^{\infty} \gamma_j = \infty.
\]

\item
If $\gamma_j$ is a non-increasing sequence, then \eqref{eq:r_need} implies \eqref{eq:sq_finite}.

\item
If $\gamma_j$ is a non-increasing sequence and $\gamma_{j+1}/\gamma_j$ is bounded below away from 0, then \eqref{eq:step_large} implies \eqref{eq:theta_diff_need}.

\end{enumerate}
\end{lemma}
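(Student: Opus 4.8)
\textbf{Proof plan for Lemma \ref{lm:step_all}.}

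The three statements are elementary consequences of the defining properties \eqref{eq:sq_finite}--\eqref{eq:theta_diff_need}, and the plan is to handle each one in turn with a short direct argument.

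For part (1), the goal is to deduce $\sum_j \gamma_j = \infty$ from \eqref{eq:slow_decay}, namely $\gamma_j^{-1}\log(\gamma_j/\gamma_{j+1}) \to 0$. The idea is to argue by contradiction: if $\sum_j \gamma_j < \infty$, then $\gamma_j \to 0$, so $1/\gamma_j \to \infty$, and thus \eqref{eq:slow_decay} forces $\log(\gamma_j/\gamma_{j+1}) = o(\gamma_j)$. Telescoping, $\log(\gamma_1/\gamma_{n+1}) = \sum_{j=1}^n \log(\gamma_j/\gamma_{j+1}) = \sum_{j=1}^n o(\gamma_j)$, which is bounded since $\sum_j \gamma_j < \infty$. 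But $\log(\gamma_1/\gamma_{n+1}) = \log\gamma_1 - \log\gamma_{n+1} \to +\infty$ because $\gamma_{n+1} \to 0$. This contradiction proves $\sum_j \gamma_j = \infty$. (One should be slightly careful: $o(\gamma_j)$ means $\varepsilon_j \gamma_j$ with $\varepsilon_j \to 0$, and $\sum_j \varepsilon_j \gamma_j$ need not converge just because $\sum_j \gamma_j$ diverges — but here we are in the case $\sum_j\gamma_j<\infty$, so $\sum_j |\varepsilon_j|\gamma_j \le (\sup_j|\varepsilon_j|)\sum_j \gamma_j < \infty$, which is exactly what is used. This is the one point that needs a careful word.)

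For part (2), assume $\gamma_j$ is non-increasing and $\sum_j \gamma_j/\sqrt{j} < \infty$; we want $\sum_j \gamma_j^2 < \infty$. Since $\gamma_j \downarrow$, partial summing or a direct comparison gives $j\gamma_j \le \sum_{l=1}^j \gamma_l$; more usefully, $\gamma_j/\sqrt j \ge \gamma_j \cdot \gamma_j^{1/2}\cdot(\text{something})$ is not immediate, so the cleaner route is: convergence of $\sum_j \gamma_j/\sqrt j$ with $\gamma_j$ non-increasing implies, by a Cauchy-condensation / blocking argument, that $\sqrt{j}\,\gamma_j \to 0$ (group terms over dyadic blocks $2^k \le j < 2^{k+1}$, on which $\gamma_j/\sqrt j \ge \gamma_{2^{k+1}}/\sqrt{2^{k+1}}$, so the block sum is $\gtrsim 2^k \gamma_{2^{k+1}}/2^{(k+1)/2} = 2^{(k-1)/2}\gamma_{2^{k+1}} = \frac12\sqrt{2^{k}}\,\gamma_{2^{k+1}}$; summability of the blocks forces $\sqrt{2^k}\gamma_{2^{k+1}}\to 0$ and hence $\sqrt j\gamma_j\to 0$ by monotonicity). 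Then $\gamma_j^2 = (\sqrt j \gamma_j)\cdot(\gamma_j/\sqrt j) \le C\cdot \gamma_j/\sqrt j$ for large $j$, and summing gives $\sum_j \gamma_j^2 < \infty$.

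For part (3), assume $\gamma_j \downarrow$, $\gamma_{j+1}/\gamma_j \ge c > 0$, and $j\gamma_j \to \infty$; we want $\frac1{\sqrt j}\sum_{l=1}^j \gamma_l^{-1/2}|\gamma_l/\gamma_{l+1}-1| \to 0$, i.e.\ \eqref{eq:theta_diff_need}. Since $0 \le \gamma_l/\gamma_{l+1}-1$ (monotonicity) and $\gamma_l/\gamma_{l+1} \le 1/c$ (the lower bound on consecutive ratios), we have $|\gamma_l/\gamma_{l+1}-1| \le 1/c - 1 =: C_1$, a constant. Hence the sum is at most $C_1 \sum_{l=1}^j \gamma_l^{-1/2}$. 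Because $\gamma_l$ is non-increasing, $\gamma_l^{-1/2} \le \gamma_j^{-1/2}$... but that is too crude — it gives $\frac1{\sqrt j}\cdot j \gamma_j^{-1/2} = \sqrt{j/\gamma_j}$, which does not obviously vanish. So instead one should not bound all terms by the last; rather, use $\gamma_l^{-1} \le l/(l\gamma_l)$ and the hypothesis $l\gamma_l \to \infty$: given $\varepsilon>0$, pick $N$ with $l\gamma_l \ge \varepsilon^{-2}$ for $l \ge N$, so $\gamma_l^{-1/2} \le \varepsilon\sqrt l$ for $l\ge N$. Then $\frac1{\sqrt j}\sum_{l=1}^j \gamma_l^{-1/2} \le \frac1{\sqrt j}\sum_{l<N}\gamma_l^{-1/2} + \frac\varepsilon{\sqrt j}\sum_{l=N}^j \sqrt l \le \frac{O(1)}{\sqrt j} + \frac\varepsilon{\sqrt j}\cdot O(j^{3/2}) = o(1) + O(\varepsilon)\cdot j$, which again does not vanish. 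I therefore expect the genuine content here to require using the telescoping structure of $\sum_l \log(\gamma_l/\gamma_{l+1})$ — i.e.\ that $\sum_{l=1}^j (\gamma_l/\gamma_{l+1}-1) \approx \log(\gamma_1/\gamma_{j+1})$ grows only logarithmically — combined with a weighting argument (Abel summation pairing $\gamma_l^{-1/2}$ against the increments), rather than the crude termwise bound. \textbf{This coupling of the slowly-varying increments with the $\gamma_l^{-1/2}$ weight is the main obstacle}, and the plan is to run Abel summation: write $a_l = \gamma_l/\gamma_{l+1}-1 \ge 0$, $b_l = \gamma_l^{-1/2}$, note $\sum_{l\le j} a_l = O(\log j)$ (from \eqref{eq:slow_decay} applied summably, or directly from the ratio bounds), observe $b_l$ is non-decreasing and $b_j = \gamma_j^{-1/2} = o(\sqrt{j})$ by \eqref{eq:step_large}, and conclude $\sum_{l\le j} a_l b_l \le b_j \sum_{l\le j} a_l = o(\sqrt j)\cdot O(\log j) = o(\sqrt j)$ provided we also control that $b_j\log j = o(\sqrt j)$, which follows since $\gamma_j^{-1/2} = o(\sqrt j)$ with room to spare against the logarithmic factor when $j\gamma_j\to\infty$ at any polynomial-type rate; dividing by $\sqrt j$ gives the claim. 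I would double-check whether \eqref{eq:slow_decay} is actually needed for part (3) or whether the stated hypotheses (monotonicity, ratio bounded below, $j\gamma_j\to\infty$) already force $\sum_{l\le j}a_l = O(\log j)$ — they do, since $a_l \le \gamma_l/\gamma_{l+1}-1$ and telescoping the product gives $\prod_{l\le j}(1+a_l) = \gamma_1/\gamma_{j+1} \le \gamma_1/(c^{?}\cdots)$; more simply $\sum a_l \le \sum \log(\gamma_l/\gamma_{l+1})\cdot\frac{a_l}{\log(1+a_l)}$ and the ratio bound makes $a_l/\log(1+a_l)$ bounded, so $\sum_{l\le j}a_l = O(\log(\gamma_1/\gamma_{j+1})) = O(\log j)$ once $\gamma_{j+1} \ge j^{-O(1)}$, which is implied by $j\gamma_j\to\infty$ together with monotonicity. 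This last chain is the fiddly part and is where I would spend the care in the full write-up.
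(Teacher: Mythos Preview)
Your arguments for parts (1) and (2) are correct. Part (1) is essentially the paper's proof rephrased. Part (2) is actually a different route: the paper treats it as a convex optimization problem over the simplex $\{\gamma_1\ge\gamma_2\ge\cdots\ge 0,\ \sum_j\gamma_j/\sqrt j = C\}$ and evaluates $\sum\gamma_j^2$ at extreme points; your Pringsheim-type argument ($\sqrt j\,\gamma_j\to 0$ from monotonicity plus summability of $\gamma_j/\sqrt j$, then $\gamma_j^2 \le C\gamma_j/\sqrt j$) is more direct and equally valid.

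Part (3), however, has a genuine gap. Your final plan bounds $\sum_{l\le j} a_l b_l \le b_j\sum_{l\le j} a_l$ with $a_l=\gamma_l/\gamma_{l+1}-1$ and $b_l=\gamma_l^{-1/2}$, and you correctly identify $\sum_{l\le j}a_l = O(\log j)$ and $b_j=o(\sqrt j)$. But this only yields $o(\sqrt j)\cdot O(\log j)$, and your claim that the extra $\log j$ is absorbed ``with room to spare'' is false under the stated hypotheses: take $\gamma_j=(\log j)^2/j$. Then $j\gamma_j=(\log j)^2\to\infty$, the sequence is eventually decreasing with ratios bounded below, yet $b_j\log j=\sqrt j$, so your bound gives exactly $\sqrt j$, not $o(\sqrt j)$. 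The hypothesis $j\gamma_j\to\infty$ does \emph{not} imply $(\log j)^2=o(j\gamma_j)$.

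The paper's argument avoids this loss by first rewriting the summand algebraically as
\[
\gamma_l^{-1/2}\Bigl(\frac{\gamma_l}{\gamma_{l+1}}-1\Bigr)=\sqrt{\gamma_l}\,\Bigl(\frac1{\gamma_{l+1}}-\frac1{\gamma_l}\Bigr),
\]
applying Abel summation to this form, and then using the ratio bound to replace $1/\gamma_l$ by $O(1/\sqrt{\gamma_l\gamma_{l-1}})$, which makes the sum telescope to $O(\gamma_{n+1}^{-1/2})$. Dividing by $\sqrt n$ then gives $(n\gamma_{n+1})^{-1/2}\to 0$ directly from \eqref{eq:step_large}, with no logarithmic loss. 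This algebraic identity---turning the awkward ratio increment into a difference of reciprocals weighted by $\sqrt{\gamma_l}$---is the missing idea in your attempt.
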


\subsection{Proof of Lemma \ref{lm:pj}}
\label{sec:proof-lemma-reflm:pj}

It suffices to prove Lemma \ref{lm:pj} in the case of $K = 1$. We will prove: Under Assumptions \ref{ass:cvx}, \ref{ass:reg}, and \ref{ass:size}, we have that the SGD iterates obey
\[
\sqrt{n_1} \left( \overline \theta_{n_1} - \theta^\ast \right)
\]
and
\[
\sqrt{n_2} \left( \overline \theta_{n_1+1 : n_1 + n_2} - \theta^\ast \right)
\]
are asymptotically distributed as two \iid normal random variables with mean 0. In fact, below we prove a stronger version of the  normality of the Ruppert--Polyak averaging scheme.

Let $Z_1, Z_2, \ldots, Z_{n_0}$ and $Z'_1, Z'_2, \ldots, Z'_{n_1}$ be $n_0 + n_1$ i.i.d.~random variables with the same distribution as $Z$, and consider the following iterations:
\[
\theta_j = \theta_{j-1} - \gamma_j g(\theta_{j-1}, Z_j)
\]
for $j = 1, \ldots, n_0$ and
\[
\theta_i' = \theta_{i-1}' - \gamma_{i}' g(\theta_{i-1}', Z_i')
\]
for $i = 1, \ldots, n_1$, with $\theta'_0 = \theta_{n_0}$. Above, $\gamma_i' = \gamma_{n_0+i}$. Assume both $n_0$ and $n_1$ tend to infinity.

We write
\[
\epsilon_j = g(\theta_{j-1}, Z_j) - \nabla f(\theta_{j-1})
\]
for $j = 1, \ldots, n_0$ and
\[
\epsilon'_i = g(\theta'_{i-1}, Z'_i) - \nabla f(\theta'_{i-1})
\]
for $i = 1, \ldots, n_1$. Note that $\theta_j$ is adapted to the filtration $\mathcal{F}_j := \sigma(Z_1, \ldots, Z_j)$ and $\theta'_i$ is adapted to the filtration $\mathcal{F}_i' := \sigma(\mathcal F^{\infty}, Z'_1, \ldots, Z'_i)$, where $\mathcal F^{\infty} := \cup_{j} \F_j$. 

Now, we write the SGD update as
\[
\theta_j = \theta_{j-1}  - \gamma_j \nabla f(\theta_{j-1}) - \gamma_j\epsilon_j,
\]
which can be alternatively written as
\begin{equation}\label{eq:1grad_alt}
\nabla f(\theta_{j-1})  = \frac{\theta_{j-1} - \theta_j}{\gamma_j} - \epsilon_j.
\end{equation}
Intuitively, assuming that $\theta_{j-1}$ is close to $\theta^\ast$, then $\nabla f(\theta_{j-1})$ is approximately equal to $H(\theta - \theta^\ast)$ (recall the notation $H = \nabla^2 f(\theta^\ast)$). This suggests us to write
\begin{equation}\label{eq:non_quad_e}
\nabla f(\theta) =  H (\theta - \theta^\ast) + r_{\theta},
\end{equation}
where $r_{\theta}$ shall be shown to be sufficiently small later. Making use of \eqref{eq:1grad_alt} and summing \eqref{eq:non_quad_e} over $\theta_{j-1}$ for $j = 1, \ldots, n_0$ give
\[
\sum_{j=1}^{n_0} H(\theta_{j-1} - \theta^\ast) = - \underbrace{\sum_{j=1}^{n_0} \epsilon_j}_{I_1} - \underbrace{\sum_{j=1}^{n_0} r_{\theta_{j-1}}}_{I_2} + \underbrace{\sum_{j=1}^{n_0} \frac{\theta_{j-1} - \theta_j}{\gamma_j}}_{I_3}
\]
and, similarly,
\[
\sum_{i=1}^{n_1} H(\theta'_{i-1} - \theta^\ast) = - \underbrace{\sum_{i=1}^{n_1} \epsilon'_i}_{I_1'} - \underbrace{\sum_{i=1}^{n_1} r_{\theta'_{i-1}}}_{I_2'} + \underbrace{\sum_{i=1}^{n_1} \frac{\theta'_{i-1} - \theta'_i}{\gamma'_{i}}}_{I_3'}.
\]

Below, we state three lemmas characterizing $I_1, I_1', I_2, I_2', I_3$, and $I_3'$. While the first lemma shows that both re-scaled $I_1$ and $I_1'$ jointly converge to two \iid normals with zero mean, the other two lemmas say that $I_2, I_2', I_3$, and $I_3'$ are negligible with appropriate scaling. Taking the three lemmas as given for the moment, one has
\[
\begin{aligned}
\sqrt{n_0} \left(\frac1{n_0} \sum_{j=1}^{n_0}\theta_j - \theta^\ast \right) &= \frac1{\sqrt{n_0}} \sum_{j=1}^{n_0}(\theta_{j-1} - \theta^\ast) - \frac{\theta_0 - \theta_{n_0}}{\sqrt{n_0}}\\
&= \frac1{\sqrt{n_0}} \sum_{j=1}^{n_0}(\theta_{j-1} - \theta^\ast) - o_{\P}(1)\\
&= -H^{-1} \frac{I_1}{\sqrt n_0} - H^{-1} \frac{I_2}{\sqrt n_0} + H^{-1} \frac{I_3}{\sqrt n_0} - o_{\P}(1)\\
&= -H^{-1} \frac{I_1}{\sqrt n_0} - o_{\P}(1) + o_{\P}(1) - o_{\P}(1)\\
&= -H^{-1} \frac{I_1}{\sqrt n_0} + o_{\P}(1),\\
\end{aligned}
\]
where we make use of the fact that $\frac{\theta_0 - \theta_{n_0}}{\sqrt{n_0}} = \frac{\theta_0 - \theta^\ast + o_{\P}(1)}{\sqrt{n_0}} = o_{\P}(1)$ by Lemma \ref{lm:consis}. Recognizing the fact $\frac{I_1}{\sqrt n_0} \Rightarrow \N(0, V)$ (recall that $V = \E g(\theta^\ast, Z) g(\theta^\ast, Z)^\top$) given by Lemma \ref{lm:normal}, we readily get
\[
\sqrt{n_0} \left(\frac1{n_0} \sum_{j=1}^{n_0}\theta_j - \theta^\ast \right) \Rightarrow \N(0, H^{-1} V H^{-1}).
\]
Likewise, 
\[
\sqrt{n_1} \left(\frac1{n_1} \sum_{i=1}^{n_1}\theta'_i - \theta^\ast \right) \Rightarrow \N(0, H^{-1} V H^{-1}),
\]
and the asymptotic independence is implied by the second half of Lemma \ref{lm:normal}.

The discussion above indicates that the proof of Lemma \ref{lm:pj} would be completed once we establish these three lemmas. This is the subject of Sections \ref{sec:norm-sum_j=1n}, \ref{sec:negl-sum_j=1n-r_th-1}, and \ref{sec:negl-other-term}. We write $\xi_n = o_{\P}(1)$ if $\xi_n \Rightarrow 0$ weakly.

Throughout this section, assume both $n_0, n_1 \goto \infty$ and $n_0/n_1$ converges to a number in $(0, \infty)$.
\begin{lemma}[Normality of $I_1$ and $I_1'$]\label{lm:normal}
Under Assumptions \ref{ass:cvx_global}, \ref{ass:reg}, and \ref{ass:size}, then
\[
\frac1{\sqrt{n_0}}\sum_{j=1}^{n_0}\epsilon_j \Rightarrow \N(0, V), \quad \text{and }\frac1{\sqrt{n_1}}\sum_{i=1}^{n_1}\epsilon_i' \Rightarrow \N(0, V).
\]
Moreover, they are asymptotically independent.
\end{lemma}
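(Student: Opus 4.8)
\textbf{Proof plan for Lemma \ref{lm:normal}.}
The plan is to establish a joint central limit theorem for the two martingale-difference sums by viewing them as a single triangular array and invoking a martingale CLT (for instance the Lindeberg–Feller version, or the version in \cite{fort2012central}). First I would record the martingale structure: since $\E[g(\theta_{j-1},Z_j)\mid \F_{j-1}] = \nabla f(\theta_{j-1})$, each $\epsilon_j$ satisfies $\E[\epsilon_j \mid \F_{j-1}] = 0$, so $\{\epsilon_j\}$ is a martingale-difference sequence with respect to $\{\F_j\}$, and similarly $\{\epsilon_i'\}$ with respect to $\{\F_i'\}$; because $\F_i'$ contains all of $\F^\infty$, the two sequences can be concatenated into one long martingale-difference sequence adapted to a single filtration. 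Concatenating $\frac{1}{\sqrt{n_0}}\epsilon_1,\dots,\frac{1}{\sqrt{n_0}}\epsilon_{n_0},\frac{1}{\sqrt{n_1}}\epsilon_1',\dots,\frac{1}{\sqrt{n_1}}\epsilon_{n_1}'$ gives a triangular array whose row sums are exactly $\bigl(\tfrac{1}{\sqrt{n_0}}I_1,\ \tfrac{1}{\sqrt{n_1}}I_1'\bigr)$ stacked, and a CLT for this array yields both the marginal normality and the asymptotic independence at once (the off-diagonal block of the limiting covariance is zero because the cross terms never appear in the conditional variance sum).

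The two hypotheses to verify for the martingale CLT are (i) the conditional variance (Lindeberg) condition, namely that $\frac{1}{n_0}\sum_{j=1}^{n_0}\E[\epsilon_j\epsilon_j^\top\mid \F_{j-1}] \to V$ in probability, and likewise $\frac{1}{n_1}\sum_{i=1}^{n_1}\E[\epsilon_i'{\epsilon_i'}^\top\mid \F_{i-1}'] \to V$, together with the vanishing of all cross-block contributions (automatic, since the array is a single column of scalars-in-$\R^d$ and cross terms between a $Z$-block increment and a $Z'$-block increment do not occur within the same conditional-variance summand); and (ii) a conditional Lindeberg / uniform integrability condition, which I would get from the $2+\delta$ moment bound $\sup_{\|\theta-\theta^\ast\|\le\delta}\E_\theta\|\epsilon\|^{2+\delta}<\infty$ in Assumption \ref{ass:reg}. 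For (i), the key input is Lemma \ref{lm:consis}, the strong consistency $\theta_j \to \theta^\ast$ (and $\theta_i' \to \theta^\ast$) almost surely, which is available under Assumption \ref{ass:cvx_global} and \ref{ass:size}; combined with the continuity bound $\|\E_\theta \epsilon\epsilon^\top - V\| \le C(\|\theta-\theta^\ast\| + \|\theta-\theta^\ast\|^2)$ from Assumption \ref{ass:reg}, we get $\E[\epsilon_j\epsilon_j^\top\mid\F_{j-1}] = \E_{\theta_{j-1}}\epsilon\epsilon^\top \to V$ a.s., and a Cesàro / Toeplitz averaging argument upgrades this to the averaged convergence $\frac1{n_0}\sum_{j\le n_0}\E_{\theta_{j-1}}\epsilon\epsilon^\top \to V$. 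For (ii), the conditional Lindeberg condition $\frac1{n_0}\sum_{j\le n_0}\E[\|\epsilon_j\|^2\mathbbm{1}(\|\epsilon_j\| > \eta\sqrt{n_0})\mid\F_{j-1}] \to 0$ follows by bounding the truncated second moment by $(\eta\sqrt{n_0})^{-\delta}\E_{\theta_{j-1}}\|\epsilon\|^{2+\delta}$ on the event $\|\theta_{j-1}-\theta^\ast\|\le\delta$ (which holds eventually by consistency) and controlling the finitely many early terms separately.

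The main obstacle I anticipate is the bookkeeping needed to make the conditional-variance averaging rigorous once $n_0$ and $n_1$ both grow: I would need $n_0/n_1 \to$ a finite nonzero constant (assumed) so that the scaling factors $\sqrt{n_0}$ and $\sqrt{n_1}$ are comparable and the concatenated array has a well-defined total length $n_0+n_1 \to \infty$ with each block contributing a fixed proportion; and I would need to be slightly careful that the a.s.\ convergence $\E_{\theta_{j-1}}\epsilon\epsilon^\top \to V$ and the corresponding Lindeberg bound hold \emph{jointly} along the concatenated index, which is fine because $\F^\infty \subseteq \F_0'$ ensures the second block's conditional expectations are taken in a filtration refining the first. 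A secondary technical point is that the limiting covariance of the stacked row sums is block-diagonal $\mathrm{diag}(V,V)$ rather than having nonzero cross-covariance — this is exactly where asymptotic independence comes from, and it falls out of the fact that the predictable quadratic variation of the concatenated martingale, evaluated at the final time, is block-diagonal because the two blocks use disjoint data. Modulo these bookkeeping issues, the argument is a standard application of the martingale CLT, so I would keep the writeup short and cite \cite{fort2012central} or \cite{polyak1992} for the precise form of the theorem invoked.
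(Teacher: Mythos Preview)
Your proposal is correct and follows a route close in spirit to the paper's, but with two genuine differences worth noting. First, the paper handles the fact that the $(2+\delta)$-moment bound in Assumption~\ref{ass:reg} is only \emph{local} by an explicit truncation: it replaces $\epsilon_j$ by $\widetilde\epsilon_j = g(\theta^\ast,Z_j)$ whenever $\|\theta_{j-1}-\theta^\ast\|>\nu$, so that the truncated array has uniformly bounded $(2+\delta)$-moments and the Lyapunov condition of Lemma~\ref{lm:mtgl} (stated with \emph{unconditional} expectations) applies directly; the gap between $\epsilon_j$ and $\widetilde\epsilon_j$ is then closed using Lemma~\ref{lm:consis}, since only almost-surely finitely many terms differ. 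Your plan instead works with a \emph{conditional} Lindeberg condition and controls the random, almost-surely finite set of ``bad'' indices on the fly; this is equally valid but requires you to invoke a martingale CLT stated with conditional rather than unconditional moment hypotheses, and to use the second-moment growth bound $\E_\theta\|\epsilon\|^2 \lesssim 1+\|\theta-\theta^\ast\|^2$ (a consequence of the first part of Assumption~\ref{ass:reg}) together with $\sup_j\|\theta_j-\theta^\ast\|<\infty$ a.s.\ to kill the bad terms. Second, for asymptotic independence the paper argues by conditioning on $\theta_{n_0}$ and observing that, since $\theta_{n_0}\to\theta^\ast$, the second-block limit does not depend on its starting point; your approach of stacking the two blocks into a single Cram\'er--Wold triangular array and reading off the block-diagonal limiting covariance is cleaner and more directly yields joint convergence. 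The paper itself remarks after its proof that invoking a Donsker-type result under $n_0\asymp n_1$ would simplify matters, which is essentially your strategy.
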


\begin{lemma}[Negligibility of $I_2$ and $I_2'$]\label{lm:i2_small}
Under Assumptions \ref{ass:cvx_global}, \ref{ass:reg}, and \ref{ass:size}, then
\[
\frac1{\sqrt{n_0}}\sum_{j=1}^{n_0} r_{\theta_{j-1}} = o_{\P}(1)
\]
and
\[
\frac1{\sqrt{n_1}}\sum_{i=1}^{n_1} r_{\theta'_{i-1}} = o_{\P}(1).
\]

\end{lemma}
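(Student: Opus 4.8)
The plan is to reduce the claim to a pointwise quadratic bound on the Taylor remainder $r_{\theta}:=\nabla f(\theta)-H(\theta-\theta^\ast)$, combined with the standard $L^2$ decay rate of the SGD iterates, after which Kronecker's lemma closes the argument.

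First I would establish that $\|r_{\theta}\|\le C_r\|\theta-\theta^\ast\|^2$ for \emph{all} $\theta$, with $C_r$ depending only on $L,L',\delta_1$. For $\|\theta-\theta^\ast\|\le\delta_1$ this is the usual consequence of the local Lipschitz continuity of the Hessian in Assumption~\ref{ass:cvx_global}: writing $r_{\theta}=\int_0^1\big(\nabla^2 f(\theta^\ast+t(\theta-\theta^\ast))-H\big)(\theta-\theta^\ast)\,\d t$ and bounding the integrand by $L't\|\theta-\theta^\ast\|^2$ gives $\|r_{\theta}\|\le\tfrac{L'}{2}\|\theta-\theta^\ast\|^2$. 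For $\|\theta-\theta^\ast\|>\delta_1$, the global Lipschitz property of $\nabla f$ (which also gives $\|H\|\le L$) yields the crude bound $\|r_{\theta}\|\le 2L\|\theta-\theta^\ast\|$, and since $\|\theta-\theta^\ast\|>\delta_1$ forces $\|\theta-\theta^\ast\|<\delta_1^{-1}\|\theta-\theta^\ast\|^2$, this is again quadratic; take $C_r=\max\{L'/2,\,2L/\delta_1\}$. Next I would recall the standard second-moment bound for SGD under global strong convexity, $\E\|\theta_j-\theta^\ast\|^2\le C\gamma_j$ uniformly in $j$, which follows from the contraction recursion $\E\big[\|\theta_j-\theta^\ast\|^2\mid\mathcal{F}_{j-1}\big]\le(1-2\rho\gamma_j+O(\gamma_j^2))\|\theta_{j-1}-\theta^\ast\|^2+O(\gamma_j^2)$ --- the conditional noise variance grows at most quadratically in $\|\theta-\theta^\ast\|$ by the first display of Assumption~\ref{ass:reg}, so the quadratic terms are absorbed once $\gamma_j$ is small --- together with a Chung-type lemma using $\lim_j j\gamma_j=\infty$ from Assumption~\ref{ass:size}; this is the same estimate already underlying the consistency statement (Lemma~\ref{lm:consis}). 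For the primed chain the identical recursion applies with $\gamma_i'=\gamma_{n_0+i}$ started from $\theta_0'=\theta_{n_0}$, which itself satisfies $\E\|\theta_0'-\theta^\ast\|^2\le C\gamma_{n_0}$, so $\E\|\theta_i'-\theta^\ast\|^2\le C\gamma_{n_0+i}$.

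Combining the two steps via the triangle inequality and Markov's inequality gives
\[
\E\Big\|\frac1{\sqrt{n_0}}\sum_{j=1}^{n_0}r_{\theta_{j-1}}\Big\|\le\frac{C_r}{\sqrt{n_0}}\sum_{j=1}^{n_0}\E\|\theta_{j-1}-\theta^\ast\|^2\le\frac{C_rC}{\sqrt{n_0}}\sum_{j=0}^{n_0-1}\gamma_j,
\]
and it remains only to see that $\frac1{\sqrt n}\sum_{j=1}^n\gamma_j\to0$. This is precisely Kronecker's lemma applied to the convergent series $\sum_j\gamma_j/\sqrt j<\infty$ (condition \eqref{eq:r_need}) with the increasing weights $b_n=\sqrt n$. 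Hence $\frac1{\sqrt{n_0}}\sum_{j=1}^{n_0}r_{\theta_{j-1}}\to0$ in $L^1$, a fortiori $o_{\P}(1)$. The primed statement is handled identically: $\sum_{i=1}^{n_1}\gamma_i'=\sum_{j=n_0+1}^{n_0+n_1}\gamma_j\le\sum_{j=1}^{n_0+n_1}\gamma_j$, and since $n_0/n_1$ converges to a finite nonzero limit, $\sqrt{n_1}$ is comparable to $\sqrt{n_0+n_1}$, so $\frac1{\sqrt{n_1}}\sum_{i=1}^{n_1}\gamma_i'\to0$ by the same Kronecker argument along $n_0+n_1\to\infty$, giving $\frac1{\sqrt{n_1}}\sum_{i=1}^{n_1}r_{\theta'_{i-1}}=o_{\P}(1)$.

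The main obstacle is not conceptual: the only real content is the $L^2$ rate $\E\|\theta_j-\theta^\ast\|^2=O(\gamma_j)$, which is standard but must be invoked so that the constant $C$ is uniform in $j$ and independent of $n_0,n_1$ along the proportional regime. The mild technical point is that Assumption~\ref{ass:reg} controls the noise only near $\theta^\ast$, so one uses its first display to obtain the at-most-quadratic growth of the conditional noise variance needed to close the recursion globally; everything else is a Taylor bound, the triangle inequality, and Kronecker's lemma.
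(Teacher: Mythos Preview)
Your proof is correct and follows the same route as the paper: bound $\|r_\theta\|$ quadratically in $\|\theta-\theta^\ast\|$, invoke the $L^2$ rate $\E\|\theta_j-\theta^\ast\|^2\le C\gamma_j$ (the paper's \eqref{eq:r_gamma}, derived there via Lemma~\ref{lm:series}), and finish with Kronecker's lemma applied to \eqref{eq:r_need}. The one difference is cosmetic but pleasant: you upgrade the far-region linear bound $\|r_\theta\|\le 2L\|\theta-\theta^\ast\|$ to a quadratic one by using $\|\theta-\theta^\ast\|>\delta_1$, which yields a single global constant $C_r$; the paper instead keeps the near and far regimes separate and handles the far contribution by appealing to the almost-sure consistency of Lemma~\ref{lm:consis} to argue that only finitely many iterates fall outside the $\delta$-ball. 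Your unification avoids that detour and gives directly an $L^1$ (rather than merely in-probability) bound, at no extra cost.
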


\begin{lemma}[Negligibility of $I_3$ and $I_3'$]\label{lm:theta_diff_small}
Under Assumptions \ref{ass:cvx_global}, \ref{ass:reg}, and \ref{ass:size}, then
\[
\frac1{\sqrt{n_0}}\sum_{j=1}^{n_0} \frac{\theta_{j-1} - \theta_j}{\gamma_j} = o_{\P}(1)
\]
and
\[
\frac1{\sqrt{n_1}}\sum_{i=1}^{n_1} \frac{\theta_{i-1}' - \theta_i'}{\gamma'_i} = o_{\P}(1).
\]
\end{lemma}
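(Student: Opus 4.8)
\textbf{Proof proposal for Lemma \ref{lm:theta_diff_small}.}
The plan is to prove both statements by the same mechanism, so I would focus on the first and then indicate the (minor) change needed for the primed version. Writing $\delta_j = \theta_j - \theta^\ast$ and performing a summation by parts (Abel summation) against the weights $1/\gamma_j$,
\[
\sum_{j=1}^{n_0} \frac{\theta_{j-1} - \theta_j}{\gamma_j}
= \sum_{j=1}^{n_0}\frac{\delta_{j-1}}{\gamma_j} - \sum_{j=1}^{n_0}\frac{\delta_j}{\gamma_j}
= \frac{\delta_0}{\gamma_1} - \frac{\delta_{n_0}}{\gamma_{n_0}} + \sum_{j=1}^{n_0-1}\delta_j\left(\frac{1}{\gamma_{j+1}} - \frac{1}{\gamma_j}\right).
\]
After dividing by $\sqrt{n_0}$ I would estimate each of the three terms separately, and the whole argument rests on one analytic input: the standard mean-squared control of the SGD iterates under strong convexity, $\E\|\delta_j\|^2 = O(\gamma_j)$, which follows from the recursion governed by Assumptions \ref{ass:cvx_global} and \ref{ass:reg} (this is the same kind of bound that already underlies the consistency Lemma \ref{lm:consis}). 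In particular $\E\|\delta_j\| \le C\sqrt{\gamma_j}$.

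The first term is deterministic and of order $O(1/\sqrt{n_0}) = o(1)$. For the second, the iterate bound gives
\[
\frac{\E\|\delta_{n_0}\|}{\gamma_{n_0}\sqrt{n_0}} \le \frac{C\sqrt{\gamma_{n_0}}}{\gamma_{n_0}\sqrt{n_0}} = \frac{C}{\sqrt{n_0\gamma_{n_0}}} \goto 0
\]
by the step-size condition \eqref{eq:step_large}, so $\delta_{n_0}/(\gamma_{n_0}\sqrt{n_0}) = o_{\P}(1)$ by Markov's inequality. For the third — the main term — I would write $\frac{1}{\gamma_{j+1}} - \frac{1}{\gamma_j} = \frac{1}{\gamma_j}\big(\frac{\gamma_j}{\gamma_{j+1}} - 1\big)$ and use the iterate bound again:
\[
\frac{1}{\sqrt{n_0}}\,\E\left\|\sum_{j=1}^{n_0-1}\delta_j\left(\frac{1}{\gamma_{j+1}} - \frac{1}{\gamma_j}\right)\right\|
\le \frac{C}{\sqrt{n_0}}\sum_{j=1}^{n_0-1}\frac{1}{\sqrt{\gamma_j}}\left|\frac{\gamma_j}{\gamma_{j+1}} - 1\right|,
\]
and the right-hand side tends to $0$ precisely by condition \eqref{eq:theta_diff_need}. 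Combining the three estimates with Markov's inequality yields $I_3/\sqrt{n_0} = o_{\P}(1)$.

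For the primed statement the summation by parts is identical with $\gamma_i'=\gamma_{n_0+i}$; the only difference is that the boundary iterate $\delta_0' = \theta_{n_0} - \theta^\ast$ is now random, but it is still $O_{\P}(\sqrt{\gamma_{n_0}})$, and
\[
\frac{\|\theta_{n_0}-\theta^\ast\|}{\gamma_{n_0+1}\sqrt{n_1}} = O_{\P}\!\left(\frac{\sqrt{\gamma_{n_0}}}{\gamma_{n_0+1}\sqrt{n_1}}\right) = o_{\P}(1),
\]
using $n_0 \asymp n_1$, the slow decay \eqref{eq:slow_decay} (so $\gamma_{n_0}\asymp\gamma_{n_0+1}$), and \eqref{eq:step_large}.

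The step I expect to be the actual work is securing the $\E\|\theta_j-\theta^\ast\|^2 = O(\gamma_j)$ bound uniformly in $j$ with a constant that does not blow up; once that is in hand, the summation by parts, the boundary estimate, and the reduction to \eqref{eq:theta_diff_need} are bookkeeping. Under the weaker, only-locally-strongly-convex Assumption \ref{ass:cvx} one must additionally combine this bound with the consistency $\theta_j \goto \theta^\ast$ to first enter the neighborhood where the quadratic control applies; this is the modification deferred to Appendix \ref{sec:extend-proof-lemma}, and it does not affect the structure of the argument above.
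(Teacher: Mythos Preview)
Your proposal is correct and is essentially the paper's own proof: the paper also applies Abel summation, bounds the boundary terms via \eqref{eq:step_large} together with the iterate bound $\E\|\theta_l-\theta^\ast\|^2\le C\gamma_l$ (this is exactly \eqref{eq:r_gamma}, established in the proof of Lemma~\ref{lm:i2_small}), and controls the main sum by reducing it to \eqref{eq:theta_diff_need}. The only cosmetic difference is that the paper dispatches the primed statement by writing it as a difference of two unprimed sums of lengths $n_0+n_1$ and $n_0$ (the same trick used at the start of Section~\ref{sec:negl-sum_j=1n-r_th-1}), whereas you redo the Abel summation directly with a random initial boundary term; both work.
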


\subsection{Proof of Lemma \ref{lm:normal}}
\label{sec:norm-sum_j=1n}

By definition, $\{\epsilon_j\}_{j=1}^{n_0}$ is a martingale difference with respect to $\{\F_j\}_{j=1}^{n_0}$:
\[
\E(\epsilon_j | \mathcal F_{j-1}) = 0
\]
for $1 \le j \le n_0$ (as a convention, set $\F_0 = \{\emptyset, \Omega\}$ if $\theta_0$ is deterministic, otherwise $\F_0 = \sigma(\theta_0)$ ) and, similarly, 
\[
\E(\epsilon_i' | \mathcal F'_{i-1}) = 0
\]
for $1 \le i \le n_1$.

The lemma below is a martingale equivalent of the central limit theorem (CLT). As a convention, set $\mathcal G_0 = \{\emptyset, \Omega\}$. The proof of this lemma is standard (for example, using characteristic functions) and is thus omitted. Interested readers can find its proof, for example, in \cite{hall2014martingale}. 
\begin{lemma}[Martingale difference CLT in the Lyapunov form]\label{lm:mtgl}
Let $\{M_l\}_{l=1}^{\infty}$ be a martingale difference adapted to a filtration $\{\mathcal G_l\}_{l=1}^{\infty}$ satisfying
\begin{equation}\label{eq:mtgl1}
\frac1{n} \sum_{l=1}^n \E(M_l^2 | \mathcal G_{l-1}) \Rightarrow \sigma^2
\end{equation}
for some constant $\sigma^2 \ge 0$ as $n \goto \infty$ and
\begin{equation}\label{eq:mtgl2}
\frac1{n^{1+\kappa/2}} \sum_{l=1}^n \E (M_l^{2+\kappa}) \goto 0
\end{equation}
for some constant $\kappa > 0$ as $n \goto \infty$. Then, this martingale difference satisfies
\[
\frac{\sum_{l=1}^n M_l}{\sqrt{n}} \Rightarrow \N(0,\sigma^2)
\]
as $n \goto \infty$.
\end{lemma}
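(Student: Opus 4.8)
The plan is to deduce this from an off-the-shelf martingale central limit theorem stated with a \emph{conditional} Lindeberg condition --- for instance a standard martingale CLT such as Theorem~3.2 of \cite{hall2014martingale} --- and then to check that the Lyapunov-type hypothesis \eqref{eq:mtgl2} is strong enough to supply that condition. First I would pass to a triangular array: for each $n$, set $X_{nl} := M_l / \sqrt n$ for $1 \le l \le n$, so that $\{X_{nl}, \mathcal G_l\}_{l=1}^n$ is a square-integrable martingale difference array and $\sum_{l=1}^n M_l / \sqrt n = \sum_{l=1}^n X_{nl}$. The cited theorem then yields $\sum_{l=1}^n X_{nl} \Rightarrow \N(0,\sigma^2)$ once one verifies (i) the conditional variances stabilize, $\sum_{l=1}^n \E(X_{nl}^2 \mid \mathcal G_{l-1}) \Rightarrow \sigma^2$, and (ii) the conditional Lindeberg condition $\sum_{l=1}^n \E\big( X_{nl}^2 \mathbbm{1}(|X_{nl}| > \varepsilon) \mid \mathcal G_{l-1}\big) \Rightarrow 0$ for every $\varepsilon > 0$.

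Item (i) is precisely hypothesis \eqref{eq:mtgl1} after dividing through by $n$, so nothing needs to be done. For item (ii), rewriting it back in terms of the $M_l$, I need $L_n := \frac1n \sum_{l=1}^n \E\big( M_l^2 \mathbbm{1}(|M_l| > \varepsilon \sqrt n) \mid \mathcal G_{l-1}\big) \Rightarrow 0$. Since $L_n \ge 0$, it suffices to show $\E L_n \goto 0$, and this is where the Lyapunov hypothesis enters: by the tower property and the elementary bound $t^2 \mathbbm{1}(|t| > \varepsilon\sqrt n) \le \varepsilon^{-\kappa} n^{-\kappa/2} |t|^{2+\kappa}$,
\[
\E L_n \;=\; \frac1n \sum_{l=1}^n \E\big( M_l^2 \mathbbm{1}(|M_l| > \varepsilon\sqrt n)\big) \;\le\; \frac1{\varepsilon^{\kappa}\, n^{1+\kappa/2}} \sum_{l=1}^n \E |M_l|^{2+\kappa},
\]
which tends to $0$ by \eqref{eq:mtgl2}. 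Hence $L_n \Rightarrow 0$, condition (ii) holds, and the martingale CLT delivers the conclusion; the degenerate case $\sigma^2 = 0$, where $\N(0,0)$ is read as the point mass at the origin, is covered by the same statement.

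The argument is essentially routine, and the one step that deserves a moment's thought is the reduction in the previous paragraph: \eqref{eq:mtgl2} controls \emph{unconditional} moments whereas the martingale CLT requires a \emph{conditional} Lindeberg condition, and the bridge is simply that a nonnegative random sequence whose expectation vanishes converges to $0$ in $L^1$, hence in probability, so no uniform-integrability or almost-sure refinement is needed. An alternative I would keep in reserve, should a cleaner citation be preferred, is to invoke the version of the martingale CLT phrased with an \emph{unconditional} Lindeberg condition together with \eqref{eq:mtgl1} (for example, Corollary~3.1 of \cite{hall2014martingale}), in which case the one-line truncation bound above applied to $\frac1n\sum_{l=1}^n \E\big(M_l^2 \mathbbm{1}(|M_l| > \varepsilon\sqrt n)\big)$ is already exactly the hypothesis required and step (ii) becomes immediate.
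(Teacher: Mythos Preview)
Your proposal is correct. The paper does not actually prove this lemma: it simply states that the proof is standard (mentioning characteristic functions) and refers the reader to \cite{hall2014martingale}, so your argument---reducing to a triangular array and verifying the variance-stabilization and conditional Lindeberg conditions of a standard martingale CLT from that same reference via the Lyapunov truncation bound---is in fact more detailed than what the paper supplies and is entirely in line with the route the authors point to.
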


Note that this lemma includes $\sigma^2 = 0$ as an example. In that case, we interpret $\N(0,\sigma^2)$ as a point mass at $0$. It can be proved using the theory of characteristics functions. Before turning to the proof of Lemma \ref{lm:normal}, we state the following two lemmas. Lemma \ref{lm:consis} claims that the SGD iterates are consistent for $\theta^\ast$ and its proof is provided at the end of the present section. The proof relies heavily on Lemma \ref{lm:r_s}, a well-known auxiliary result in stochastic approximation. Interested readers can find the proof of Lemma \ref{lm:r_s} in \cite{robbins1985convergence}.

\begin{lemma}[Robbins--Siegmund theorem]\label{lm:r_s}
Let $\{D_l, \beta_l, \eta_l, \zeta_l \}_{l=1}^{\infty}$ be non-negative and adapted to a filtration $\{ \mathcal G_l \}_{l=1}^{\infty}$. Assume
\[
\E[D_{l+1} | \mathcal G_l] \le (1 + \beta_l) D_l + \eta_l - \zeta_l
\]
for all $l \ge 1$ and, in addition, both $\sum \beta_l < \infty$ and $\sum \eta_l < \infty$ almost surely. Then, with probability one, $D_l$ converges to a random variable $0 \le D_{\infty} < \infty$ and $\sum \zeta_l < \infty$.
\end{lemma}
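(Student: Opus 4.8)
The plan is to prove the Robbins--Siegmund theorem by the classical device of rescaling the recursion so that it becomes an honest supermartingale, and then applying the supermartingale convergence theorem after a localization to cope with the fact that the summability hypotheses hold only almost surely.

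First I would introduce the predictable normalizing sequence $\alpha_l := \prod_{k=1}^{l-1}(1+\beta_k)^{-1}$ (with $\alpha_1 = 1$), which is $\mathcal{G}_{l-1}$-measurable, positive, and non-increasing in $l$. On the almost-sure event $\Omega_0 := \{\sum_k \beta_k < \infty\}\cap\{\sum_k \eta_k < \infty\}$ the product $\prod_k(1+\beta_k)$ converges to a finite positive value, so $\alpha_l \downarrow \alpha_\infty > 0$ and $\alpha_\infty \le \alpha_l \le 1$ for all $l$. Multiplying the hypothesized inequality by $\alpha_{l+1}$ and using the identity $\alpha_{l+1}(1+\beta_l) = \alpha_l$ gives
\[
\E[\alpha_{l+1}D_{l+1}\mid\mathcal{G}_l] \le \alpha_l D_l + \alpha_{l+1}\eta_l - \alpha_{l+1}\zeta_l,
\]
and hence the process $W_l := \alpha_l D_l + \sum_{k=1}^{l-1}\alpha_{k+1}\zeta_k - \sum_{k=1}^{l-1}\alpha_{k+1}\eta_k$ satisfies $\E[W_{l+1}\mid\mathcal{G}_l]\le W_l$, i.e.\ it is a supermartingale (a one-line cancellation).

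The main obstacle is that $W_l$ is not non-negative: it is only bounded below by $-\sum_{k\ge1}\alpha_{k+1}\eta_k$, a quantity finite only on $\Omega_0$, so the convergence theorem cannot be invoked directly. I would handle this by localization: for each integer $N$ set $\tau_N := \inf\{l:\ \sum_{k=1}^{l}\beta_k > N \text{ or } \sum_{k=1}^{l}\eta_k > N\}$, a stopping time, and consider $W_{l\wedge\tau_N}$. On $\{l\le\tau_N\}$ one has $\sum_{k=1}^{l-1}\alpha_{k+1}\eta_k \le \sum_{k=1}^{l-1}\eta_k \le N$, so $W_{l\wedge\tau_N}+N$ is a non-negative supermartingale and therefore converges a.s.\ (a further truncation of $D_l$ at level $M$ with $M\to\infty$ removes any need to assume $D_l$ is integrable). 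Since $\tau_N\to\infty$ on $\Omega_0$ and $\P(\Omega_0)=1$, letting $N\to\infty$ shows $W_l$ converges a.s.

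Finally I would read off the two conclusions. Because $\sum_{k\ge1}\alpha_{k+1}\eta_k<\infty$ a.s.\ ($\alpha_{k+1}\le1$ and $\sum_k\eta_k<\infty$), convergence of $W_l$ is equivalent to convergence of $\alpha_l D_l + \sum_{k=1}^{l-1}\alpha_{k+1}\zeta_k$; being the sum of the non-negative term $\alpha_l D_l$ and the non-decreasing term $\sum_{k=1}^{l-1}\alpha_{k+1}\zeta_k$, both pieces converge, so $\sum_{k\ge1}\alpha_{k+1}\zeta_k<\infty$ and $\alpha_l D_l$ converges. On $\Omega_0$ we have $\alpha_l\to\alpha_\infty\in(0,\infty)$, hence $D_l$ converges a.s.\ to a finite $D_\infty\ge0$; and $\alpha_\infty\sum_k\zeta_k \le \sum_k\alpha_{k+1}\zeta_k<\infty$ forces $\sum_k\zeta_k<\infty$ a.s. Everything outside the localization step is routine bookkeeping with the recursion, so I expect the delicate point to be the passage from the ``almost supermartingale'' to a genuinely non-negative one via the stopping times $\tau_N$.
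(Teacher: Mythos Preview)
Your proposal is correct and follows precisely the classical argument of Robbins and Siegmund themselves: rescale by $\alpha_l=\prod_{k<l}(1+\beta_k)^{-1}$ to turn the ``almost supermartingale'' into a genuine supermartingale, localize with stopping times to handle the a.s.\ (rather than deterministic) summability of $\beta_l$ and $\eta_l$, and then invoke non-negative supermartingale convergence. The paper does not give its own proof of this lemma; it explicitly labels it ``a well-known auxiliary result in stochastic approximation'' and refers the reader to \cite{robbins1985convergence} for the proof, so there is nothing to compare against beyond noting that your argument is exactly the one found in that reference.
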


\begin{lemma}\label{lm:consis}
Under Assumptions \ref{ass:cvx_global}, \ref{ass:reg}, and \ref{ass:size}, we have $\theta_l \goto \theta^\ast$ almost surely.

\end{lemma}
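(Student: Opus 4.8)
The plan is to run the standard stochastic-approximation Lyapunov argument with the potential $D_j := \|\theta_l - \theta^\ast\|^2$ (writing $D_l$ for $\|\theta_l-\theta^\ast\|^2$) and then invoke the Robbins--Siegmund theorem, Lemma~\ref{lm:r_s}. Writing the SGD update as $\theta_j = \theta_{j-1} - \gamma_j \nabla f(\theta_{j-1}) - \gamma_j\epsilon_j$, expanding the square, and using that $\epsilon_j$ is a martingale difference with respect to $\F_{j-1}$ (so the cross term vanishes after conditioning), one gets
\[
\E\big[D_j \,\big|\, \F_{j-1}\big] = D_{j-1} - 2\gamma_j \big\langle \theta_{j-1} - \theta^\ast,\ \nabla f(\theta_{j-1})\big\rangle + \gamma_j^2\Big( \|\nabla f(\theta_{j-1})\|^2 + \E\big[\|\epsilon_j\|^2 \,\big|\, \F_{j-1}\big]\Big).
\]
The remaining work is to bound the three non-trivial pieces.

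For the drift term, global $\rho$-strong convexity (Assumption~\ref{ass:cvx_global}) together with $\nabla f(\theta^\ast)=0$ gives $\langle \theta_{j-1} - \theta^\ast, \nabla f(\theta_{j-1})\rangle \ge \rho D_{j-1}$. For the curvature term, Lipschitz continuity of $\nabla f$ and $\nabla f(\theta^\ast)=0$ give $\|\nabla f(\theta_{j-1})\|^2 \le L^2 D_{j-1}$. For the noise term, I would convert the first display of Assumption~\ref{ass:reg} into a quadratic second-moment bound by taking traces: $\E_\theta\|\epsilon\|^2 = \operatorname{tr}(\E_\theta\epsilon\epsilon^\top) \le \operatorname{tr}(V) + d\,\big\|\E_\theta\epsilon\epsilon^\top - V\big\| \le \operatorname{tr}(V) + dC(\|\theta-\theta^\ast\| + \|\theta-\theta^\ast\|^2)$, and then using $\|\theta-\theta^\ast\| \le \tfrac12(1+\|\theta-\theta^\ast\|^2)$ to absorb the linear term, so that $\E[\|\epsilon_j\|^2\mid\F_{j-1}] \le C'(1 + D_{j-1})$ for some constant $C'$. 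Combining the three bounds yields
\[
\E\big[D_j \,\big|\, \F_{j-1}\big] \le \big(1 + \gamma_j^2(L^2 + C')\big)\,D_{j-1} + \gamma_j^2 C' - 2\rho\gamma_j D_{j-1},
\]
which is precisely of the Robbins--Siegmund form with $\beta_{j-1} = \gamma_j^2(L^2+C')$, $\eta_{j-1} = \gamma_j^2 C'$, $\zeta_{j-1} = 2\rho\gamma_j D_{j-1}$, all non-negative, and with $\sum_j\beta_j<\infty$, $\sum_j\eta_j<\infty$ by \eqref{eq:sq_finite}.

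Lemma~\ref{lm:r_s} then gives, almost surely, that $D_l$ converges to a finite random variable $D_\infty \ge 0$ and that $\sum_l \gamma_{l+1} D_l = \tfrac1{2\rho}\sum_l \zeta_l < \infty$. To conclude $D_\infty = 0$ a.s., I would argue by contradiction: on $\{D_\infty > 0\}$ one has $D_l \ge D_\infty/2 > 0$ for all large $l$, hence $\sum_l \gamma_{l+1}D_l \ge \tfrac{D_\infty}{2}\sum_l\gamma_{l+1}$, which diverges because $\sum_j\gamma_j=\infty$ (this follows from \eqref{eq:slow_decay} via part~1 of Lemma~\ref{lm:step_all}), contradicting $\sum_l\gamma_{l+1}D_l<\infty$. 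Therefore $\|\theta_l-\theta^\ast\|^2\to 0$ almost surely, as claimed. The only genuinely delicate point is the passage from the covariance bound in Assumption~\ref{ass:reg} to a second-moment bound that is at most quadratic in $\|\theta_{j-1}-\theta^\ast\|$ with a \emph{uniform} constant; everything else is the textbook Robbins--Siegmund recipe, and here (unlike in Appendix~\ref{sec:extend-proof-lemma}, where only the local convexity of Assumption~\ref{ass:cvx} is available) the global strong convexity makes the drift linear in $D_{j-1}$, which keeps the argument short.
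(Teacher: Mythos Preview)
Your proof is correct and follows the same Robbins--Siegmund template as the paper, with one cosmetic difference: the paper's Appendix~\ref{sec:appendix} proof takes the suboptimality $\Delta_l = f(\theta_l)-f^\ast$ as the Lyapunov potential (using the sandwich $2\rho\Delta \le \|\nabla f\|^2 \le 2L\Delta$), whereas you work directly with $D_l = \|\theta_l-\theta^\ast\|^2$. Under global strong convexity plus Lipschitz gradient the two potentials are equivalent up to constants, and in fact your choice is exactly the one the paper itself adopts in Appendix~\ref{sec:extend-proof-lemma} when it redoes the argument under the weaker local Assumption~\ref{ass:cvx}; so nothing is lost or gained, and your derivation of the quadratic noise bound $\E_\theta\|\epsilon\|^2 \le C'(1+\|\theta-\theta^\ast\|^2)$ from the trace of the covariance inequality in Assumption~\ref{ass:reg} is a welcome explicit step that the paper leaves implicit.
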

As a consequence of the lemma above, for any $\upsilon > 0$, the cardinality $\#\{l: \|\theta_l - \theta^\ast \| > \upsilon\}$ is finite almost surely. This lemma is mostly used through this fact. Next, we present the proof of Lemma \ref{lm:normal}.

\begin{proof}[Proof of Lemma \ref{lm:normal}]
Fix some $0 < \nu \le \delta$ define
\[
\widetilde\epsilon_j = 
\begin{cases}
\epsilon_j, &\text{if } \|\theta_{j-1} - \theta^\ast\| \le \nu\\
g(\theta^\ast, Z_j), &\text{if } \|\theta_{j-1} - \theta^\ast\| > \nu\\
\end{cases}
\]
for $j = 1 , \ldots, n_0$ and
\[
\widetilde\epsilon_i' = 
\begin{cases}
\epsilon'_i, &\text{if } \|\theta'_{i-1} - \theta^\ast\| \le \nu\\
g(\theta^\ast, Z'_i), &\text{if } \|\theta'_{i-1} - \theta^\ast\| > \nu\\
\end{cases}
\]
for $i = 1 , \ldots, n_1$. Our approach is to prove that
\begin{equation}\label{eq:tilde_norm}
\frac1{\sqrt{n_0}}\sum_{j=1}^{n_0} \widetilde\epsilon_j \Rightarrow \N(0, V)
\end{equation}
and
\begin{equation}\label{eq:diff_zero}
\frac1{\sqrt{n_0}}\sum_{j=1}^{n_0} \epsilon_j - \frac1{\sqrt{n_0}}\sum_{j=1}^{n_0} \widetilde\epsilon_j \Rightarrow 0,
\end{equation}
and
\begin{equation}\label{eq:tilde_norm2}
\frac1{\sqrt{n_1}}\sum_{i=1}^{n_1} \widetilde\epsilon_i' \Rightarrow \N(0, V)
\end{equation}
and
\begin{equation}\label{eq:diff_zero2}
\frac1{\sqrt{n_1}}\sum_{i=1}^{n_1} \epsilon'_i - \frac1{\sqrt{n_1}}\sum_{i=1}^{n_1} \widetilde\epsilon'_i \Rightarrow 0.
\end{equation}

Below, the first step is to prove \eqref{eq:tilde_norm} and \eqref{eq:diff_zero}. Then, going forward, we will prove \eqref{eq:tilde_norm2} and \eqref{eq:diff_zero2}, meanwhile showing that \eqref{eq:tilde_norm} and \eqref{eq:tilde_norm2} are asymptotically independent. This shall complete the proof of Lemma \ref{lm:normal}.

To show \eqref{eq:diff_zero}, note that Lemma \ref{lm:consis} ensures that, almost surely, the number of $1 \le j < \infty$ such that $\epsilon_j$ and $\widetilde\epsilon_j$ differ is finite. 

Now, we first turn to prove \eqref{eq:tilde_norm}. For any fixed vector $a$ of the same dimension as $\epsilon_j$, to prove \eqref{eq:tilde_norm} it suffices to show that
\[
\frac1{\sqrt{n_0}}\sum_{j=1}^{n_0} a^\top \widetilde\epsilon_j \Rightarrow \N(0, a^\top V a^\top).
\]
Write $M_j = a^\top \widetilde\epsilon_j$. We aim to verify \eqref{eq:mtgl1} and \eqref{eq:mtgl2} in Lemma \ref{lm:mtgl}. Observe that
\[
\begin{aligned}
\frac1{n_0} \sum_{j=1}^{n_0} \E(M_j^2 | \mathcal F_{j-1}) &= \frac1{n_0} a^\top \left[ \sum_{j=1}^{n_0} \E(\widetilde\epsilon_j \widetilde\epsilon_j^\top |\mathcal F_{j-1}) \right] a\\
\end{aligned}
\]
By construction, it obeys
\[
\E(\widetilde\epsilon_j \widetilde\epsilon_j^\top |\mathcal F_{j-1}) =
\begin{cases}
\E_{\theta_{j-1}} \epsilon \epsilon^\top, & \text{if } \|\theta_{j-1} - \theta^\ast\| \le \nu\\
V, & \text{if } \|\theta_{j-1} - \theta^\ast\| >  \nu,
\end{cases}
\]
which, together with Assumption \ref{ass:reg}, implies that it always satisfies
\[
\left\| \E(\widetilde\epsilon_j \widetilde\epsilon_j^\top |\mathcal F_{j-1}) - V \right\| \le C \min\{ \|\theta_{j-1} - \theta^\ast\|, \nu \}
\]
for all $j \ge 1$. As a result,
\[
\left\| \frac1{n_0} \sum_{j=1}^{n_0} \E(\widetilde\epsilon_j \widetilde\epsilon_j^\top |\mathcal F_{j-1}) - V \right\| \le \frac{C}{n_0} \sum_{j=1}^{n_0} \min\{ \|\theta_{j-1} - \theta^\ast\|, \nu \}.
\]
By Lemma \ref{lm:consis}, the right-hand side term above diminishes in probability since $\theta_j \goto \theta^\ast$ almost surely, that is,
\[
\frac{C}{n_0} \sum_{j=1}^{n_0} \min\{ \|\theta_{j-1} - \theta^\ast\|, \nu \} \goto 0
\]
almost surely. Hence,
\[
\frac1{n_0} a^\top \left[ \sum_{j=1}^{n_0} \E(\widetilde\epsilon_j \widetilde\epsilon_j^\top |\mathcal F_{j-1}) \right] a \goto a^\top V a
\]
almost surely. Thus, \eqref{eq:mtgl1} is satisfied for the martingale difference $\widetilde\epsilon_j$. Next, we turn to verify \eqref{eq:mtgl2}. By the last point in Assumption \ref{ass:reg}, taking $\kappa = \delta$ gives
\[
\begin{aligned}
\E(|M_j|^{2+\delta} | \mathcal{F}_{j-1}) &= \E(|a^\top \widetilde\epsilon_j|^{2+\delta} \big| \mathcal{F}_{j-1})\\
&= \E(|a^\top \widetilde\epsilon_j|^{2+\delta} \big| \mathcal{F}_{j-1})\\
&\le \|a\|^{2+\delta} \E(\|\widetilde\epsilon_j\|^{2+\delta} \big| \mathcal{F}_{j-1})\\
& \lesssim \|a\|^{2+\delta}.
\end{aligned}
\]
Hence, we get
\[
\begin{aligned}
\frac1{n_0^{1+\delta/2}} \sum_{j=1}^{n_0} \E |M_j|^{2+\delta}  \lesssim \frac1{n_0^{1+\delta/2}} \sum_{j=1}^{n_0} \|a\|^{2+\delta} = \frac{\|a\|^{2+\delta}}{n_0^{\delta/2}},
\end{aligned}
\]
which clearly diminishes to zero as $n_0 \goto 0$. Summarizing these results validates \eqref{eq:tilde_norm}.

Now, we move to \eqref{eq:tilde_norm2} and \eqref{eq:diff_zero2}. Conditional on $\theta'_0 \equiv \theta_{n_0}$, the proof of \eqref{eq:tilde_norm} and \eqref{eq:diff_zero} can seamlessly carry over to this case. And, if we can further show that convergence in \eqref{eq:tilde_norm2} and \eqref{eq:diff_zero2} does not depend on the initial point $\theta_{n_0}$ as $n_0 \goto \infty$, then we would both verify the desired independence between \eqref{eq:tilde_norm} and \eqref{eq:tilde_norm2} and establish the unconditional convergence in \eqref{eq:tilde_norm2} and \eqref{eq:diff_zero2}. To see this, note that Lemma \ref{lm:consis} ensures that $\theta'_0 =  \theta^\ast + o_{\P}(1)$ as $n_0 \goto \infty$. (Actually, a stronger result holds
\[
\sup_{1 \le i < \infty} \|\theta'_i - \theta^\ast\| = o_{\P}(1)
\]
as $n_0 \goto \infty$ due to the fact that $\{Z_j\}_{j=1}^{\infty}$ has the same distribution as $\{Z_j\}_{j=1}^{n_0} \cup \{Z_i'\}_{i=1}^{\infty}$.) Put it differently, with probability tending to one, the second segment of the SGD starts at a point uniformly close to $\theta^\ast$, implying the convergence in \eqref{eq:tilde_norm2} and \eqref{eq:diff_zero2} is asymptotically independent of $\theta_{n_0}$. This completes the proof of this lemma.

\end{proof}

\begin{remark}
The fact that $n_0 \asymp n_1$ is not used in the proof above. As a matter of fact, the proof can be significantly simplified provided that $n_0 \asymp n_1$ by invoking Donsker's theorem for martingales.
\end{remark}

We conclude this section by presenting the proof of Lemma \ref{lm:consis}.
\begin{proof}[Proof of Lemma \ref{lm:consis}]
Write $\Delta_l$ for the suboptimality $f(\theta_l) - f^\ast$. By the $L$-smoothness of $f$, we get
\[
f(\theta_l) = f(\theta_{l-1} - \gamma_l g(\theta_{l-1}, Z_l)) \le f(\theta_{l-1}) - \gamma_l g(\theta_{l-1}, Z_l)^\top \nabla f(\theta_{l-1}) + \frac{L}{2} \|\gamma_{l} g(\theta_{l-1}, Z_l)\|^2,
\]
from which we get
\[
\begin{aligned}
\E(\Delta_l | \F_{l-1}) &= \E(f(\theta_l) - f^\ast | \F_{l-1}) \\
&\le \Delta_{l-1} - \gamma_l \|\nabla f(\theta_{l-1})\|^2 + \E\left[\frac{L}{2} \|\gamma_{l} g(\theta_{l-1}, Z_l)\|^2 \Big| \F_{l-1}\right]\\
&= \Delta_{l-1} - \gamma_l \|\nabla f(\theta_{l-1})\|^2 + \frac{L\gamma_l^2 \|\nabla f (\theta_{l-1})\|^2}{2} + \frac{L\gamma_l^2}{2}\E_{\theta_{l-1}} \|\epsilon\|^2\\
&\le \Delta_{l-1} - \gamma_l \times 2\rho \Delta_{l-1} + \frac{L\gamma_l^2 \times 2L \Delta_{l-1}}{2} + \frac{L\gamma_l^2}{2} \times c'(1 + \|\theta_{l-1} - \theta^\ast\|^2)\\
&\le \Delta_{l-1} - \gamma_l \times 2\rho \Delta_{l-1} + \frac{L\gamma_l^2 \times 2L \Delta_{l-1}}{2} + \frac{c'L\gamma_l^2}{2} + \frac{c'L\gamma_l^2}{2} \frac{2\Delta_{l-1}}{\rho}\\
\end{aligned}
\]
where we have made use of the inequalities $2\rho(f(\theta) - f^\ast) \le \|\nabla f(\theta)\|^2 \le 2L (f(\theta) - f^\ast)$. Rearranging the inequality above, we get
\begin{equation}\label{eq:r_s_later}
\E(\Delta_l | \F_{l-1}) \le (1 + c_1 \gamma_l^2)\Delta_{l-1} + c_2 \gamma_l^2 - c_3 \gamma_l \Delta_{l-1},
\end{equation}
where
\[
c_1 = L^2 + \frac{c'L}{\rho}, c_2 = \frac{c'L}{2}, c_3 = 2\rho.
\]

To conclude the proof, we need to apply the Robbins--Siegmund theorem (Lemma \ref{lm:r_s}). Observe that, by Assumption \ref{ass:size},
\[
\sum_{l=1}^{\infty} c_1 \gamma_l^2  < \infty, \text{and } \sum_{l=1}^{\infty} c_2 \gamma_l^2 < \infty
\]
Hence, it follows from the Robbins--Siegmund theorem that, almost surely, $\Delta_l \equiv f(\theta_l) - f^\ast$ converges to a finite random variable, say, $\Delta_{\infty} \ge 0$. Moreover, this theorem ensures
\begin{equation}\label{eq:siegmund1}
\sum_{l=1}^{\infty} c_3 \gamma_l \Delta_{l-1} < \infty.
\end{equation}
If $\P(\Delta_{\infty} > 0) > 0$, then the left-hand side of \eqref{eq:siegmund1} would be infinite with positive probability due to the fact $\sum_{l=1}^{\infty} \gamma_l = \infty$, a contradiction to \eqref{eq:siegmund1}. This reveals that $f(\theta_l) \goto f^\ast$ almost surely and, as a consequence, $\theta_l \goto \theta^\ast$ with probability one.

\end{proof}

\subsection{Proof of Lemma \ref{lm:i2_small}}
\label{sec:negl-sum_j=1n-r_th-1}
Recognizing that $n_0 \asymp n_1$ and the relationship that
\[
\frac1{\sqrt{n_1}}\sum_{i=1}^{n_1} r_{\theta'_{i-1}} =\sqrt{\frac{n_0+n_1}{n_1}} \cdot  \frac1{\sqrt{n_0 + n_1}}\sum_{l=1}^{n_0+n_1} r_{\theta_{l-1}} - \sqrt{\frac{n_0}{n_1}} \cdot  \frac1{\sqrt{n_0}}\sum_{j=1}^{n_0} r_{\theta_{j-1}},
\]
where the convention $\theta_l = \theta'_{l-n_0}$ for $l \ge n_0$ is made. Thus, it suffices to prove that
\[
\frac1{\sqrt{n}}\sum_{l=1}^{n} \|r_{\theta_{l-1}}\| = o_{\P}(1)
\]
as $n \goto \infty$.

Recall that $r_{\theta} = \nabla f(\theta) - H(\theta - \theta^\ast)$, where $H$ is the Hessian of $f$ at $\theta^\ast$. Using the vector-valued mean value theorem, we get for any $\theta$
\[
\begin{aligned}
\|r_{\theta}\| &= \left\| \nabla f(\theta) - H (\theta - \theta^\ast)  \right\| \\
&= \left\| \left[\nabla f(\theta) - H (\theta - \theta^\ast) \right] - \left[\nabla f(\theta^\ast) - H (\theta^\ast - \theta^\ast) \right] \right\|\\
& \le \left\| \left( \nabla^2 f(\theta') - H \right) (\theta - \theta^\ast)\right\|,
\end{aligned}
\]
where $\theta' = c \theta^\ast + (1 - c)\theta$ for some $0 < c < 1$. To proceed, note that from Assumption \ref{ass:cvx}, if $\|\theta - \theta^\ast\| \le \delta$, then
\[
\begin{aligned}
\left\| \left( \nabla^2 f(\theta') - H \right) (\theta - \theta^\ast)\right\| &\le \left\| \nabla^2 f(\theta') - H \right\| \left\|\theta - \theta^\ast \right\|\\
& \le L' \left\|\theta' - \theta^\ast \right\| \left\|\theta - \theta^\ast \right\|\\
& \le L' \left\|\theta - \theta^\ast \right\| \left\|\theta - \theta^\ast \right\|\\
& = L' \left\|\theta - \theta^\ast \right\|^2.
\end{aligned}
\]
That is, $\|r_{\theta}\| \le L' \|\theta - \theta^\ast\|^2$ if $\|\theta - \theta^\ast\| \le \delta$. In general, we have
\[
\begin{aligned}
\| r_{\theta} \| &\le \|\nabla f(\theta)\| + \|H(\theta - \theta^\ast)\| \\
&\le L \|\theta - \theta^\ast\| + L \|\theta - \theta^\ast\|\\
&\le 2L \|\theta - \theta^\ast\|,
\end{aligned}
\]
no matter $\|\theta - \theta^\ast\| \le \delta$ or $\|\theta - \theta^\ast\| > \delta$. Applying the results above yields
\[
\begin{aligned}
\sum_{l=1}^{n} \| r_{\theta_{l-1}} \| &\le 2L\sum_{l=1}^{n} \|\theta_l - \theta^\ast\| \ind_{\|\theta_{l-1} - \theta^\ast\| > \delta} + L'\sum_{l=1}^{n} \|\theta_{l-1} - \theta^\ast\|^2 \ind_{\|\theta_{l-1} - \theta^\ast\| \le \delta}\\
&\le 2L\sum_{l=1}^{n} \|\theta_l - \theta^\ast\| \ind_{\|\theta_{l-1} - \theta^\ast\| > \delta} + L'\sum_{l=1}^{n} \|\theta_{l-1} - \theta^\ast\|^2.
\end{aligned}
\]
By Lemma \ref{lm:consis}, we have
\[
\frac{2L}{\sqrt{n}}\sum_{l=1}^{n} \|\theta_l - \theta^\ast\| \ind_{\|\theta_{l-1} - \theta^\ast\| > \delta} \goto 0
\]
as $n \goto \infty$ with probability one. Hence, it suffices to show that $\sum_{l=1}^{n} \|\theta_{l-1} - \theta^\ast\|^2/\sqrt{n} = o_{\P}(1)$, which is implied by
\begin{equation}\label{eq:r_small1}
\frac1{\sqrt n}\sum_{l=1}^{n} \E \|\theta_l - \theta^\ast\|^2 = o(1).
\end{equation}

Now, the proof of Lemma \ref{lm:i2_small} is reduced to showing \eqref{eq:r_small1}. Before presenting the proof, we list two useful lemmas. The first lemma is due to Leopold Kronecker and a proof can be found in \cite{shiryaev1996probability}, and the proof of Lemma \ref{lm:series} is given at the end of this section for self-containedness.

\begin{lemma}[Kronecker's lemma]\label{lm:kronecker}
Let $\{a_l\}_{l=1}^\infty$ be an infinite sequence that has a convergent sum $\sum_{l=1}^\infty a_l$. Then, for an arbitrary $\{b_l\}_{l=1}^\infty$ satisfying $0 < b_1 \leq b_2 \leq b_3 \leq \cdots$ and $b_l \to \infty$, it must hold
\[
\lim_{n \to \infty}\frac1{b_n}\sum_{l=1}^n b_l a_l = 0.
\]
\end{lemma}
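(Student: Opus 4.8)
The plan is to reduce the claim to a summation-by-parts identity followed by a Toeplitz-type averaging argument. First I would introduce the partial sums $S_n := \sum_{l=1}^n a_l$ with the convention $S_0 = 0$, and set $S := \lim_{n\to\infty} S_n$, which exists by hypothesis. Writing $a_l = S_l - S_{l-1}$ and summing by parts gives the key algebraic identity
\[
\sum_{l=1}^n b_l a_l \;=\; \sum_{l=1}^n b_l (S_l - S_{l-1}) \;=\; b_n S_n - \sum_{l=1}^{n-1} (b_{l+1} - b_l) S_l,
\]
where the boundary term $b_1 S_0$ vanishes. Dividing through by $b_n > 0$ yields
\[
\frac{1}{b_n}\sum_{l=1}^n b_l a_l \;=\; S_n - \frac{1}{b_n}\sum_{l=1}^{n-1}(b_{l+1}-b_l)S_l .
\]

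Since $S_n \to S$, it then suffices to show that the second term on the right-hand side also converges to $S$, so that the whole expression tends to $S - S = 0$. This term is a weighted average of the $S_l$: the weights $w_{n,l} := (b_{l+1}-b_l)/b_n$ are nonnegative because $\{b_l\}$ is nondecreasing, and $\sum_{l=1}^{n-1} w_{n,l} = (b_n - b_1)/b_n \to 1$ since $b_n \to \infty$. I would then carry out the usual $\varepsilon$-splitting: given $\varepsilon > 0$, pick $N$ with $|S_l - S| < \varepsilon$ for all $l \ge N$; the contribution of the indices $l < N$ to $\frac{1}{b_n}\sum_{l=1}^{n-1}(b_{l+1}-b_l)(S_l - S)$ is a fixed finite quantity divided by $b_n \to \infty$, hence vanishes as $n \to \infty$, while the tail contribution is bounded in absolute value by $\varepsilon (b_n - b_N)/b_n \le \varepsilon$. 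Combining this with $\frac{1}{b_n}\sum_{l=1}^{n-1}(b_{l+1}-b_l) \to 1$ gives $\limsup_{n\to\infty} \bigl|\frac{1}{b_n}\sum_{l=1}^{n-1}(b_{l+1}-b_l)S_l - S\bigr| \le \varepsilon$, and letting $\varepsilon \downarrow 0$ establishes the desired limit $S$.

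There is no serious obstacle here; the only point that requires a little care is making the averaging step rigorous, i.e. splitting the sum into a head and a tail and using both $b_n \to \infty$ (to kill the head) and the monotonicity $b_{l+1} \ge b_l$ (which makes the weights nonnegative, so that the triangle-inequality bound of the tail by $\varepsilon$ is legitimate). Assembling the two displays above then completes the proof of Lemma~\ref{lm:kronecker}.
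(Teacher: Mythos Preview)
Your argument is correct and is exactly the classical proof of Kronecker's lemma: Abel summation to rewrite $\frac{1}{b_n}\sum_{l=1}^n b_l a_l$ as $S_n$ minus a weighted average of the partial sums $S_l$, followed by a Toeplitz-type averaging step using the nonnegativity of the increments $b_{l+1}-b_l$ and $b_n\to\infty$. The paper does not supply its own proof of this lemma; it simply cites Shiryaev's textbook, where the same summation-by-parts argument appears, so your proposal matches the intended reference.
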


\begin{lemma}\label{lm:series}
Let $c_1$ and $c_2$ be arbitrary positive constants. Under Assumption \ref{ass:size} and $\gamma_j \goto 0$, if $B_j > 0$ obeys
\[
B_l \le \frac{\gamma_{l-1}(1 - c_1 \gamma_l)}{\gamma_l} B_{l-1} + c_2 \gamma_l,
\]
then $\sup_l B_l < \infty$.
\end{lemma}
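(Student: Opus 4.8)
\textbf{Proof plan for Lemma~\ref{lm:series}.} The plan is to recast the inequality in the standard discrete Gr\"onwall form $B_l \le (1-a_l)B_{l-1} + b_l$ and then reduce to an elementary convergence fact for such recursions. Concretely, I would set
\[
a_l := 1 - \frac{\gamma_{l-1}(1 - c_1\gamma_l)}{\gamma_l}, \qquad b_l := c_2\gamma_l,
\]
so the hypothesis is exactly $B_l \le (1-a_l)B_{l-1} + b_l$. Since the first finitely many $B_1,\dots,B_{l_0}$ are finite by assumption, it suffices to work with $l$ large (so that $\gamma_l$ is small) and to prove $\limsup_l B_l < \infty$; boundedness of the whole sequence then follows.

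The one nontrivial analytic step is to estimate $a_l$, for which I would use \eqref{eq:slow_decay}. Applying it with index $l-1$ gives $\log(\gamma_{l-1}/\gamma_l) = \gamma_{l-1}\cdot o(1)$, and since $\gamma_l \to 0$ this forces $\gamma_{l-1}/\gamma_l \to 1$; exponentiating, $\gamma_{l-1}/\gamma_l = 1 + o(\gamma_l)$. Hence
\[
\frac{\gamma_{l-1}(1-c_1\gamma_l)}{\gamma_l} = \bigl(1 + o(\gamma_l)\bigr)(1 - c_1\gamma_l) = 1 - c_1\gamma_l + o(\gamma_l),
\]
so $a_l = c_1\gamma_l\,(1 + o(1))$. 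In particular, for $l$ large one has $a_l \in (0,1)$ and $a_l \ge \tfrac{c_1}{2}\gamma_l$, which together with $\sum_l \gamma_l = \infty$ (a consequence of \eqref{eq:slow_decay} by Lemma~\ref{lm:step_all}) gives $\sum_l a_l = \infty$; moreover $b_l/a_l = c_2\gamma_l/a_l \to c_2/c_1 < \infty$.

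It then remains to prove the elementary lemma: if $B_l \ge 0$ satisfies $B_l \le (1-a_l)B_{l-1} + b_l$ for $l \ge l_1$ with $0 < a_l < 1$, $\sum_l a_l = \infty$, and $\limsup_l b_l/a_l = \beta < \infty$, then $\limsup_l B_l \le \beta$. I would argue this by fixing $\beta' > \beta$, choosing $l_1$ so large that $b_l \le \beta' a_l$ there, and setting $g_l := B_l - \beta'$; subtracting $\beta'$ from the recursion yields $g_l \le (1-a_l)g_{l-1}$ for $l \ge l_1$. If $g_m \le 0$ for some $m \ge l_1$ then $g_l \le 0$ for all $l \ge m$ (because $0 < 1-a_l < 1$); otherwise $g_l > 0$ for all $l \ge l_1$ and $g_l \le g_{l_1}\prod_{k=l_1+1}^{l}(1-a_k) \le g_{l_1}\exp\!\bigl(-\sum_{k=l_1+1}^{l} a_k\bigr) \to 0$. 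Either way $\limsup_l g_l \le 0$, i.e.\ $\limsup_l B_l \le \beta'$; letting $\beta' \downarrow \beta$ completes it. Taking $\beta = c_2/c_1$ gives $\limsup_l B_l \le c_2/c_1$, hence $\sup_l B_l < \infty$.

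The main obstacle, and the whole reason \eqref{eq:slow_decay} is invoked, is that $\gamma_{l-1}/\gamma_l > 1$ for a decreasing step-size schedule, so this factor fights the contraction $1 - c_1\gamma_l$; the content of \eqref{eq:slow_decay} is exactly that the expansion is of lower order $o(\gamma_l)$ and is therefore swallowed by the $-c_1\gamma_l$ term, leaving a genuine (if slowly vanishing) contraction. Once that estimate is secured, the rest is routine bookkeeping with the Gr\"onwall recursion.
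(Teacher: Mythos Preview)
Your proposal is correct and follows essentially the same approach as the paper: both arguments hinge on using \eqref{eq:slow_decay} to show $\gamma_{l-1}/\gamma_l = 1 + o(\gamma_l)$, so that the recursion becomes $B_l \le \bigl(1 - (c_1 + o(1))\gamma_l\bigr)B_{l-1} + c_2\gamma_l$, after which boundedness follows from the threshold $c_2/c_1$. The paper phrases the final step as a contradiction (defining $A_l$ with equality and arguing the sequence cannot exceed $(1+o(1))c_2/c_1$ indefinitely), whereas you give a direct Gr\"onwall bound with an explicit $\limsup$; your version is arguably cleaner, but the substance is the same.
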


\begin{proof}[Proof of Lemma \ref{lm:i2_small}]

We begin by pointing out that \eqref{eq:r_small1} is implied by
\begin{equation}\label{eq:r_gamma}
\E\|\theta_l - \theta^\ast\|^2 \le C \gamma_l
\end{equation}
for some fixed $C$ and all $l \ge 1$. To see this, note that \eqref{eq:r_gamma} gives
\[
\begin{aligned}
\frac1{\sqrt n}\sum_{l=1}^{n} \E \|\theta_l - \theta^\ast\|^2 &\le \frac1{\sqrt n}\sum_{l=1}^{n} C\gamma_l\\
&\le C \cdot\frac1{\sqrt n}\sum_{l=1}^{n} \sqrt{l} \cdot \frac{\gamma_l}{\sqrt l}.
\end{aligned}
\]
Assumption \ref{ass:size} says that $\sum_{l=1}^{\infty} \frac{\gamma_l}{\sqrt l}$ converges. Hence, taking $b_l = \sqrt{l}$, Kronecker's lemma readily yields
\[
\frac1{\sqrt n}\sum_{l=1}^{n} \sqrt{l} \cdot \frac{\gamma_l}{\sqrt l} \goto 0.
\]

The rest of the proof is devoted to verifying \eqref{eq:r_gamma}. Rewriting \eqref{eq:r_s_later} and taking expectations on both sides give (recall that $\Delta_l = f(\theta_l) - f^\ast$)
\[
\E \Delta_l  \le (1 -c_3\gamma_l + c_1 \gamma_l^2) \E\Delta_{l-1} + c_2 \gamma_l^2,
\]
which is equivalent to
\[
\frac{\E \Delta_l}{\gamma_l}  \le \frac{\gamma_{l-1}(1 -c_3\gamma_l + c_1 \gamma_l^2)}{\gamma_l} \frac{\E\Delta_{l-1}}{\gamma_{l-1}} + c_2 \gamma_l.
\]
Since $\gamma_l \goto 0$, then for sufficiently large $l$, we have $c_1 \gamma_l^2 < c_3\gamma_l/2$. Plugging this inequality to the display above gives
\begin{equation}\label{eq:delta_gamma_d}
\frac{\E \Delta_l}{\gamma_l}  \le \frac{\gamma_{l-1}(1 - 0.5 c_3\gamma_l)}{\gamma_l} \frac{\E\Delta_{l-1}}{\gamma_{l-1}} + c_2 \gamma_l
\end{equation}
for sufficiently large $l$. With \eqref{eq:delta_gamma_d} in place, Lemma \ref{lm:series} immediately concludes that
\[
\sup_{1 \le l < \infty}\frac{\E \Delta_l}{\gamma_l} < \infty
\]
or, equivalent,
\[
\E (f(\theta_l) - f^\ast) \le C' \gamma_l
\]
for all $l$ and some constant $C'$. Since $f(\theta_l) - f^\ast \ge \frac{\rho}{2}\|\theta_l - \theta^\ast\|^2$ due to the strong convexity of $f$, it follows that
\[
\E \|\theta_l - \theta^\ast\|^2 \le \frac{2C'}{\rho} \gamma_l.
\]
This completes the proof of the lemma.

\end{proof}

\begin{proof}[Proof of Lemma \ref{lm:series}]
Suppose on the contrary that $\sup B_l = \infty$. Consider a sequence $\{A_l\}$ defined as
\[
A_l = \frac{\gamma_{l-1}(1 - c_1 \gamma_l)}{\gamma_l} A_{l-1} + c_2 \gamma_l
\]
for all $l \ge 1$ (set $\gamma_0$ to some appropriate constant). It is clear that $A_l \ge B_l$ for all $l$. Together with the assumption $\sup B_l = \infty$, this implies that $\sup A_l = \infty$. Now, observe that
\[
\begin{aligned}
\frac{\gamma_{l-1}(1 - c_1 \gamma_l)}{\gamma_l} A_{l-1} + c_2 \gamma_l &= \left(1 + \frac{\gamma_{l-1}-\gamma_l}{\gamma_l} \right) (1 - c_1\gamma_l)A_{l-1} + c_2 \gamma_l\\
&= (1 + o(\gamma_l))(1 - c_1 \gamma_l)A_{l-1} + c_2 \gamma_l\\
&= \left(1 - (c_1-o(1)) \gamma_l \right) A_{l-1} + c_2 \gamma_l\\
&= A_{l-1} - \left[\left( c_1-o(1) \right) A_{l-1} - c_2 \right] \gamma_l.
\end{aligned}
\]
That is,
\[
A_l = A_{l-1} - \left[\left( c_1-o(1) \right) A_{l-1} - c_2 \right] \gamma_l.
\]
Thus, once $A_l \ge (1+o(1))c_2/c_1$ for some $l$, this sequence starts to decreases until it falls below the cutoff $(1+o(1))c_2/c_1$. Therefore, this sequence can not diverge to $\infty$.

\end{proof}

\subsection{Proof of Lemma \ref{lm:theta_diff_small}}
\label{sec:negl-other-term}

As earlier in the proof of Lemma \ref{lm:i2_small}, we only need to prove that
\[
\frac1{\sqrt{n}}\sum_{l=1}^{n} \frac{\theta_{l-1} - \theta_l}{\gamma_l} = o_{\P}(1)
\]
as $n \goto \infty$. Applying the Abel summation, we get
\[
\sum_{l=1}^n \frac{\theta_{l-1} - \theta_l}{\gamma_l} = \sum_{l=1}^{n-1} (\theta_l - \theta^\ast)(\gamma_{l+1}^{-1} - \gamma_l^{-1}) - (\theta_n - \theta^\ast) \gamma_n^{-1} + (\theta_0 - \theta^\ast) \gamma_1^{-1}.
\]
Recognizing \eqref{eq:step_large} in Assumption \ref{ass:size} and \eqref{eq:r_gamma}, we get
\[
\frac1{\sqrt n} \frac{\E \|\theta_n - \theta^\ast\|}{\gamma_n} \le \frac{\sqrt{\E \|\theta_n - \theta^\ast\|^2}}{\sqrt{n} \gamma_n} \le \frac{\sqrt{C \gamma_n}}{ \sqrt{n} \gamma_n} = \sqrt{\frac{C}{n \gamma_n}} \goto 0,
\]
which together with
\[
\frac1{\sqrt{n}} \frac{\theta_0 - \theta^\ast}{\gamma_1} \goto 0
\]
demonstrates that it suffices to show that
\begin{equation}\nonumber
\frac1{\sqrt n}\sum_{l=1}^{n-1} \left| \gamma_{l+1}^{-1} - \gamma_l^{-1} \right| \|\theta_l - \theta^\ast\| \goto 0.
\end{equation}
The display above immediately follows from
\begin{equation}\label{eq:gamma_inv}
\lim_{n \goto \infty} \frac1{\sqrt n}\sum_{l=1}^{n-1} \left| \gamma_{l+1}^{-1} - \gamma_l^{-1} \right| \E \|\theta_l - \theta^\ast\| = 0.
\end{equation}

Next, we turn to prove \eqref{eq:gamma_inv}. In fact, \eqref{eq:r_gamma} gives
\begin{equation}\nonumber
\begin{aligned}
\frac1{\sqrt n}\sum_{l=1}^{n-1} |\gamma_{l+1}^{-1} - \gamma_l^{-1}| \E \|\theta_l - \theta^\ast\| &\le \frac1{\sqrt n}\sum_{l=1}^{n-1} |\gamma_{l+1}^{-1} - \gamma_l^{-1}| \sqrt{\E \|\theta_l - \theta^\ast\|^2}\\
&\le \frac1{\sqrt n}\sum_{l=1}^{n-1} |\gamma_{l+1}^{-1} - \gamma_l^{-1}| \sqrt{C\gamma_l}\\
&\le \sqrt{\frac{C}{n}}\sum_{l=1}^{n-1}\gamma_l^{\frac12} \left|\gamma_{l+1}^{-1} - \gamma_l^{-1}\right|\\
& \goto 0
\end{aligned}
\end{equation}
as $n \goto \infty$. This is given by \eqref{eq:theta_diff_need} in Assumption \ref{ass:size}, thereby establishing \eqref{eq:gamma_inv}.

\subsection{Proof of Lemma \ref{lm:step_all}}
\label{sec:proof-lemma-refxxxx}

\begin{proof}[Proof of Lemma \ref{lm:step_all}]
We prove the three statements one by one, as follows.
\begin{enumerate}
\item 
Given \eqref{eq:slow_decay}, suppose on the contrary that
\begin{equation}\label{eq:slow_decay_oppo}
\sum \gamma_j < \infty.
\end{equation}
Let
\[
a_j = \frac1{\gamma_j} \log\frac{\gamma_j}{\gamma_{j+1}}.
\]
Then, from this definition we get
\[
\gamma_{m+1} =  \gamma_1 \e^{-a_m \gamma_m - \cdots - a_1 \gamma_1}.
\]
The exponent, namely $-a_m \gamma_m - \cdots - a_1 \gamma_1$, satisfies
\[
\begin{aligned}
-a_m \gamma_{m} - \cdots - a_1 \gamma_1 &\ge -\sup_{l \ge 1} |a_l| \sum_{l=1}^{m} \gamma_l\\
&\ge -\sup_{l \ge 1} |a_l| \sum_{l=1}^{\infty} \gamma_l.\\
\end{aligned}
\]
Due to \eqref{eq:slow_decay} and \eqref{eq:slow_decay_oppo}, it must have a finite $\sup_{l \ge 1} |a_l| \sum_{l=1}^{\infty} \gamma_l$. As a consequence,
\[
\gamma_{m+1} =  \gamma_1 \e^{-a_m \gamma_m - \cdots - a_1 \gamma_1} \ge \gamma_1 \exp\left[ -{\sup_{l \ge 1} |a_l| \sum_{l=1}^{\infty} \gamma_l} \right].
\]
This contradicts \eqref{eq:slow_decay_oppo}. Therefore, \eqref{eq:slow_decay} implies
\[
\sum_{j=1}^\infty \gamma_j = \infty.
\]

\item
From \eqref{eq:r_need} we can assume that $\sum \gamma_j/\sqrt{j} = C \in (0, \infty)$. Now, we consider the problem of maximizing
\[
\sum_{j=1}^\infty \gamma_j^2
\]
over all $\gamma_j$ satisfying $\gamma_1 \ge \gamma_2 \ge \cdots \ge 0$ and $\sum \gamma_j/\sqrt{j} = C$. To this end, recognize that $\sum_{j=1}^\infty \gamma_j^2$ is a convex function and the feasible set is convex. This implies that the function must attain the maximum at a vertex of the feasible set
\[
\left\{ \{\gamma_j\}_{j=1}^\infty: \gamma_1 \ge \gamma_2 \ge \cdots \ge 0, \sum_{j=1}^{\infty} \frac{\gamma_j}{\sqrt{j}} = C \right\}.
\]
It can be shown that a vertex must take the form $\gamma_1 = \cdots = \gamma_l, \gamma_{l+1} = \cdots = 0$ for some $l \ge 1$. Hence,
\[
\gamma_1 = \cdots = \gamma_l = \frac{C}{\sum_{j=1}^l j^{-\frac12}}.
\] 
Further, the maximum of $\sum_{j=1}^\infty \gamma_j^2$ must be
\[
C^2 \sup_l \frac{ l}{(\sum_{j=1}^l j^{-\frac12})^2} = C^2 \frac{1}{(\sum_{j=1}^1 j^{-\frac12})^2} = C^2,
\]
which is finite. Thus, \eqref{eq:sq_finite} holds.

\item 
The proof of this part can be found on page 24 of \cite{fort2012central}. For self-containedness, we complete the proof here. Due to the non-increasing of the sequence $\gamma_j$, it follows that
\[
\begin{aligned}
\sum_{j=1}^{n} \frac{1}{\sqrt{\gamma_j}} \left| \frac{\gamma_j}{\gamma_{j+1}} - 1 \right| &= \sum_{j=1}^{n} \frac{1}{\sqrt{\gamma_j}} \left( \frac{\gamma_j}{\gamma_{j+1}} - 1 \right) \\
& = \sum_{j=1}^{n} \sqrt{\gamma_j} \left( \frac1{\gamma_{j+1}} - \frac1{\gamma_j} \right) \\
& = \sum_{j=2}^{n+1} \frac1{\gamma_j} \left(\sqrt{\gamma_{j-1}} - \sqrt{\gamma_j}\right) - \frac1{\sqrt{\gamma_1}} + \frac1{\sqrt{\gamma_{n+1}}}\\
& \lesssim \sum_{j=2}^{n+1} \frac1{\sqrt{\gamma_j\gamma_{j-1}}} \left(\sqrt{\gamma_{j-1}} - \sqrt{\gamma_j}\right) - \frac1{\sqrt{\gamma_1}} + \frac1{\sqrt{\gamma_{n+1}}}\\
&= \frac2{\sqrt{\gamma_{n+1}}} - \frac2{\sqrt{\gamma_1}},
\end{aligned}
\]
where $\lesssim$ follows from the boundedness of $\gamma_{j-1}/\gamma_j$. Hence, the display above together with \eqref{eq:step_large} gives
\[
\frac1{\sqrt n}\sum_{j=1}^{n} \frac{1}{\sqrt{\gamma_j}} \left| \frac{\gamma_j}{\gamma_{j+1}} - 1 \right| \goto 0
\]
as $n \goto \infty$.
\end{enumerate}
\end{proof}


\section{Proofs Under Assumption \ref{ass:cvx}}
\label{sec:extend-proof-lemma}

Appendix \ref{sec:extend-proof-lemma} presents a proof of Lemma \ref{lm:pj} under the less restrictive Assumption \ref{ass:cvx}, which only assumes a form of local strong convexity of the objective function. The proof is built on top of the one given in the previous appendix.

\subsection{Proof of Lemma \ref{lm:consis} with local strong convexity}
\label{sec:proof-lemma-reflm:c}
Denote by $\widetilde\Delta_l = \|\theta_l - \theta^\ast\|^2$. Recognizing the SGD update \eqref{eq:sgd}, we get
\begin{equation}\label{eq:wide_delta}
\begin{aligned}
\E(\widetilde\Delta_l | \F_{l-1}) &= \|\theta_{l-1} - \theta^\ast - \gamma_l \nabla f(\theta_{l-1})\|^2 + \gamma_l^2 \E_{\theta_{l-1}} \|\epsilon\|^2\\
&= \|\theta_{l-1} - \theta^\ast\|^2  - 2 \gamma_l  (\theta_{l-1} - \theta^\ast)^\top \nabla f(\theta_{l-1}) + \gamma_l^2 \| \nabla f(\theta_{l-1})\|^2+ \gamma_l^2 \E_{\theta_{l-1}} \|\epsilon\|^2.
\end{aligned}
\end{equation}
By Assumption \ref{ass:reg}, for any $\theta$,
\[
\E_{\theta} \|\epsilon\|^2 \le C'(1 + \|\theta - \theta^\ast\|^2)
\]
for some $C' > 0$. Then, we get from \eqref{eq:wide_delta} that
\begin{equation}\label{eq:wide_delta_fi}
\begin{aligned}
\E(\widetilde\Delta_l | \F_{l-1}) &\le \|\theta_{l-1} - \theta^\ast\|^2  - 2 \gamma_l  (\theta_{l-1} - \theta^\ast)^\top \nabla f(\theta_{l-1}) + \gamma_l^2 \| \nabla f(\theta_{l-1})\|^2+ C'\gamma_l^2 (1 + \|\theta_{l-1} - \theta^\ast\|^2)\\
&\le \|\theta_{l-1} - \theta^\ast\|^2  - 2 \gamma_l  (\theta_{l-1} - \theta^\ast)^\top \nabla f(\theta_{l-1}) + \gamma_l^2 L^2 \| \theta_{l-1} - \theta^\ast\|^2+ C'\gamma_l^2 (1 + \|\theta_{l-1} - \theta^\ast\|^2)\\
&\le (1 + L^2\gamma_l^2 + C'\gamma_l^2)\|\theta_{l-1} - \theta^\ast\|^2 + C'\gamma_l^2 - 2 \gamma_l  (\theta_{l-1} - \theta^\ast)^\top \nabla f(\theta_{l-1})
\end{aligned}
\end{equation}
To proceed, we need a lemma.
\begin{lemma}\label{lm:str_other}
Let $F$ be differentiable convex function defined on a Euclidean space and $F(x) - \frac{\rho}{2}\|x\|^2 $ is convex on the ball $\{x: \|x - x^\ast\| \le r\}$ centered at the minimizer $x^\ast$ of $F$ and $r > 0$. Then,
\[
(x - x^\ast)^\top \nabla F(x) \ge \rho \|x - x^\ast\| \min\{\|x - x^\ast\|, r\}.
\]
\end{lemma}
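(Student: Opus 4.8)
The plan is to split into the two regimes $\|x-x^\ast\|\le r$ and $\|x-x^\ast\|>r$, and in both cases to exploit monotonicity of $\nabla F$ along the segment $[x^\ast,x]$, which stays inside the ball up to a controlled parameter.

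\textbf{Case 1: $\|x-x^\ast\|\le r$.} Here $x$ lies in the ball on which $G(y):=F(y)-\tfrac{\rho}{2}\|y\|^2$ is convex. I would use the standard fact that a differentiable function convex on a convex set has a monotone gradient there, so that $\big(\nabla G(x)-\nabla G(x^\ast)\big)^\top(x-x^\ast)\ge 0$. Since $\nabla G(y)=\nabla F(y)-\rho y$ and $\nabla F(x^\ast)=0$ (because $x^\ast$ minimizes $F$), this rearranges to $(x-x^\ast)^\top\nabla F(x)\ge \rho\|x-x^\ast\|^2$, which is exactly the claim in this regime, where $\min\{\|x-x^\ast\|,r\}=\|x-x^\ast\|$.

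\textbf{Case 2: $\|x-x^\ast\|>r$.} First I would pass to the one-dimensional reduction $\phi(t):=F\big(x^\ast+t(x-x^\ast)\big)$ on $t\in[0,1]$, so that $\phi'(t)=(x-x^\ast)^\top\nabla F\big(x^\ast+t(x-x^\ast)\big)$ and $\phi$ is convex by convexity of $F$; in particular $\phi'$ is non-decreasing. Set $t_0:=r/\|x-x^\ast\|\in(0,1)$ and $y:=x^\ast+t_0(x-x^\ast)$, the point on the segment at distance exactly $r$ from $x^\ast$, which lies on the boundary of the ball. Applying the Case 1 estimate at $y$ gives $(y-x^\ast)^\top\nabla F(y)\ge\rho r^2$; since $y-x^\ast=t_0(x-x^\ast)$ this reads $t_0\phi'(t_0)\ge\rho r^2$, hence $\phi'(t_0)\ge\rho r\|x-x^\ast\|$. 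Monotonicity of $\phi'$ then yields $(x-x^\ast)^\top\nabla F(x)=\phi'(1)\ge\phi'(t_0)\ge\rho r\|x-x^\ast\|=\rho\|x-x^\ast\|\min\{\|x-x^\ast\|,r\}$.

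The only step needing any care — and the closest thing to an obstacle — is justifying gradient monotonicity on the ball from the hypothesis of convexity on the ball; this is routine: for $u,v$ in a convex set $C$ on which a differentiable $G$ is convex, the first-order conditions $G(v)\ge G(u)+\nabla G(u)^\top(v-u)$ and $G(u)\ge G(v)+\nabla G(v)^\top(u-v)$ add to $\big(\nabla G(u)-\nabla G(v)\big)^\top(u-v)\ge 0$. Beyond that the argument is pure bookkeeping along the segment $[x^\ast,x]$ together with the observation that $y$ is still in the closed ball, so that Case 1 is applicable to it.
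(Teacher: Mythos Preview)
Your proof is correct and follows essentially the same approach as the paper: both split into the cases $\|x-x^\ast\|\le r$ and $\|x-x^\ast\|>r$, handle the first via monotonicity of $\nabla G$ with $G(y)=F(y)-\tfrac{\rho}{2}\|y\|^2$, and handle the second by applying the first-case bound at the boundary point $y=\widetilde x = x^\ast + \tfrac{r}{\|x-x^\ast\|}(x-x^\ast)$ together with monotonicity of $\nabla F$ along the segment. The only cosmetic difference is that you package Case~2 as a one-dimensional reduction via $\phi(t)=F(x^\ast+t(x-x^\ast))$ and the monotonicity of $\phi'$, whereas the paper works directly with the vector inequality $(x-\widetilde x)^\top\nabla F(x)\ge (x-\widetilde x)^\top\nabla F(\widetilde x)$ and then unwinds the scalar factors; these are the same computation.
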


By Assumption \ref{ass:cvx}, we see that $f(\theta)$ is strongly convex at a neighborhood of $\theta^\ast$. Hence, there there exists $\delta' > 0$ such that $f(\theta) - \frac{\delta'}{2}\|\theta\|^2$ is convex on $\{\theta: \|\theta - \theta^\ast\| \le \delta'\}$. Thus, applying Lemma \ref{lm:str_other}, together with \eqref{eq:wide_delta_fi} gives
\begin{equation}\label{eq:expand_delta}
\E(\widetilde\Delta_l | \F_{l-1}) \le (1 + L^2\gamma_l^2 + C'\gamma_l^2)\|\theta_{l-1} - \theta^\ast\|^2  + C'\gamma_l^2 - 2 \gamma_l  \delta' \|\theta_{l-1} - \theta^\ast\| \min\{ \|\theta_{l-1} - \theta^\ast\|, \delta' \}.
\end{equation}
Since both $\sum (L^2\gamma_l^2 + C'\gamma_l^2) < 0$ and $\sum C'\gamma_l^2 < \infty$, Lemma \ref{lm:r_s} shows that $\widetilde\Delta_l \equiv \|\theta_l - \theta^\ast\|^2$ converges to a random variable, say, $\widetilde\Delta_{\infty}$, almost surely, and 
\[
\sum_{l=1}^\infty 2 \gamma_l  \delta' \|\theta_{l-1} - \theta^\ast\| \min\{ \|\theta_{l-1} - \theta^\ast\|, \delta' \} < \infty
\]
almost surely. This implies that, almost surely,
\[
\sum_{l=1}^\infty 2 \gamma_l  \delta' \sqrt{\widetilde\Delta_{\infty}} \min\{\widetilde\Delta_{\infty}, \delta' \} < \infty,
\]
which together with the fact that $\sum \gamma_l = \infty$ yields that
\[
\widetilde\Delta_{\infty} = 0
\]
almost surely. This reveals that $\theta_l \goto \theta^\ast$ with probability one.

\begin{proof}[Proof of Lemma \ref{lm:str_other}]
First, consider the case where $\|x - x^\ast\| \le r$. The gradient of $F(x) - \frac{\rho}{2} \|x\|^2$ is a monotone operator on $\{x: \|x - x^\ast\| \le r\}$ due to its convexity. Hence, we get
\[
\left\langle x - x^\ast, \nabla F(x) - \rho x - \nabla F(x^\ast) + \rho x^\ast \right\rangle \ge 0,
\]
which can be written as
\begin{equation}\label{eq:monotone_inside}
(x - x^\ast)^\top \nabla F(x) \ge \rho \|x - x^\ast\|^2.
\end{equation}

Now, consider $x$ such that $\|x - x^\ast\| > r$. Denote by $\widetilde x$ the projection of $x$ onto the ball $\{x: \|x - x^\ast\| \le r\}$. Then, using the property of monotone operator $\nabla F$ gives
\[
\left\langle x - \widetilde x, \nabla F(x) - \nabla F(\widetilde x) \right\rangle \ge 0,
\]
from which we get
\[
(x - \widetilde x)^\top \nabla F(x) \ge (x - \widetilde x)^\top \nabla F(\widetilde x).
\]
To proceed, note that \eqref{eq:monotone_inside} is also satisfied for $\widetilde x$. That is,
\[
(\widetilde x - x^\ast)^\top \nabla F(\widetilde x) \ge \rho \|\widetilde x - x^\ast\|^2 = \rho r^2.
\]
From the geometry of projection, it follows that
\[
(x - \widetilde x)^\top \nabla F(x) = \frac{\|x - x^\ast\| - r}{\|x - x^\ast\|}(x - x^\ast)^\top \nabla F(x)
\]
and
\[
(x - \widetilde x)^\top \nabla F(\widetilde x) = \frac{\|x - x^\ast\| - r}{r} (\widetilde x - x^\ast)^\top \nabla F(\widetilde x).
\]
Taking all the displays above gives
\[
\begin{aligned}
(x - x^\ast)^\top \nabla F(x) &= \frac{\|x - x^\ast\|}{\|x - x^\ast\| - r} (x - \widetilde x)^\top \nabla F(x)\\
& \ge  \frac{\|x - x^\ast\|}{\|x - x^\ast\| - r} (x - \widetilde x)^\top \nabla F(\widetilde x)\\
& =  \frac{\|x - x^\ast\|}{\|x - x^\ast\| - r} \cdot \frac{\|x - x^\ast\| - r}{r} (\widetilde x - x^\ast)^\top \nabla F(\widetilde x)\\
& =  \frac{\|x - x^\ast\|}{r} \rho r^2\\
& =  \rho r \|x - x^\ast\|,
\end{aligned}
\]
as desired.

\end{proof}

\subsection{Proof of Lemma \ref{lm:i2_small} without (\ref{eq:r_gamma})}
\label{sec:proofx}
As in the proof presented in the preceding section, $f$ is $\delta'$-strongly convex on $\{\theta: \|\theta - \theta^\ast\| \le \delta'\}$. Denote by
\[
\tau_{m} := \inf_{l \ge m} \left\{ l: \|\theta_l - \theta^\ast\| > \delta' \right\}.
\]
Note that $\tau_{m}$ is a stopping time adapted to $\mathcal F = \{\mathcal F_l\}_{l = 1}^{\infty}$. Denote by $\widehat\Delta_l = \|\theta_l - \theta^\ast\|^2 \ind_{\tau_{m} > l}$. Hence, for $l \ge m+1$, using \eqref{eq:expand_delta} we have
\[
\begin{aligned}
\E \left[ \widehat\Delta_l | \mathcal F_{l-1}\right] &\le \E \left[\|\theta_l - \theta^\ast\|^2 \ind_{\tau_{m} > l-1} | \mathcal F_{l-1} \right]\\
&= \ind_{\tau_{m} > l-1}  \E \left[\|\theta_l - \theta^\ast\|^2 | \mathcal F_{l-1} \right]\\
& \le \ind_{\tau_{m} > l-1}  \left[(1 + C''\gamma_l^2)\|\theta_{l-1} - \theta^\ast\|^2  + C'\gamma_l^2 - 2 \gamma_l  \delta' \|\theta_{l-1} - \theta^\ast\| \min\{ \|\theta_{l-1} - \theta^\ast\|, \delta' \} \right]\\
& = \ind_{\tau_{m} > l-1}  \left[(1 + C''\gamma_l^2)\|\theta_{l-1} - \theta^\ast\|^2  + C'\gamma_l^2 - 2 \gamma_l  \delta' \|\theta_{l-1} - \theta^\ast\|^2 \right]\\
& = \ind_{\tau_{m} > l-1}  \left[(1 + C''\gamma_l^2 - 2\delta'\gamma_l )\|\theta_{l-1} - \theta^\ast\|^2  + C'\gamma_l^2 \right]\\
& \le (1 + C''\gamma_l^2 - 2\delta'\gamma_l )\widehat\Delta_{l-1}  + C'\gamma_l^2.
\end{aligned}
\]
Since $\gamma_l \goto 0$, for sufficiently large $l$, say, $l \ge l_0$, we get
\[
\E \widehat\Delta_l \le (1 - \delta'\gamma_l ) \E \widehat\Delta_{l-1}  + C'\gamma_l^2,
\]
which is equivalent to
\[
\frac{\E \widehat\Delta_l}{\gamma_l} \le \frac{\gamma_{l-1}(1 - \delta'\gamma_l )}{\gamma_l} \frac{\E \widehat\Delta_{l-1}}{\gamma_{l-1}}  + C'\gamma_l.
\]
Making use of Lemma \ref{lm:series} gives
\begin{equation}\label{eq:delta_bound}
\sup_{1 \le l < \infty} \frac{\E \widehat\Delta_l}{\gamma_l} < \infty.
\end{equation}

Recall that our goal is to prove
\begin{equation}\label{eq:r_small1_local}
\frac1{\sqrt n}\sum_{l=1}^{n} \|\theta_l - \theta^\ast\|^2 = o_{\P}(1).
\end{equation}
Write the display above as
\[
\frac1{\sqrt n}\sum_{l=1}^{n} \|\theta_l - \theta^\ast\|^2 =  \frac1{\sqrt n}\sum_{l=1}^m \|\theta_l - \theta^\ast\|^2  + \frac1{\sqrt n}\sum_{l=m+1}^{n} \|\theta_l - \theta^\ast\|^2.
\]
Let $\A_{m}$ be the event that $\tau_{m} = \infty$. We see
\[
\begin{aligned}
\E \left[ \frac1{\sqrt n}\sum_{l=m+1}^n \|\theta_{l} - \theta^\ast\|^2; \A_{m} \right] &\le \frac1{\sqrt n}\sum_{l=m+1}^{n} \E \left[  \|\theta_l - \theta^\ast\|^2; \tau_{m} > l \right]\\
&= \frac1{\sqrt n} \sum_{l=m+1}^n \E \widehat\Delta_l\\
&\lesssim \frac1{\sqrt n}\sum_{l=m}^{n-1} \gamma_l,
\end{aligned}
\]
which, by Kronecker's lemma, goes to zero as $n \goto \infty$. By Lemma \ref{lm:consis}, $\P(A_{m,\delta}) \goto 1$ as $m \goto \infty$. Note that
\[
\P\left( \frac1{\sqrt n}\sum_{l=m+1}^{n} \|\theta_l - \theta^\ast\|^2 \ne \frac1{\sqrt n}\sum_{l=m+1}^{n} \|\theta_l - \theta^\ast\|^2 \ind_{\A_m} \right) = \P(\overline\A_m)
\]
Recognize $\P(\overline\A_m) \goto 0$ as $m \goto \infty$ by Lemma \ref{lm:consis} (its extension in the previous section). Hence, by first taking $n \goto \infty$ and then $m \goto \infty$ we complete the proof of \eqref{eq:r_small1_local}. 

\subsection{Proof of Lemma \ref{lm:theta_diff_small} without (\ref{eq:r_gamma})}
As earlier, it suffices to show that
\[
\frac1{\sqrt{n}}\sum_{l=1}^n \frac{\theta_{l-1} - \theta_l}{\gamma_l} = \frac1{\sqrt{n}}\sum_{l=1}^{n-1} (\theta_l - \theta^\ast)(\gamma_{l+1}^{-1} - \gamma_l^{-1}) - \frac1{\sqrt{n}}(\theta_n - \theta^\ast) \gamma_n^{-1} + \frac1{\sqrt{n}}(\theta_0 - \theta^\ast) \gamma_1^{-1} = o_{\P}(1)
\]
as $n \goto \infty$. This is a consequence of
\begin{equation}\label{eq:without_here}
\frac1{\sqrt{n}}\sum_{l=m}^{n-1} \|\theta_l - \theta^\ast \| |\gamma_{l+1}^{-1} - \gamma_l^{-1}|  = o_{\P}(1), \quad \frac1{\sqrt{n}} \|\theta_n - \theta^\ast\| \gamma_n^{-1} = o_{\P}(1).
\end{equation}
Recall that $\A_{m}$ denotes the event that $\|\theta_l - \theta^\ast\| \le \delta'$ for all $l \ge m$. Then, note that
\[
\begin{aligned}
\frac1{\sqrt n}\sum_{l=m}^{n-1} |\gamma_{l+1}^{-1} - \gamma_l^{-1}| \E \left[ \|\theta_l - \theta^\ast\|; \A_m \right] &\le \frac1{\sqrt n}\sum_{l=m}^{n-1} |\gamma_{l+1}^{-1} - \gamma_l^{-1}| \sqrt{\P(\A_m)\E \left[ \|\theta_l - \theta^\ast\|^2; \A_m \right]}\\
&\le \frac1{\sqrt n}\sum_{l=m}^{n-1} |\gamma_{l+1}^{-1} - \gamma_l^{-1}| \sqrt{\E \left[ \|\theta_l - \theta^\ast\|^2; \A_m \right]}\\
&\le \frac1{\sqrt n}\sum_{l=m}^{n-1} |\gamma_{l+1}^{-1} - \gamma_l^{-1}| \sqrt{\E \left[ \|\theta_l - \theta^\ast\|^2; \tau_{m} > l \right]}\\
&\lesssim \frac1{\sqrt n}\sum_{l=m}^{n-1} |\gamma_{l+1}^{-1} - \gamma_l^{-1}| \gamma_l^{\frac12},
\end{aligned}
\]
where the last inequality makes use of \eqref{eq:delta_bound}. From \eqref{eq:theta_diff_need} in Assumption~\ref{ass:size} it follows that 
\[
\frac1{\sqrt n}\sum_{l=m}^{n-1} |\gamma_{l+1}^{-1} - \gamma_l^{-1}| \gamma_l^{\frac12} \goto 0
\]
as $n \goto \infty$. Hence, \eqref{eq:without_here} holds on the complement of $\A_m$. Recognizing that $\P(\A_m) \goto 1$ as $m \goto \infty$, the proof is completed.



\section{Other Proofs}
\label{sec:other-proofs}

\subsection{Verifying assumptions for logistic regression, ridge regression, and Huber regression}
\label{sec:verify-assumpt-logis}

Before verifying the assumptions, we state the following lemma, which will be helpful in verifying the second assumption. 

\begin{lemma}\label{lem:sufficient_assumption2}
Suppose that $\mathbb E\|\nabla f(\theta^*, Z)\|^2$ and $\mathbb E\|\nabla^2 f(\theta, Z)\|^2$ are bounded for all $\theta$.
Then
\[
\|\mathbb E\, \epsilon\epsilon^\top - V\| \leq C  \|\theta - \theta^*\| + C'  \|\theta - \theta^*\|^2
\]
for some constant $C$ and $C'$.
\end{lemma}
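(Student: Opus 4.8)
The plan is to reduce everything to a first-order Taylor expansion of the noisy gradient around $\theta^\ast$ and then expand the covariance. First I would write, for each realization $Z = z$ of the data,
\[
g(\theta, z) = g(\theta^\ast, z) + R_z, \qquad R_z := \left( \int_0^1 \nabla^2 f\bigl(\theta^\ast + t(\theta - \theta^\ast), z\bigr)\, \d t \right)(\theta - \theta^\ast),
\]
which is legitimate because $\theta \mapsto g(\theta, z) = \nabla_\theta f(\theta, z)$ is $C^1$ with Hessian $\nabla^2 f(\cdot, z)$ (its existence is implicit in the hypothesis that $\E\|\nabla^2 f(\theta, Z)\|^2$ is finite for every $\theta$). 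Taking expectations and using $\nabla f(\theta^\ast) = 0$, hence $\E\, g(\theta^\ast, Z) = 0$, gives $\nabla f(\theta) = \E R_Z$, so that
\[
\epsilon = g(\theta, Z) - \nabla f(\theta) = g(\theta^\ast, Z) + \bar R, \qquad \bar R := R_Z - \E R_Z .
\]
Since $g(\theta^\ast, Z)$ is exactly the centered noisy gradient at $\theta^\ast$, this is a clean orthogonal-in-mean decomposition of $\epsilon$.

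Next I would expand $\E \epsilon\epsilon^\top$. Using $V = \E\bigl[g(\theta^\ast, Z) g(\theta^\ast, Z)^\top\bigr]$,
\[
\E\epsilon\epsilon^\top - V = \E\!\left[ g(\theta^\ast, Z)\,\bar R^\top \right] + \E\!\left[ \bar R\, g(\theta^\ast, Z)^\top \right] + \E\!\left[ \bar R\, \bar R^\top \right],
\]
and it remains to bound the three terms in the spectral norm. For the quadratic term, positive semidefiniteness gives $\|\E\bar R\bar R^\top\| \le \E\|\bar R\|^2 \le \E\|R_Z\|^2$; bounding $\|R_z\| \le \|\theta - \theta^\ast\|\int_0^1\|\nabla^2 f(\theta^\ast + t(\theta-\theta^\ast), z)\|\,\d t$, applying Cauchy--Schwarz (equivalently Jensen) to the $t$-integral, and then Fubini yields
\[
\E\|R_Z\|^2 \le \|\theta-\theta^\ast\|^2 \int_0^1 \E\bigl\|\nabla^2 f\bigl(\theta^\ast + t(\theta-\theta^\ast), Z\bigr)\bigr\|^2\,\d t \le \Bigl(\sup_\eta \E\|\nabla^2 f(\eta, Z)\|^2\Bigr)\|\theta-\theta^\ast\|^2,
\]
which is the $\|\theta-\theta^\ast\|^2$ term. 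For each cross term, $\|\E[g(\theta^\ast,Z)\,\bar R^\top]\| \le \E\bigl[\|g(\theta^\ast,Z)\|\,\|\bar R\|\bigr] \le \sqrt{\E\|g(\theta^\ast,Z)\|^2}\,\sqrt{\E\|\bar R\|^2}$ by Cauchy--Schwarz; the first factor equals $\sqrt{\E\|\nabla f(\theta^\ast, Z)\|^2}$, finite by hypothesis, and the second is $O(\|\theta-\theta^\ast\|)$ by the previous bound, giving the $\|\theta-\theta^\ast\|$ term. Collecting the three estimates and renaming constants proves the claim.

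The argument is essentially routine bookkeeping; the only points needing a little care are the validity of the Taylor representation with integral remainder (covered by the existence of $\nabla^2 f(\cdot, z)$) and the interchange of $\E$ with the $t$-integral in estimating $\E\|R_Z\|^2$, both of which are justified by Fubini together with the uniform second-moment bound on the Hessian. I do not expect a genuine obstacle beyond tracking which moment hypothesis controls which of the three terms.
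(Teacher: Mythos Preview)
Your proof is correct and follows essentially the same route as the paper: both write $\epsilon = g(\theta^\ast, Z) + \bigl(\text{remainder}\bigr)$, expand $\E\epsilon\epsilon^\top - V$ into two cross terms and one quadratic term, bound the quadratic term by the second moment of the Hessian times $\|\theta-\theta^\ast\|^2$, and control the cross terms via Cauchy--Schwarz. The only cosmetic differences are that you use the integral form of the Taylor remainder (which is cleaner for vector-valued maps than the paper's single-point mean-value theorem) and that you exploit ``centering reduces second moment'' to bound $\E\|\bar R\|^2 \le \E\|R_Z\|^2$ directly, whereas the paper spends a factor of $4$ via $(a+b)^2 \le 2(a^2+b^2)$.
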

\begin{proof}[Proof of Lemma \ref{lem:sufficient_assumption2}]
To validate Assumption 2, we write
\begin{align*}
&\mathbb E\, \epsilon\epsilon^\top \\
&= \mathbb E\left( (\nabla f(\theta, Z) - \nabla f(\theta))(\nabla f(\theta, Z) - \nabla f(\theta))^\top\right)\\
& = \mathbb E\left( (\nabla f(\theta, Z) - \nabla f(\theta) - \nabla f(\theta^*, Z) + \nabla f(\theta^*, Z))(\nabla f(\theta, Z) - \nabla f(\theta) - \nabla f(\theta^*, Z) + \nabla f(\theta^*, Z))^\top\right)\\
& = \mathbb E\left( (\nabla f(\theta, Z) - \nabla f(\theta) - \nabla f(\theta^*, Z))(\nabla f(\theta, Z) - \nabla f(\theta) - \nabla f(\theta^*, Z))^\top\right)\\
& \quad + 2\mathbb E\left(\nabla f(\theta^*, Z)(\nabla f(\theta, Z) - \nabla f(\theta) - \nabla f(\theta^*, Z))^\top\right)
+ \mathbb E\left(\nabla f(\theta^*, Z)\nabla f(\theta^*, Z)^\top\right).
\end{align*}
Therefore, noting that $V = \mathbb E\left(\nabla f(\theta^*, Z)\nabla f(\theta^*, Z)^\top\right)$
\begin{align*}
&\|\mathbb E\, \epsilon\epsilon^\top - V\| \\
&\leq \underbrace{\left\|\mathbb E\left( (\nabla f(\theta, Z) - \nabla f(\theta) - \nabla f(\theta^*, Z))(\nabla f(\theta, Z) - \nabla f(\theta) - \nabla f(\theta^*, Z))^\top\right)\right\|}_{A} \\
&\quad + 2 \underbrace{\left\|\mathbb E\left(\nabla f(\theta^*, Z)(\nabla f(\theta, Z) - \nabla f(\theta) - \nabla f(\theta^*, Z))^\top\right)\right\|}_{B}.
\end{align*}
Next, we bound the two terms $A$ and $B$ separately. 
Firstly, 
\begin{align*}
A & = \left\|\mathbb E\left( (\nabla f(\theta, Z) - \nabla f(\theta) - \nabla f(\theta^*, Z))(\nabla f(\theta, Z) - \nabla f(\theta) - \nabla f(\theta^*, Z))^\top\right)\right\| \\
& \leq \mathbb E\left\|(\nabla f(\theta, Z) - \nabla f(\theta) - \nabla f(\theta^*, Z))(\nabla f(\theta, Z) - \nabla f(\theta) - \nabla f(\theta^*, Z))^\top\right\| \\
& = \mathbb E\left\|\nabla f(\theta, Z) - \nabla f(\theta) - \nabla f(\theta^*, Z)\right\|^2\\
& = \mathbb E\left\|(\nabla f(\theta, Z) - \nabla f(\theta^*, Z)) - (\nabla f(\theta) - \nabla f(\theta^*))\right\|^2 \\
& \leq 2 \left(\mathbb E\left\|\nabla f(\theta, Z) - \nabla f(\theta^*, Z)\right\|^2 + \left\|\nabla f(\theta) - \nabla f(\theta^*)\right\|^2\right) \\
& = 2 \left(\mathbb E\left\|\nabla f(\theta, Z) - \nabla f(\theta^*, Z)\right\|^2 + \left\|\mathbb E\left(\nabla f(\theta, Z) - \nabla f(\theta^*, Z)\right)\right\|^2\right) \\
& \leq 2 \left(\mathbb E\left\|\nabla f(\theta, Z) - \nabla f(\theta^*, Z)\right\|^2 + \left(\mathbb E\left\|\nabla f(\theta, Z) - \nabla f(\theta^*, Z)\right\|\right)^2\right) \\
& \leq 2 \left(\mathbb E\left\|\nabla f(\theta, Z) - \nabla f(\theta^*, Z)\right\|^2 + \mathbb E\left\|\nabla f(\theta, Z) - \nabla f(\theta^*, Z)\right\|^2 \right) \\
& = 4\mathbb E\left\|\nabla f(\theta, Z) - \nabla f(\theta^*, Z)\right\|^2.
\end{align*}
By the mean value theorem that for some $\theta'$ we have
\[
A = 4\mathbb E\left\|\nabla^2 f(\theta', Z)(\theta - \theta^*)\right\|^2 
\leq 4\|\theta - \theta^*\|^2\mathbb E\left\|\nabla^2 f(\theta', Z)\right\|^2.
\]
Now to bound the term $B$, by Cauchy-Schwarz inequality
\begin{align*}
B & = \left\|\mathbb E\left(\nabla f(\theta^*, Z)(\nabla f(\theta, Z) - \nabla f(\theta) - \nabla f(\theta^*, Z))^\top\right)\right\| \\
& \leq \mathbb E\left\|\nabla f(\theta^*, Z)(\nabla f(\theta, Z) - \nabla f(\theta) - \nabla f(\theta^*, Z))^\top\right\| \\
& \leq \mathbb E\left(\|\nabla f(\theta^*, Z)\|\left\|\nabla f(\theta, Z) - \nabla f(\theta) - \nabla f(\theta^*, Z)\right\|\right) \\
& \leq \left(\mathbb E\left\|\nabla f(\theta^*, Z)\right\|^2 \mathbb E\left\|\nabla f(\theta, Z) - \nabla f(\theta) - \nabla f(\theta^*, Z)\right\|^2\right)^{1/2} \\
& = \left(\mathbb E\left\|\nabla f(\theta^*, Z)\right\|^2\right)^{1/2} A^{1/2} \\
& \leq 2\left(\mathbb E\left\|\nabla f(\theta^*, Z)\right\|^2\mathbb E\left\|\nabla^2 f(\theta', Z)\right\|^2\right)^{1/2} \|\theta - \theta^*\|.
\end{align*}
Combining the above two upper bounds we have
\begin{align*}
\|\mathbb E\, \epsilon\epsilon^\top - V\| 
& \leq 4\left(\mathbb E\left\|\nabla f(\theta^*, Z)\right\|^2\mathbb E\left\|\nabla^2 f(\theta', Z)\right\|^2\right)^{1/2} \|\theta - \theta^*\|+ 4\mathbb E\left\|\nabla^2 f(\theta', Z)\right\|^2\cdot\|\theta - \theta^*\|^2.
\end{align*}
Using the assumption that $\mathbb E\|\nabla f(\theta^*, Z)\|^2$ and $\mathbb E\|\nabla^2 f(\theta, Z)\|^2$ are bounded for all $\theta$, 
we come to the desired conclusion. 
\end{proof}

\paragraph{Linear regression.}
To begin with, note that
\[
f(\theta) = \E \frac12 (Y - X^\top\theta)^2 = \frac12 \theta^\top \left[\E X X^\top \right] \theta - \left[\E Y X\right]^\top \theta + \frac12 \E Y^2,
\]
which is a simple quadratic function. Hence, Assumption \ref{ass:cvx} readily follows as long as $\E X X^\top$ exists, that is, $\|X\|$ has a second moment, and is positive-definite. The positive-definiteness holds if the distribution of $X \in \R^d$ is in a generic position, for example, its distribution has probability density well-defined for an arbitrarily small region. 
%
%

Next, to verify Assumption \ref{ass:reg}, we calculate
\[
\mathbb E\left\|\nabla f(\theta^*, Z)\right\|^2 = \mathbb E\left\|(Y - X^\top\theta^*)X\right\|^2
\leq 2\mathbb E\left(|Y|^2\|X\|^2\right) + 2\|\theta^*\|^2\mathbb E\|X\|^4,
\]
\[
\mathbb E\left\|\nabla^2 f(\theta, Z)\right\|^2 = \mathbb E\left\|XX^\top\right\|^2 = \mathbb E\|X\|^4.
\]
Then applying Lemma \ref{lem:sufficient_assumption2}, we conclude that 
the first part of Assumption 2 is satisfied as long as both $\mathbb E\left(|Y|^2\|X\|^2\right)$
and $\mathbb E\|X\|^4$ exist and are finite. 
Furthermore, we have
\begin{align*}
\mathbb E\|\epsilon\|^{2+\delta} &= \mathbb E\left\|\nabla f(\theta, Z) - \nabla f(\theta)\right\|^{2+\delta} \\
& \leq 2^{1+\delta}\mathbb E\left(\|\nabla f(\theta, Z)\|^{2 + \delta} + \|\nabla f(\theta)\|^{2+\delta}\right) \\
& \leq 2^{2+\delta}\mathbb E\left\|\nabla f(\theta, Z)\right\|^{2+\delta} \\
& \leq 2^{3+2\delta}\mathbb E\left(|Y|^{2+\delta}\|X\|^{2+\delta}\right) + 2^{3+\delta}\|\theta\|^{2+\delta}\mathbb E\|X\|^{4+2\delta}.
\end{align*}
Hence, we conclude that given that $\mathbb E\left(|Y|^{2+\delta}\|X\|^{2+\delta}\right)<\infty$ and 
$\mathbb E\|X\|^{4+2\delta}<\infty$ for some $\delta > 0$, Assumption \ref{ass:reg} is satisfied.

\paragraph{Logistic regression.}
For logistic regression, to ease the calculation, we use a formulation where $Y\in\{-1, +1\}$
and the log likelihood function takes the form
\[
f(\theta, z) = \log\left(1 + \exp(-y\cdot x^\top \theta)\right).
\]
Thus, we have
\[
f(\theta) = \mathbb E f(\theta, Z) = \mathbb E \log\left(1 + \exp(-Y\cdot X^\top \theta)\right),
\]
\[
\nabla f(\theta) = -\mathbb E \frac{YX}{1 + \exp(Y\cdot X^\top\theta)},
\]
\[
\nabla^2 f(\theta) = \mathbb E\frac{XX^\top}{\left(1+\exp(Y\cdot X^\top\theta)\right)\left(1+\exp(-Y\cdot X^\top\theta)\right)}.
\]
Noting that 
\[
\|\nabla^2 f(\theta)\| \leq \mathbb E\frac{\|XX^\top\|}{\left(1+\exp(Y\cdot X^\top\theta)\right)\left(1+\exp(-Y\cdot X^\top\theta)\right)}
\leq \frac{1}{4}\mathbb E\|X\|^2,
\]
we conclude that as long as $\|X\|$ has a second moment, 
$\nabla f(\theta)$ is Lipschitz continuous. 
On the other hand, 
suppose that the distribution of $X$ has positive probability density defined for an open region in $\mathbb R^d$.
Let $u$ be the unit vector associated with the smallest eigenvector of $\nabla^2 f(\theta)$. 
We then have
\[
u^\top\nabla^2 f(\theta)u = \mathbb E\frac{(u^\top X)^2}{\left(1+\exp(Y\cdot X^\top\theta)\right)\left(1+\exp(-Y\cdot X^\top\theta)\right)} > 0
\]
for a fixed $\theta$.
Hence $\nabla^2f(\theta)$ exists and is positive definite for a neighborhood of $\theta^*$.

To verify Assumption \ref{ass:reg}, we apply Lemma \ref{lem:sufficient_assumption2}. Note that 
\[
\mathbb E\left\|\nabla f(\theta^*, Z)\right\|^2 = \mathbb E\left\|\frac{YX}{1+\exp(-Y\cdot X^\top\theta^*)}\right\|^2 \leq \mathbb E\left\|X\right\|^2
\]
\[
\mathbb E\left\|\nabla^2 f(\theta, Z)\right\|^2 = \mathbb E\left\|\frac{XX^\top}{\left(1+\exp(Y\cdot X^\top\theta)\right)\left(1+\exp(-Y\cdot X^\top\theta)\right)}\right\|^2
\leq \frac{1}{16}\mathbb E\left\|X\right\|^4.
\]
Therefore, as long as $X$ has a finite fourth moment, we have
\[
\|\mathbb E\, \epsilon\epsilon^\top - V\| \leq C \left(\|\theta - \theta^*\| + \|\theta - \theta^*\|^2\right).
\]
In addition, following a similar argument in the discussion on linear regression, we have
\[
\mathbb E\|\epsilon\|^{2+\delta} \leq 2^{2+\delta}\mathbb E\|\nabla f(\theta, Z)\|^{2+\delta} 
\leq 2^{2+\delta}\mathbb E\|X\|^{2+\delta}.
\]
Thus, Assumption \ref{ass:reg} is shown to be satisfied given that $\mathbb E\|X\|^4<\infty$.

\paragraph{Penalized generalized linear regression.}
For the $\ell_2$-penalized generalized linear regression, we have $f(\theta, z) = -yx^\top\theta + b(x^\top\theta) + \lambda\|\theta\|^2$.
Furthermore, we assume that the function $b(w)$ is twice differentiable and that $|b''(w)|\leq L$ for all $w$.
To verify Assumption \ref{ass:cvx}, we calculate
\begin{align*}
\nabla f(\theta) = \mathbb E\nabla f(\theta, Z) = \mathbb E \left(-YX + b'(X^\top\theta)X + 2\lambda\theta\right)
\end{align*}
\begin{align*}
\nabla^2 f(\theta) = \mathbb E\nabla^2 f(\theta, Z) = \mathbb E \left(b''(X^\top\theta)XX^\top + 2\lambda I\right) 
= \mathbb E \left(b''(X^\top\theta)XX^\top\right) + 2\lambda I 
\end{align*}
By the definition of the density function of the exponential family, we have
\[
b'(x^\top\theta) = \mathbb E\left(Y|X=x\right)\text{ and }b''(x^\top\theta) = \mathrm{Var}(Y|X = x).
\]
Therefore, $b''(X^\top\theta)\geq 0$ and the minimum eigenvalue of $\nabla^2 f(\theta)$ is at least $\lambda$.
It then follows that the function $f(\theta)$ is strongly convex. 
On the other hand, we have
\begin{align*}
\left\|\nabla^2 f(\theta)\right\| \leq \left\|\mathbb E \left(b''(X^\top\theta)XX^\top\right)\right\| + 2\lambda
\leq L\mathbb E\left\|X\right\|^2 + 2\lambda.
\end{align*}

Now let's investigate Assumption \ref{ass:reg} using Lemma \ref{lem:sufficient_assumption2}. 
In fact, we have
\begin{align*}
\mathbb E\left\|\nabla f(\theta^*, Z)\right\|^2 & \leq 3\mathbb E\left(|Y|^2\|X\|^2\right) + 3\mathbb E\left(\left|b'(X^\top\theta^*)\right|^2\|X\|^2\right) + 12\lambda^2\left\|\theta^*\right\|^2\\
& = 3\mathbb E\left(|Y|^2\|X\|^2\right) + 3\mathbb E\left(|Y^*|^2\|X\|^2\right) + 12\lambda^2\|\theta^*\|^2
\end{align*}
where $Y^*$ given $X = x$ follows the conditional distribution specified by $p(y|x) = f(\theta^*, z)$.
\begin{align*}
\mathbb E\left\|\nabla^2 f(\theta, Z)\right\|^2 & = \mathbb E\left\|b''(X^\top\theta)XX^\top + 2\lambda I\right\|^2 \\
&\leq 2L^2\mathbb E\left\|X\right\|^4 + 8 \lambda^2.
\end{align*}
Therefore, by Lemma \ref{lem:sufficient_assumption2}, as long as $\mathbb E\left(|Y|^2\|X\|^2\right)$, $\mathbb E\left(|Y^*|^2\|X\|^2\right)$ and $\mathbb E\|X\|^4$ are finite, we obtain
\[
\|\mathbb E\, \epsilon\epsilon^\top - V\| \leq C  \|\theta - \theta^*\| + C'  \|\theta - \theta^*\|^2.
\]
Next, we compute as in the previous discussion
\begin{align*}
\mathbb E\|\epsilon\|^{2+\delta} &\leq 2^{2+\delta}\mathbb E\|\nabla f(\theta, Z)\|^{2+\delta} \\
&\leq 2^{2+\delta}\left(3^{1+\delta}\mathbb E\left(|Y|^{2+\delta}\|X\|^{2+\delta}\right) + 3^{1+\delta}\mathbb E\left(|Y_\theta|^{2+\delta}\|X\|^{2+\delta}\right) + 3^{1+\delta}2^{2+\delta}\lambda^{2+\delta}\|\theta\|^{2+\delta}\right),
\end{align*}
which is bounded for $\theta$ such that $\|\theta-\theta^*\|\leq \delta$
provided that $\mathbb E\left(|Y_\theta|^{2+\delta}\|X\|^{2+\delta}\right)<\infty$
and $\mathbb E\left(|Y_\theta|^{2+\delta}\|X\|^{2+\delta}\right)<\infty$.

\paragraph{Huber regression}
For the Huber regression, we have $f(\theta, z) = \rho_\lambda(y - x^\top\theta)$ where $\rho_\lambda(a) = a^2/2$ for $|a|\leq \lambda$
and $\rho_\lambda(a) = \lambda|a| - \lambda^2/2$ otherwise.
We then have
\[
\nabla f(\theta) = \mathbb E\nabla f(\theta, Z) = \mathbb E\rho_\lambda'(Y - X^\top\theta)X,
\]
\[
\nabla^2 f(\theta) = \mathbb E\nabla^2f(\theta, Z) = \mathbb E\left( \mathds 1(|Y - X^\top\theta|\leq \lambda)XX^\top \right).
\]
The only thing we need to check is the local strong convexity; 
the other parts follow directly from the discussion of linear regression. 
All we need to show is that for $\theta^*$, the minimizer of $f(\theta)$,
$\mathbb P(|Y - X^\top\theta^*| < \lambda) > 0$.
Towards that end, we assume that $X$ and $Y$ both have a open and connected support.
In addition, we augment $X$ with an extra element of 1 (the intercept).
Now suppose that $\mathbb P(|Y - X^\top\theta^*| < \lambda) = 0$.
Then since the domains are both connected, we know that
either $\mathbb P(Y - X^\top\theta^* > \lambda) = 1$ or $\mathbb P(Y - X^\top\theta^* < -\lambda) = 1$.
Without loss of generality, we assume that $\mathbb P(Y - X^\top\theta^* > \lambda) = 1$.
Then it follows that
\begin{align*}
0 = \nabla f(\theta^*) = \mathbb E\rho_\lambda'(Y - X^\top \theta^\ast) X = \lambda \mathbb E X,
\end{align*}
which leads to contradiction with the augmentation of the intercept. 
Therefore, with the other requirements for the linear regression case such as $X$ being in a generic position, 
and $\mathbb E\left(|Y|^{2+\delta}\|X\|^{2+\delta}\right)<\infty$ and $\mathbb E\|X\|^{4+2\delta}<\infty$,
Huber regression is shown to satisfy the two assumptions.

\vskip 0.2in
\bibliography{ref}

\end{document}